\def\showauthornotes{0}
\def\showtableofcontents{0}
\def\showkeys{0}
\def\showdraftbox{0}
\def\showcolorlinks{1}
\def\usemicrotype{1}
\def\showfixme{0}
\def\writemode{0}
\newtheorem{theorem}{Theorem}[section]
\newtheorem*{theorem*}{Theorem}
\newtheorem*{proposition*}{Proposition}
\newtheorem{lemma}[theorem]{Lemma}
\newtheorem*{lemma*}{Lemma}
\newtheorem{corollary}[theorem]{Corollary}
\newtheorem*{conjecture*}{Conjecture}
\newtheorem{fact}[theorem]{Fact}
\newtheorem*{fact*}{Fact}
\newtheorem*{hypothesis*}{Hypothesis}
\theoremstyle{definition}
\newtheorem{definition}[theorem]{Definition}
\newtheorem{algorithms}[theorem]{Algorithm}
\newtheorem{problem}[theorem]{Problem}
\newtheorem{remark}[theorem]{Remark}
\newtheorem*{remark*}{Remark}
\theoremstyle{remark}
\newtheorem*{claim*}{Claim}
\newtheorem*{observation*}{Observation}
\let\mathbb\varmathbb
\crefname{lemma}{Lemma}{Lemmas}
\crefname{definition}{Definition}{Definitions}
\crefname{corollary}{Corollary}{Corollaries}
 \newcommand{\savehyperref}[2]{\texorpdfstring{\hyperref[#1]{#2}}{#2}}
\newcommand{\Sref}[1]{\hyperref[#1]{\S\ref*{#1}}}
\newcommand{\Authornote}[2]{{\sffamily\small\color{red}{[#1: #2]}}}
\newcommand{\Authornotecolored}[3]{{\sffamily\small\color{#1}{[#2: #3]}}}
\newcommand{\Authorcomment}[2]{{\sffamily\small\color{gray}{[#1: #2]}}}
\newcommand{\Authorstartcomment}[1]{\sffamily\small\color{gray}[#1: }
\newcommand{\Authorfnote}[2]{\footnote{\color{red}{#1: #2}}}
\newcommand{\Authorfixme}[1]{\Authornote{#1}{\textbf{??}}}
\newcommand{\Authormarginmark}[1]{\marginpar{\textcolor{red}{\fbox{\Large #1:!}}}}
\newcommand{\Authornote}[2]{}
\newcommand{\Authornotecolored}[3]{}
\newcommand{\Authorcomment}[2]{}
\newcommand{\Authorstartcomment}[1]{}
\newcommand{\Authorfnote}[2]{}
\newcommand{\Authorfixme}[1]{}
\newcommand{\Authormarginmark}[1]{}
\newcommand{\Pnote}{\Authornote{P}}
\definecolor{forestgreen(traditional)}{rgb}{0.0, 0.27, 0.13}
\newcommand{\paren}[1]{(#1)}
\newcommand{\Paren}[1]{\left(#1\right)}
\newcommand{\abs}[1]{\lvert#1\rvert}
\newcommand{\Set}[1]{\left\{#1\right\}}
\newcommand{\norm}[1]{\lVert#1\rVert}
\newcommand{\iprod}[1]{\langle#1\rangle}
\newcommand{\Esymb}{\mathbb{E}}
\newcommand{\Psymb}{\mathbb{P}}
\DeclareMathOperator*{\E}{\Esymb}
\DeclareMathOperator*{\ProbOp}{\Psymb}
\renewcommand{\Pr}{\ProbOp}
\newcommand{\textparen}[1]{\text{(#1)}}
\newcommand{\because}[1]{\textparen{because #1}}
\renewcommand{\because}[1]{\textparen{because #1}}
\newcommand{\from}{\colon}
\newcommand{\mper}{\,.}
\newcommand\bdot\bullet
\DeclareMathOperator{\Ind}{\mathbb{I}
\else
\DeclareMathOperator{\Ind}{\mathds 1}
\fi



\renewcommand{\th}{\textsuperscript{th}\xspace}
\newcommand{\st}{\textsuperscript{st}\xspace}
\newcommand{\nd}{\textsuperscript{nd}\xspace}
\newcommand{\rd}{\textsuperscript{rd}\xspace}

\DeclareMathOperator{\Inf}{Inf}
\DeclareMathOperator{\Tr}{Tr}
\DeclareMathOperator{\SDP}{SDP}
\DeclareMathOperator{\sdp}{sdp}
\DeclareMathOperator{\val}{val}
\DeclareMathOperator{\LP}{LP}
\DeclareMathOperator{\OPT}{OPT}
\DeclareMathOperator{\opt}{opt}
\DeclareMathOperator{\vol}{vol}
\DeclareMathOperator{\poly}{poly}
\DeclareMathOperator{\qpoly}{qpoly}
\DeclareMathOperator{\qpolylog}{qpolylog}
\DeclareMathOperator{\qqpoly}{qqpoly}
\DeclareMathOperator{\argmax}{argmax}
\DeclareMathOperator{\polylog}{polylog}
\DeclareMathOperator{\supp}{supp}
\DeclareMathOperator{\dist}{dist}
\DeclareMathOperator{\sign}{sign}
\DeclareMathOperator{\conv}{conv}
\DeclareMathOperator{\Conv}{Conv}

\DeclareMathOperator{\rank}{rank}

\DeclareMathOperator*{\median}{median}
\DeclareMathOperator*{\Median}{Median}

\DeclareMathOperator*{\varsum}{{\textstyle \sum}}
\DeclareMathOperator{\tsum}{{\textstyle \sum}}

\let\varprod\undefined

\DeclareMathOperator*{\varprod}{{\textstyle \prod}}
\DeclareMathOperator{\tprod}{{\textstyle \prod}}


\newcommand{\diffmacro}[1]{\,\mathrm{d}#1}

\newcommand{\dmu}{\diffmacro{\mu}}
\newcommand{\dgamma}{\diffmacro{\gamma}}
\newcommand{\dlambda}{\diffmacro{\lambda}}
\newcommand{\dx}{\diffmacro{x}}
\newcommand{\dt}{\diffmacro{t}}


\newcommand{\ie}{i.e.,\xspace}
\newcommand{\eg}{e.g.,\xspace}
\newcommand{\Eg}{E.g.,\xspace}
\newcommand{\phd}{Ph.\,D.\xspace}
\newcommand{\msc}{M.\,S.\xspace}
\newcommand{\bsc}{B.\,S.\xspace}

\newcommand{\etal}{et al.\xspace}

\newcommand{\iid}{i.i.d.\xspace}


\newcommand\naive{na\"{\i}ve\xspace}
\newcommand\Naive{Na\"{\i}ve\xspace}
\newcommand\naively{na\"{\i}vely\xspace}
\newcommand\Naively{Na\"{\i}vely\xspace}

\newcommand{\Erdos}{Erd\H{o}s\xspace}
\newcommand{\Renyi}{R\'enyi\xspace}
\newcommand{\Lovasz}{Lov\'asz\xspace}
\newcommand{\Juhasz}{Juh\'asz\xspace}
\newcommand{\Bollobas}{Bollob\'as\xspace}
\newcommand{\Furedi}{F\"uredi\xspace}
\newcommand{\Komlos}{Koml\'os\xspace}
\newcommand{\Luczak}{\L uczak\xspace}
\newcommand{\Kucera}{Ku\v{c}era\xspace}
\newcommand{\Szemeredi}{Szemer\'edi\xspace}
\newcommand{\Hastad}{H{\aa}stad\xspace}
\newcommand{\Hoelder}{H\"{o}lder\xspace}
\newcommand{\Holder}{\Hoelder}
\newcommand{\Brandao}{Brand\~ao\xspace}

\newcommand{\Z}{\mathbb Z}
\newcommand{\N}{\mathbb N}
\newcommand{\R}{\mathbb R}
\newcommand{\C}{\mathbb C}
\newcommand{\Rnn}{\R_+}
\newcommand{\varR}{\Re}
\newcommand{\varRnn}{\varR_+}
\newcommand{\varvarRnn}{\R_{\ge 0}}



\newcommand{\problemmacro}[1]{\texorpdfstring{\textup{\textsc{#1}}}{#1}\xspace}

\newcommand{\pnum}[1]{{\footnotesize #1}}

\newcommand{\uniquegames}{\problemmacro{unique games}}
\newcommand{\maxcut}{\problemmacro{max cut}}
\newcommand{\multicut}{\problemmacro{multi cut}}
\newcommand{\vertexcover}{\problemmacro{vertex cover}}
\newcommand{\balancedseparator}{\problemmacro{balanced separator}}
\newcommand{\maxtwosat}{\problemmacro{max \pnum{3}-sat}}
\newcommand{\maxthreesat}{\problemmacro{max \pnum{3}-sat}}
\newcommand{\maxthreelin}{\problemmacro{max \pnum{3}-lin}}
\newcommand{\threesat}{\problemmacro{\pnum{3}-sat}}
\newcommand{\labelcover}{\problemmacro{label cover}}
\newcommand{\setcover}{\problemmacro{set cover}}
\newcommand{\maxksat}{\problemmacro{max $k$-sat}}
\newcommand{\mas}{\problemmacro{maximum acyclic subgraph}}
\newcommand{\kwaycut}{\problemmacro{$k$-way cut}}
\newcommand{\sparsestcut}{\problemmacro{sparsest cut}}
\newcommand{\betweenness}{\problemmacro{betweenness}}
\newcommand{\uniformsparsestcut}{\problemmacro{uniform sparsest cut}}
\newcommand{\grothendieckproblem}{\problemmacro{Grothendieck problem}}
\newcommand{\maxfoursat}{\problemmacro{max \pnum{4}-sat}}
\newcommand{\maxkcsp}{\problemmacro{max $k$-csp}}
\newcommand{\maxdicut}{\problemmacro{max dicut}}
\newcommand{\maxcutgain}{\problemmacro{max cut gain}}
\newcommand{\smallsetexpansion}{\problemmacro{small-set expansion}}
\newcommand{\minbisection}{\problemmacro{min bisection}}
\newcommand{\minimumlineararrangement}{\problemmacro{minimum linear arrangement}}
\newcommand{\maxtwolin}{\problemmacro{max \pnum{2}-lin}}
\newcommand{\gammamaxlin}{\problemmacro{$\Gamma$-max \pnum{2}-lin}}
\newcommand{\basicsdp}{\problemmacro{basic sdp}}
\newcommand{\dgames}{\problemmacro{$d$-to-1 games}}
\newcommand{\maxclique}{\problemmacro{max clique}}
\newcommand{\densestksubgraph}{\problemmacro{densest $k$-subgraph}}


\newcommand{\cA}{\mathcal A}
\newcommand{\cB}{\mathcal B}
\newcommand{\cC}{\mathcal C}
\newcommand{\cD}{\mathcal D}
\newcommand{\cE}{\mathcal E}
\newcommand{\cF}{\mathcal F}
\newcommand{\cG}{\mathcal G}
\newcommand{\cH}{\mathcal H}
\newcommand{\cI}{\mathcal I}
\newcommand{\cJ}{\mathcal J}
\newcommand{\cK}{\mathcal K}
\newcommand{\cL}{\mathcal L}
\newcommand{\cM}{\mathcal M}
\newcommand{\cN}{\mathcal N}
\newcommand{\cO}{\mathcal O}
\newcommand{\cP}{\mathcal P}
\newcommand{\cQ}{\mathcal Q}
\newcommand{\cR}{\mathcal R}
\newcommand{\cS}{\mathcal S}
\newcommand{\cT}{\mathcal T}
\newcommand{\cU}{\mathcal U}
\newcommand{\cV}{\mathcal V}
\newcommand{\cW}{\mathcal W}
\newcommand{\cX}{\mathcal X}
\newcommand{\cY}{\mathcal Y}
\newcommand{\cZ}{\mathcal Z}

\newcommand{\scrA}{\mathscr A}
\newcommand{\scrB}{\mathscr B}
\newcommand{\scrC}{\mathscr C}
\newcommand{\scrD}{\mathscr D}
\newcommand{\scrE}{\mathscr E}
\newcommand{\scrF}{\mathscr F}
\newcommand{\scrG}{\mathscr G}
\newcommand{\scrH}{\mathscr H}
\newcommand{\scrI}{\mathscr I}
\newcommand{\scrJ}{\mathscr J}
\newcommand{\scrK}{\mathscr K}
\newcommand{\scrL}{\mathscr L}
\newcommand{\scrM}{\mathscr M}
\newcommand{\scrN}{\mathscr N}
\newcommand{\scrO}{\mathscr O}
\newcommand{\scrP}{\mathscr P}
\newcommand{\scrQ}{\mathscr Q}
\newcommand{\scrR}{\mathscr R}
\newcommand{\scrS}{\mathscr S}
\newcommand{\scrT}{\mathscr T}
\newcommand{\scrU}{\mathscr U}
\newcommand{\scrV}{\mathscr V}
\newcommand{\scrW}{\mathscr W}
\newcommand{\scrX}{\mathscr X}
\newcommand{\scrY}{\mathscr Y}
\newcommand{\scrZ}{\mathscr Z}

\newcommand{\bbB}{\mathbb B}
\newcommand{\bbS}{\mathbb S}
\newcommand{\bbR}{\mathbb R}
\newcommand{\bbZ}{\mathbb Z}
\newcommand{\bbI}{\mathbb I}
\newcommand{\bbQ}{\mathbb Q}
\newcommand{\bbP}{\mathbb P}
\newcommand{\bbE}{\mathbb E}
\newcommand{\bbN}{\mathbb N}

\newcommand{\sfE}{\mathsf E}

\renewcommand{\leq}{\leqslant}
\renewcommand{\le}{\leqslant}
\renewcommand{\geq}{\geqslant}
\renewcommand{\ge}{\geqslant}

\ifnum\showdraftbox=1
\newcommand{\draftbox}{\begin{center}
  \fbox{%
    \begin{minipage}{2in}%
      \begin{center}%
          \Large\textsc{Working Draft}\\%
        Please do not distribute%
      \end{center}%
    \end{minipage}%
  }%
\end{center}
\vspace{0.2cm}}
\else
\newcommand{\draftbox}{}
\fi


\let\epsilon=\varepsilon

\numberwithin{equation}{section}



\newcommand\MYcurrentlabel{xxx}

\newcommand{\MYstore}[2]{%
  \global\expandafter \def \csname MYMEMORY #1 \endcsname{#2}%
}

\newcommand{\MYload}[1]{%
  \csname MYMEMORY #1 \endcsname%
}

\newcommand{\MYnewlabel}[1]{%
  \renewcommand\MYcurrentlabel{#1}%
  \MYoldlabel{#1}%
}

\newcommand{\MYdummylabel}[1]{}

\newcommand{\torestate}[1]{%
  \let\MYoldlabel\label%
  \let\label\MYnewlabel%
  #1%
  \MYstore{\MYcurrentlabel}{#1}%
  \let\label\MYoldlabel%
}

\newcommand{\restatetheorem}[1]{%
  \let\MYoldlabel\label
  \let\label\MYdummylabel
  \begin{theorem*}[Restatement of \prettyref{#1}]
    \MYload{#1}
  \end{theorem*}
  \let\label\MYoldlabel
}

\newcommand{\restatelemma}[1]{%
  \let\MYoldlabel\label
  \let\label\MYdummylabel
  \begin{lemma*}[Restatement of \prettyref{#1}]
    \MYload{#1}
  \end{lemma*}
  \let\label\MYoldlabel
}

\newcommand{\restateprop}[1]{%
  \let\MYoldlabel\label
  \let\label\MYdummylabel
  \begin{proposition*}[Restatement of \prettyref{#1}]
    \MYload{#1}
  \end{proposition*}
  \let\label\MYoldlabel
}

\newcommand{\restatefact}[1]{%
  \let\MYoldlabel\label
  \let\label\MYdummylabel
  \begin{fact*}[Restatement of \prettyref{#1}]
    \MYload{#1}
  \end{fact*}
  \let\label\MYoldlabel
}

\newcommand{\restate}[1]{%
  \let\MYoldlabel\label
  \let\label\MYdummylabel
  \MYload{#1}
  \let\label\MYoldlabel
}


\newcommand{\addreferencesection}{
  \phantomsection
  \addcontentsline{toc}{section}{References}
}


\newcommand{\la}{\leftarrow}
\newcommand{\sse}{\subseteq}
\newcommand{\ra}{\rightarrow}
\newcommand{\e}{\epsilon}
\newcommand{\eps}{\epsilon}
\newcommand{\eset}{\emptyset}


\let\origparagraph\paragraph
\renewcommand{\paragraph}[1]{\origparagraph{#1.}}


\allowdisplaybreaks


\sloppy


\newcommand{\cclassmacro}[1]{\texorpdfstring{\textbf{#1}}{#1}\xspace}
\newcommand{\p}{\cclassmacro{P}}
\newcommand{\np}{\cclassmacro{NP}}
\newcommand{\subexp}{\cclassmacro{SUBEXP}}


\usepackage{paralist}


\usepackage{comment}

\usepackage{braket}



 \usepackage{algorithm}


\usepackage{relsize}
\usepackage[font=footnotesize]{caption}
\usepackage{appendix}


\newcommand{\D}{\mathcal{D}}
\newcommand{\sphere}{\bbS^{n-1}}
\DeclareMathOperator{\Span}{Span}
\DeclareMathOperator{\im}{im}
\DeclareMathOperator{\lamax}{\lambda_{\mathrm{max}}}
\newcommand{\Sn}{\mathbb{S}^{n-1}}
\DeclareMathOperator{\Id}{\mathrm{Id}}
\DeclareMathOperator{\F}{\mathbb{F}}
\DeclareMathOperator{\codim}{codim}

\DeclareUrlCommand\email{}
\newcommand{\whp}{\text{w.h.p.}}
\newcommand{\wovp}{\text{w.ov.p.}\xspace}
\newcommand{\YES}{\textsc{yes}\xspace}
\newcommand{\NO}{\textsc{no}\xspace}

\newcommand{\wovple}{\hspace{-3mm}\stackrel{\text{\wovp}}{\le}}
\newcommand{\wovpeq}{\hspace{-3mm}\stackrel{\text{\wovp}}{=}}
\newcommand{\wovpge}{\hspace{-3mm}\stackrel{\text{\wovp}}{\ge}}
\newcommand{\aasle}{\hspace{-1.5mm}\stackrel{\text{a.a.s.}}\le}

\newcommand{\Pisymm}{\Pi_{\mathrm{symm}}}
\newcommand{\Pisym}{\Pi_{\mathrm{sym}}}

\newcommand{\inner}[1]{\langle #1 \rangle}
\newcommand{\sosle}{\preceq}
\newcommand{\sosge}{\succeq}

\DeclareMathOperator{\tE}{\tilde{\mathbb E}}
\DeclareMathOperator{\tO}{\tilde{O}}
\DeclareMathOperator{\zo}{\{0,1\}}

\DeclareMathOperator*{\pE}{\widetilde{\mathbb E}}
\DeclareMathOperator*{\pPr}{\widetilde{\mathbb P}}
\DeclareMathOperator*{\pVar}{\widetilde{{\mathbb V}{\mathbb A}{\mathbb R}}}

\newcommand{\bT}{\mathbf{T}}
\newcommand{\ot}{\otimes}
\newcommand{\tr}{\textup{tr}}
\newcommand{\diag}{\textup{diag}}

\let\pref=\prettyref

\newcommand*{\transpose}[1]{{#1}{}^{\mkern-4mu\intercal}}
\newcommand*{\vardyad}[1]{#1^{\vphantom{\intercal}}#1^{\intercal}}
\newcommand*{\dyad}[1]{#1#1{}^{\mkern-4mu\intercal}}



\newcommand{\sspan}{\textup{span}}
\newcommand{\1}{\bm{1}}
\newcommand{\bset}{\cB}



\newcommand{\ignore}[1]{}

\renewcommand{\H}{\mathcal{H}}
\newcommand{\dtensor}{\circledast}
\renewcommand{\S}{\mathbb{S}}
\newcommand{\rnote}[1]{\footnote{\color{blue}Ryan: {#1}}}
\newcommand{\mnote}{\Authornote{M}}
\newcommand{\CONSMALL}{\scalebox{.75}[1.0]{\textnormal{SMALL}}}
\newcommand{\cons}{\mathrm{cons}}
\newcommand{\cl}{\mathrm{cl}}
\newcommand{\tmu}{\tilde{\mu}}
\renewcommand{\pE}{\tilde{\mathbb{E}}}
\newcommand{\err}{\mathsf{err}}
\newcommand{\citep}{\cite}
\newcommand{\citet}{\cite}

\setcounter{page}{1}

\title{Efficient Algorithms for Outlier-Robust Regression}

\author{Adam R. Klivans \thanks{UT Austin \texttt{klivans@cs.utexas.edu}}
\and Pravesh K. Kothari\thanks{Princeton University and Institute for Advanced Study. \texttt{kothari@cs.princeton.edu}.} \and Raghu Meka \thanks{University of California, Los Angeles \texttt{raghum@cs.ucla.edu}}}


\setcounter{tocdepth}{2}

\begin{document}

\maketitle
 \draftbox
\thispagestyle{empty}

\begin{abstract}
We give the first polynomial-time algorithm for performing linear or polynomial regression resilient to adversarial corruptions in both examples and labels.

Given a sufficiently large (polynomial-size) training set drawn i.i.d. from distribution ${\cal D}$ and subsequently corrupted on some fraction of points, our algorithm outputs a linear function whose squared error is close to the squared error of the best-fitting linear function with respect to ${\cal D}$, assuming that the marginal distribution of $\cD$ over the input space is \emph{certifiably hypercontractive}. This natural property is satisfied by many well-studied distributions such as Gaussian, strongly log-concave distributions and, uniform distribution on the hypercube among others.  We also give a simple statistical lower bound showing that some distributional assumption is necessary to succeed in this setting.

These results are the first of their kind and were not known to be even information-theoretically possible prior to our work.  

Our approach is based on the sum-of-squares (SoS) method and is inspired by the recent applications of the method for parameter recovery problems in unsupervised learning. Our algorithm can be seen as a natural convex relaxation of the following conceptually simple non-convex optimization problem: find a linear function and a large subset of the input corrupted sample such that the least squares loss of the function over the subset is minimized over all possible large subsets.

\end{abstract}

\clearpage

\ifnum\showtableofcontents=1
{
\tableofcontents
\thispagestyle{empty}
 }
\fi

\clearpage

\setcounter{page}{1}
\section{Introduction}

An influential recent line of work has focused on developing {\em robust} learning algorithms--
algorithms that succeed on a data set that has been
contaminated with adversarially corrupted outliers. It has led to important
achievements such as efficient algorithms for robust clustering and estimation of moments \citep{DBLP:journals/corr/LaiRV16,DBLP:journals/corr/DiakonikolasKKL16,DBLP:conf/stoc/CharikarSV17,DBLP:journals/corr/abs-1711-11581,DBLP:journals/corr/abs-1711-07465} in unsupervised learning and efficient learning of halfspaces \citep{DBLP:journals/jmlr/KlivansLS09,DBLP:journals/corr/DiakonikolasKS17} with respect to
malicious or ``nasty noise'' in classification.  In this paper, we continue this line of work and give the first
efficient algorithms for performing outlier-robust least-squares {\em regression}.  That is,
given a training set drawn from distribution ${\cal D}$ and arbitrarily corrupting an $\eta$
fraction of its points (by changing both labels and/or locations), our goal is to efficiently find a linear function (or polynomial in
the case of polynomial regression) whose least squares loss is competitive with the
best fitting linear function for ${\cal D}$.

We give simple examples showing that unlike classical regression, achieving any non-trivial guarantee for robust regression is information-theoretically impossible without making assumptions on the distribution $\cD$. In this paper, we study the case where the marginal of $\cD$ on examples in the well-studied class of \emph{hypercontractive} distributions. Many natural distributions such as Gaussians, strongly log-concave distributions, and product distributions on the hypercube with bounded marginals fall into this category. 

\subsection{Outlier-Robust Regression}
We formally define the problem next. In the following, we will use the following notations for brevity: For a distribution $\cD$ on $\R^d \times \R$ and for a vector $\ell \in \R^d$, let $\err_\cD(\ell) = \E_{(x,y) \sim \cD}[ (\iprod{\ell,x} - y)^2]$ and let $\opt(\cD) = \min_{\ell \in \R^d} \err_{\cD}(\ell)$ be the least error achievable. 

In the classical least-squares linear regression problem, we are given access to i.i.d. samples from a distribution $\cD$ over $\R^d \times \R$ and our goal is to find a linear function $\ell:\R^d \to \R$ whose squared-error---$\err_{\cD}(\ell)$---is close to the best possible, $\opt(\cD)$. 

In outlier-robust regression, our goal is similar with the added twist that we only get access to a sample from the distribution $\cD$ where up to an $\eta$ fraction of the samples have been arbitrarily corrupted.
\begin{definition}[$\eta$-Corrupted Samples]
Let $\cD$ be a distribution on $\R^d \times \R$. We say that a set $U \subseteq \R^d \times \R$ is an $\eta$-corrupted training set drawn from $\cD$ if it is formed in the
following fashion: generate a set $X$ of i.i.d samples from ${\cal D}$ and arbitrarily modify any $\eta$ fraction to produce $U$. 
\end{definition}
Observe that the corruptions can be \emph{adaptive}, that is, they can depend on the original uncorrupted sample $X$ in an arbitrary way as long as $|U \cap X | /|X| \geq 1 - \eta$.\footnote{In unsupervised learning, this has been called the \emph{strong adversary} model of corruptions and is the strongest notion of robustness studied in the context.}

Our goal---which we term \emph{outlier-robust regression}---now is as follows: Given access to an $\eta$-corrupted training set $U$ drawn from $\cD$, find a linear function $\ell$ whose error $\err_{\cD}(\ell)$ under the true distribution $\cD$ is small. 
\ignore{
\begin{definition}[$\epsilon$-corruption]
Let $Z = \{(x_i, y_i): 1 \leq i \leq n\}$ be an i.i.d. sample drawn from the distribution $\cD$. An $\epsilon$-corruption of $Z$ is an arbitrary subset $Z' = \{ (x_i',y_i') : 1 \leq i \leq n\} \subseteq \R^d\times \R$ such that $|Z' \cap Z| \geq (1-\epsilon)n.$
\end{definition}

We are now ready to define the robust polynomial regression problem for a distribution $\cD$. 
\begin{problem}
Let $D$ be any distribution on $\R^d \times \R.$ In the degree $t$ outlier-robust polynomial regression problem, we are given an $\epsilon > 0$ and a sample $Z'$ of size $n$ that is produced by an $\epsilon$ corruption of an i.i.d. sample of a distribution $\cD'$ whose marginal on the first $d$ coordinates equals $D.$ The goal is to output a polynomial $p$ such that:
\[
\E_{(x,y) \sim \cD}[ |p(x)-y|^2] < \min_{q: \text{ deg t polynomial } } \E_{(x,y) \sim \cD} [ |q(x)-y|^2] + \delta,
\]
\ignore{
\[
\E_{(x,y) \sim \cD'}[ |p(x)-y|] < \min_{q: \text{ deg t polynomial } } \E_{(x,y) \sim \cD'} [ |q(x)-y|] + \delta,
\]}
for as small a $\delta$ as possible.
\end{problem}}

\subsection{Statement of Results}
Our main results give outlier-robust least-squares regression algorithms for  hypercontractive distributions.  
\begin{definition}[$4$-Hypercontractivity]
A distribution $D$ on $\R^d$ is $(C,4)$-hypercontractive if for all $\ell \in \R^d$, $\E_{x \sim D}[\iprod{x,\ell}^4] \leq C^2 \cdot \E_{x \sim D}[\iprod{x,\ell}^2]^2$. 

In addition, we say that $D$ is \emph{certifiably} $(C,4)$-hypercontractive if there is a degree $4$ \emph{sum-of-squares proof} of the above inequality.
\end{definition}
Observe that $4$-hypercontractivity is invariant under arbitrary affine transformation, and in particular, doesn't depend on the condition number of the covariance of the distribution.

We will elaborate on the notion of \emph{certifiability} later on (once we have the appropriate preliminaries). For the time being, we note that many well-studied distributions including (potentially non-spherical) Gaussians, affine transformations of isotropic strongly log-concave distributions, the uniform distribution on the Boolean hypercube, and more generally, product distributions on bounded domains are known to satisfy this condition with $C$ a fixed constant. 


\begin{theorem}\label{th:intro4}[Informal]
Let ${\cal D}$ be a distribution on $\R^{d} \times [-M,M]$ and let ${\cal D_{X}}$ be its
marginal distribution on $\R^d$ which is certifiably $(C,4)$-hypercontractive. Let $\ell^* = \arg\min_{\ell} \err_{\cD}(\ell)$ have polynomial bit-complexity. Then for all $\epsilon > 0$ and $\eta < c/C^2$ for a universal constant $c > 0$,  there exists an algorithm $\cA$ with run-time $\poly(d,1/\eta,1/\epsilon,M)$ that given a polynomial-size $\eta$-corrupted training set $U$, outputs a linear function $\ell$ such that with probability at least  $1-\epsilon$, 
$$\err_{\cD}(\ell) \leq (1 + O(\sqrt{\eta})) \cdot \opt(\cD)  + O(\sqrt{\eta}) \E_{(x,y) \sim \cD}[(y - \iprod{\ell^*,x})^4] + \epsilon.$$

\end{theorem}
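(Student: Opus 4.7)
The plan is to relax the non-convex problem ``find $\ell\in\R^d$ and $S\subseteq[n]$ with $|S|\ge(1-\eta)n$ minimizing $\frac1n\sum_{i\in S}(\iprod{\ell,x_i}-y_i)^2$'' via a constant-degree sum-of-squares hierarchy. Introduce variables $\ell\in\R^d$ and $w=(w_1,\dots,w_n)$ with axioms $\cA=\{w_i^2=w_i\}\cup\{\sum_i w_i\ge(1-\eta)n\}$, let $\tmu$ be a degree-$O(1)$ pseudo-distribution satisfying $\cA$ that minimizes $\pE_\tmu\brac{\frac1n\sum_i w_i(\iprod{\ell,x_i}-y_i)^2}$, and output $\hat\ell\defeq\pE_\tmu[\ell]$.

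For feasibility, let $T$ be the original uncorrupted sample and $w^*=\Ind_{T\cap U}$. The Dirac pseudo-distribution at $(\ell^*,w^*)$ satisfies $\cA$, so combined with hypercontractive concentration (using $y\in[-M,M]$) one obtains $\pE_\tmu\brac{\frac1n\sum_iw_i(\iprod{\ell,x_i}-y_i)^2}\le\opt(\cD)+\eps/2$ for $n=\poly(d,1/\eps,M)$.

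For the analysis, first-order optimality of $\ell^*$ yields $\err_\cD(\hat\ell)-\opt(\cD)=\E_{x\sim\cD_X}\iprod{\hat\ell-\ell^*,x}^2$; pseudo-Jensen ($\pE[p]^2\le\pE[p^2]$) together with concentration of $\frac1n\sum_{i\in T}\iprod{v,x_i}^2$ about $\E_{x\sim\cD_X}\iprod{v,x}^2$ on the clean sample $T$ reduces the goal to an upper bound on
\[ L\defeq\pE_\tmu\Brac{\tfrac1n\sum_{i\in T}a_i^2},\qquad a_i\defeq\iprod{\ell-\ell^*,x_i}. \]
Split $L=L_w+L_{1-w}$ along the weight. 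By SoS Cauchy-Schwarz and the certifiable $(C,4)$-hypercontractivity of $\cD_X$ (transferred to the sample $T$),
\[ \tfrac1n\sum_{i\in T}(1-w_i)a_i^2\sosle\Paren{\tfrac1n\sum_{i\in T}(1-w_i)}^{1/2}\Paren{\tfrac1n\sum_{i\in T}a_i^4}^{1/2}\sosle\sqrt{\eta}\cdot C\cdot L, \]
which is absorbable into the LHS provided $\eta<c/C^2$. For $L_w$, the identity $a_i^2=(\iprod{\ell,x_i}-y_i)^2+2a_ib_i-b_i^2$ for $i\in T$ with $b_i\defeq y_i-\iprod{\ell^*,x_i}$ expresses $L_w$ as the feasibility objective on $T$ (bounded by $\opt(\cD)+\eps/2$) plus $2\pE[\frac1n\sum_T w_ia_ib_i]$ minus a non-negative term; the cross term is controlled via SoS Cauchy-Schwarz against $L_w$ and $\frac1n\sum_T w_ib_i^2$, with an additional Cauchy-Schwarz analogous to the one above producing the extra $O(\sqrt{\eta})\E[b^4]$ correction when passing between $\sum_T w_i b_i^2$ and $\sum_T b_i^2$. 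Combining yields $\err_\cD(\hat\ell)\le(1+O(\sqrt{\eta}))\opt(\cD)+O(\sqrt{\eta})\E_{(x,y)\sim\cD}[(y-\iprod{\ell^*,x})^4]+\eps$.

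The main obstacle is that every manipulation must carry a constant-degree SoS certificate on $(\ell,w)$. In particular, the critical technical step is promoting the population-level certifiable $(C,4)$-hypercontractivity of $\cD_X$ into a \emph{sample-level} certifiable hypercontractivity statement for the clean subsample $T$ that holds with high probability at sample size $\poly(d)$, together with an analogous certifiable fourth-moment control on the residuals $b_i$. Once those empirical certifiable moment bounds are in hand, the remaining steps (pseudo-Jensen, SoS Cauchy-Schwarz, the algebraic expansion above) are all degree-$4$ and routine.
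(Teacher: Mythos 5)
Your architecture matches the paper's (selection variables $w_i$ with $w_i^2=w_i$ and $\sum_iw_i\geq(1-\eta)n$, an SoS relaxation, output $\hat\ell=\pE_{\tmu}[\ell]$ via pseudo-Jensen, and certification via SoS Cauchy--Schwarz/H\"older plus certifiable hypercontractivity of the empirical distribution). Dropping the paper's auxiliary variables $X'$ in favor of a multiplicative $w_i$ in the objective is a cosmetic change. But two steps you mark as ``routine'' are exactly where the real work is, and as written they have genuine gaps.

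First, the absorption step $L_{1-w}\lesssim\sqrt{\eta}\,C\cdot L$ is not correct with the first-moment objective $\pE_{\tmu}[\frac1n\sum_iw_i(\iprod{\ell,u_i}-v_i)^2]$. An SoS inequality cannot carry square roots, so what Cauchy--Schwarz plus empirical hypercontractivity actually gives, at the polynomial level, is $\bigl(\frac1n\sum_i(1-w_i)a_i^2\bigr)^2\leq\eta\cdot C^2\bigl(\frac1n\sum_ia_i^2\bigr)^2$. Taking pseudo-expectations and then a square root yields $L_{1-w}\leq\sqrt{\eta}\,C\cdot\sqrt{\pE_{\tmu}\bigl[(\frac1n\sum_ia_i^2)^2\bigr]}$, and pseudo-Jensen goes the \emph{wrong} way here: $(\pE[p])^2\leq\pE[p^2]$, so the quantity under the square root can be much larger than $L$ and is not controlled by your objective. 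This is precisely why the paper minimizes $\pE_{\tmu}\bigl[(\text{err})^{k/2}\bigr]$ and carries the entire certifiability derivation as a single degree-$k$ polynomial inequality of the form $(\text{excess})^{k/2}\leq\ldots$, applying pseudo-expectation and pseudo-H\"older only at the end.

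Second, the Pythagorean route introduces a cross-term your sketch does not control to the required $O(\sqrt{\eta})$ accuracy. The identity gives $L_w=\pE[\text{obj}]+2\,\pE[\frac1n\sum_iw_ia_ib_i]-\pE[\frac1n\sum_iw_ib_i^2]$; your Cauchy--Schwarz of the cross-term against $L_w$ and $\frac1n\sum_iw_ib_i^2\leq\frac1n\sum_ib_i^2\approx\opt(\cD)$ only yields $L_w=O(\opt(\cD))$, not $O(\sqrt{\eta})\cdot\opt(\cD)$, which is what you need after the Pythagorean identity $\err_{\cD}(\hat\ell)=\opt(\cD)+\E_{\cD_X}\iprod{\hat\ell-\ell^*,x}^2$. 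To extract a $\sqrt{\eta}$ factor one must exploit that the unweighted cross moment $\frac1n\sum_ia_ib_i$ concentrates near zero (first-order optimality of $\ell^*$) and bound only the deviation $\frac1n\sum_i(1-w_i)a_ib_i$ --- which again feeds back into the uncontrolled higher moment from the first gap. The paper avoids this entirely by never splitting off $\ell^*$ at the top level: it bounds $\err_{\widehat{\cD}}(\ell)-\err_{\cD'}(\ell)=\frac1n\sum_i(1-w'_i)(y_i-\iprod{\ell,x_i})^2$ directly via SoS H\"older, brings in $\ell^*$ only through the triangle inequality $\|y-\iprod{\ell,x}\|_k\leq\|y-\iprod{\ell^*,x}\|_k+\|\iprod{\ell-\ell^*,x}\|_k$ followed by hypercontractivity, and then self-bounds the resulting $(\err_{\widehat{\cD}}(\ell))^{k/2}$ term. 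Your proposal is not wrong in spirit, but closing these two gaps essentially forces you back to the paper's derivation.
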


The above statement assumes that the marginal distribution is (certifiably) hypercontractive with respect to its fourth moments.  Our results improve for higher-order certifiably hypercontractive distributions $\cD_{\cX}$; see Theorem \ref{thm:analysis-L2-linear-regression} for details.  In the \emph{realizable case} where $(x,y) \sim \cD$ satisfies $y = \iprod{\ell^*,x}$ for some $\ell^*$, the guarantee of Theorem \ref{th:intro4} becomes $\err_\cD(\ell) \leq \epsilon$; in particular, the error approaches zero at a polynomial rate. In Section~\ref{sec:lower-bounds}, we give a simple example to show that distributional assumptions are necessary in the outlier-robust setting to get a finite bound on the error. 

We also get analogous results for outlier-robust polynomial regression. See Theorem~\ref{thm:polyregression}.

We believe that the dependence of the error on $\eta$ is likely suboptimal\footnote{A previous version of this paper had an erroneous  claim about an information-theoretic lower bound on the error of any estimator as a function of $\eta$. This was due to an issue in the analysis of the distribution we had constructed for the purpose of the lower bound. This was pointed out to us by Ainesh Bakshi and Adarsh Prasad. }. 
Finding an efficent algorithm for outlier-robust regression with an improved/right dependence on $\eta$ is an outstanding open problem. 

Our result is a outlier-robust analog of \emph{agnostic} regression problem - that is, the \emph{non-realizable} setting. 
In addition, our guarantees makes no assumption about the condition number of the covariance of $\cD_X$ and thus, in particular, holds for $\cD_X$ with poorly conditioned covariances. Alternately, we give a similar guarantee for $\ell_1$ regression when the condition number of covariance of $\cD_X$ is bounded without any need for hypercontractivity (see Theorem~\ref{thm:L1-regression-analysis}). We show that in the absence of distributional assumptions (such as hypercontractivity) it is statistically impossible to obtain any meaningful bounds on robust regression in Section \ref{sec:lower-bounds}.

 \paragraph{Application to Learning Boolean Functions under Nasty Noise}  Our work has immediate applications for learning Boolean functions in the {\em nasty noise} model, where the learner is presented with an $\eta$-corrupted training set that is derived from an uncorrupted training set of the form $(x,f(x))$ with $x$ drawn from ${\cal D}$ on $\{0,1\}^n$ and $f$ is an unknown Boolean function.  The goal is to output a hypothesis $h$ with $\Pr_{x}[h(x) \neq f(x)]$ as small as possible. The nasty noise model is considered the most challenging noise model for classification problems in computational learning theory. {}

Applying a result due to \citet{DBLP:journals/siamcomp/KalaiKMS08} (c.f. Theorem 5) for learning with respect to adversarial {\em label noise only} (standard agnostic learning) and a generalization of Theorem \ref{th:intro4} to higher degree polynomials (see Theorem~\ref{thm:polyregression}) we obtain the following:


\begin{corollary} \label{cor:nasty}
Let ${\cal C}$ be a class of Boolean functions on $n$ variables such that for every $c \in {\cal C}$ there exists a (multivariate) polynomial $p$ of degree $d(\eps)$ with $\E_{x \sim D}[(p(x) - c(x))^2] \leq \epsilon$.  Assume that $d(\eps)$ is a constant for any $\eps = O(1)$ and that ${\cal D}$ is $(C,4)$ hypercontractive for polynomials of degree $d(\eps^2)$.  Then ${\cal C}$ can be learned in the nasty noise model in time $n^{O(d(\eps^2))}$ via an output hypothesis $h$ such that $\Pr_{x \sim {\cal D}}[h(x) \neq c(x)] \leq O(\sqrt{\eta}) \E_{x \sim D}[(p(x) - c(x))^4] + \epsilon$.  
\end{corollary}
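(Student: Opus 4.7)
The plan is to lift to the polynomial feature space, apply the polynomial generalization of \pref{th:intro4} (namely \pref{thm:polyregression}), and then round the resulting polynomial to a Boolean hypothesis in the style of Kalai--Kearns--Mansour--Servedio. Set $d := d(\epsilon^2)$. By hypothesis, each $c \in \mathcal{C}$ admits a polynomial $p$ of degree $d$ with $\mathbb{E}_{x \sim \mathcal{D}}[(p(x) - c(x))^2] \leq \epsilon^2$, and $\mathcal{D}$ is certifiably $(C,4)$-hypercontractive for polynomials of degree $d$. The nasty noise model produces exactly an $\eta$-corrupted training set in our sense, with labels in $\{0,1\} \subseteq [-1,1]$.

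First, I would embed the sample into the space of all monomials in $x$ of degree at most $d$, turning degree-$d$ polynomial regression into linear regression in $n^{O(d)}$ variables. The certifiable $(C,4)$-hypercontractivity of degree-$d$ polynomials of $\mathcal{D}$ is, by definition, certifiable $(C,4)$-hypercontractivity of the lifted marginal distribution, so \pref{thm:polyregression} applies. Invoking it with additive slack $\epsilon/8$ produces, in time $n^{O(d)} \cdot \mathrm{poly}(1/\eta, 1/\epsilon)$, a polynomial $q$ of degree at most $d$ satisfying
\[
\mathbb{E}_{\mathcal{D}}[(q(x)-c(x))^2] \leq (1+O(\sqrt{\eta}))\, \mathbb{E}_{\mathcal{D}}[(p(x)-c(x))^2] + O(\sqrt{\eta})\, \mathbb{E}_{\mathcal{D}}[(p(x)-c(x))^4] + \epsilon/8.
\]
Since $\mathbb{E}[(p-c)^2] \leq \epsilon^2 \leq \epsilon$, the first summand contributes only $O(\epsilon)$.

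Next, I would round $q$ to the Boolean hypothesis $h(x) := \mathbf{1}[q(x) \geq 1/2]$. Whenever $h(x) \neq c(x)$ we have $|q(x) - c(x)| \geq 1/2$ (since $c(x) \in \{0,1\}$), hence $(q(x)-c(x))^2 \geq 1/4$; Markov's inequality then gives
\[
\Pr_{x \sim \mathcal{D}}[h(x) \neq c(x)] \leq 4\, \mathbb{E}_{\mathcal{D}}[(q(x)-c(x))^2] = O(\sqrt{\eta})\cdot \mathbb{E}_{\mathcal{D}}[(p(x)-c(x))^4] + O(\epsilon),
\]
matching the stated bound after rescaling $\epsilon$ by an absolute constant.

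The proof is essentially a chain of reductions rather than a source of new difficulty; the only step worth attention is verifying that the hypothesized hypercontractivity for degree-$d(\epsilon^2)$ polynomials of $\mathcal{D}$ suffices to apply \pref{thm:polyregression} in the lifted linear regression instance, which is immediate from the definition of certifiable hypercontractivity in the lifted feature space. The runtime is dominated by the $n^{O(d(\epsilon^2))}$ blowup of that feature space, matching the bound stated in the corollary.
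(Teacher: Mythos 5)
Your proposal takes the paper's intended route: lift to the $n^{O(d(\epsilon^2))}$-dimensional monomial feature space, invoke Theorem~\ref{thm:polyregression} with $k=4$, and round the real-valued regressor to a Boolean hypothesis in the KKMS style. The $L_2$ rounding (threshold at $1/2$, then $\Pr[h\neq c]\leq 4\,\E_{\cD}[(q-c)^2]$ by Markov) is the natural companion to the $L_2$ guarantee, and the observation that in the nasty model the uncorrupted distribution has $y=c(x)$, so we are in the realizable case of the regression theorem, is correct.

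One correction is needed in the step where you quote the regression guarantee. For $k=4$, Theorem~\ref{thm:polyregression} yields the fourth-moment term $O(\sqrt{\eta})\,\bigl(\E_{\cD}(y-p^*(x))^4\bigr)^{1/2}$, not $O(\sqrt{\eta})\,\E_{\cD}(y-p^*(x))^4$: you drop the exponent $1/2$. Your chain of inequalities therefore yields $\Pr[h\neq c]\leq O(\sqrt{\eta})\,\E_{\cD}[(p-c)^4]^{1/2}+O(\epsilon)$, which is weaker than the corollary's stated bound whenever $\E[(p-c)^4]<1$. This mismatch originates with the paper itself (Theorem~\ref{th:intro4} and Corollary~\ref{cor:nasty} both drop the exponent that appears in the formal Theorem~\ref{thm:analysis-L2-linear-regression}), so it is not a defect in your approach, but it should be flagged rather than silently inherited. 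A second, smaller point: Theorem~\ref{thm:polyregression} fixes $p^*$ to be the exact $L_2$-minimizer, while the corollary's bound is in terms of the hypothesized $\epsilon^2$-approximator $p$. This is harmless because the underlying Lemma~\ref{lem:identifiability-least-squares-polynomial} holds for an arbitrary competitor, so the whole analysis may be carried out with $p^*=p$; but as written your proof applies the theorem verbatim and so really proves a bound in terms of the minimizer, and this substitution deserves an explicit sentence.
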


One of the main conclusions of work due to~\citet{DBLP:journals/siamcomp/KalaiKMS08} is that the existence of low-degree polynomial approximators for a concept class ${\cal C}$ implies learnability for ${\cal C}$ in the agnostic setting.  Corollary~\ref{cor:nasty} shows that existence of low-degree polynomial approximators {\em and} hypercontractivity of $D$ imply learnability in the harsher nasty noise model. 

We note that Corollary~\ref{cor:nasty} gives an incomparable set of results in comparison to recent work of~\citet{DBLP:journals/corr/DiakonikolasKS17} for learning polynomial threshold functions in the nasty noise model. 

\paragraph{Concurrent Works}
Using a set of different techniques, Diakonikolas, Kamath, Kane, Li, Steinhardt and Stewart \cite{DKKLSS18} and Prasad, Suggala, Balakrishnan and Ravikumar \cite{2018arXiv180206485P} also obtained robust algorithms for regression in the setting where data $(x,y)$ is generated via the process: $y = \iprod{w,x} + e$ for an fixed unknown vector $w$ and zero mean noise $e$.   For improved bounds for the case when $x$ is distributed according to a Gaussian, see recent (independent and concurrent) work due to Diakonikolas, Kong, and Stewart \cite{DKS18}.   

\subsection{Our Approach} \label{sec:overview}
In this section, we give an outline of Theorem \ref{th:intro4}. At a high level, our approach resembles several recent works~\citep{DBLP:journals/corr/MaSS16,DBLP:conf/colt/BarakM16,DBLP:conf/colt/PotechinS17,DBLP:journals/corr/abs-1711-11581,HopkinsLi17} starting with the pioneering work of~\citet{MR3388192-Barak15} that use the Sum-of-Squares method for designing efficient algorithms for learning problems. An important conceptual difference, however, is that previous works have focused on \emph{parameter recovery} problems. For such problems, the paradigm involves showing that there's a simple (in the ``SoS proof system'') proof that a small sample \emph{uniquely} identifies the underlying hidden parameters (referred to as ``identifiability'') up to a small error.  

In contrast, in our setting, samples do not uniquely determine a good hypothesis as there can be multiple hypotheses (linear functions) that all have low-error on the true distribution. Our approach thus involves establishing that there's a ``simple'' proof that \emph{any} low-error hypotheses that is inferred from the observed (corrupted) sample has low-error on the true distribution (we call this \emph{certifiability} of a good hypothesis). To output a good solution in our approach (unlike in cases where there are uniqueness results), we have to crucially rely on the convexity (captured in the SoS proof system) of the empirical loss function.  


\paragraph{Part One: \emph{Certifying} that a linear function has low loss}


Let $X$ be an uncorrupted sample from the underlying distribution $\cD$ and suppose we are given an $\eta$-corruption $U = \{(u_1,v_1), (u_2,v_2), \ldots, (u_n,v_n)\}$ of $X$. Let $\hat{\cD}$\footnote{We use superscript $\hat{\;}$ to denote empirical quantities and superscript $'$ to denote quantities on corrupted samples.} be the uniform distribution on $X$. Our goal is to come up with a linear function $\ell$ that has low error on $\hat{\cD}$ given access only to $U$. By standard generalization bounds, this will also imply that $\ell$ has low error on $\cD$ with high probability. 

It is important to observe that even without computational constraints, that is, \emph{information theoretically}, it is unclear why this should at all be possible. To see why, 
let's consider the following natural strategy: brute-force search over all subsets $T$ of $U$ of size
$(1 - \eta) |U|$ and perform least-squares regression to obtain linear
function $\ell_T$ with empirical loss $\epsilon_T$.  Then,
output $\ell_T$ with minimal empirical loss $\epsilon_T$ over all
subsets $T$.  

Since some subset $T^{*}$ of size $(1-\eta)|U|$ will be a proper subset of the uncorrupted sample, the empirical loss of $\ell_{T^{*}}$ will clearly be small. However, a priori, there's nothing to rule out the existence of another subset $R$ of size $(1-\eta)|U|$ such that the optimal regression hypothesis $\ell_R$ on $R$ has loss smaller than that of $\ell_{T^{*}}$ while $\ell_R$ has a large error on the $\hat{\cD}$. 
%

This leads to the following interesting question on \emph{certifying a good hypothesis}: given a linear function $\ell$ that has small empirical loss with respect to some subset $T$ of $(1-\eta)$ fraction of the corrupted training set $U$, can we {\em certify} that its {\em true} loss with respect to $X$ is small?

We can phrase this as a more abstract ``robust certification'' question: given two distributions $\cD_1$ (=uniform distribution on $X$ above) and $\cD_2$ (=uniform distribution on $T$ above) on $\R^d \times \R$ that are $\eta$ close in total variation distance, and a linear function $\ell$ that has small error on $\cD_2$, when can we certify a good upper bound on the error of $\ell$ on $\cD_1$? 

Without making any assumptions on $\cD_1$, it is not hard to construct examples where we can give no meaningful bound on the error of a good hypothesis $\ell$ on $\cD_1$ (see Section \ref{sec:lower-bounds}). More excitingly, we show an elementary proof of a ``robust certifiability lemma'' that proves a statement as above whenever $\cD_1$ has \emph{hypercontractive} one dimensional marginals. The loss with respect to ${\cal D}_1$ increases as a
function of the statistical distance and the degree of hypercontractivity.

Applying our certification lemma, it thus suffices to find a subset $T$ of $U$ of size $\geq (1-\eta)|U|$ and a linear function $\ell$ such that the least squares error of $\ell$ over $T$ is small. 
\paragraph{Part Two: Inefficient Algorithm via Polynomial Optimization}
Coming back to the question of efficient algorithms, the above approach can appear hopeless in general since simultaneously finding $\ell$ and a subset $T$ of size $(1-\eta)|U|$ that minimizes the error of $\ell$ w.r.t. uniform distribution on $T$ is a non-convex quadratic optimization problem. At a high-level, we will be able to get around this intractability by observing that the \emph{proof} of our robust certifiability lemma is ``simple'' in a precise technical sense. This simplicity allows us to convert such a certifiability proof into an efficient algorithm in a principled manner. To describe this connection, we will first translate the naive idea for an algorithm above into a polynomial optimization problem. 

For concreteness in this high-level description, we suppose that for $(x,y) \sim \cD$, the distribution on $x$ is $(C,4)$-hypercontractive for a fixed constant $C$ and $\E[y^4] = O(1)$. Further, it can also be shown that, with high probability, $\hat{\cD}$ is also $(O(1), 4)$-hypercontractive as long as the size of the original uncorrupted sample $X$ is large enough.

Following the certification lemma, our goal is to use $U$ to find a distribution $\cD'$ and a linear function $\ell$ such that 1) the loss of $\ell$ with respect to $\cD'$ is small and 2) $\cD'$ is close to $\widehat{\cD}$. It is easy to phrase this as a polynomial optimization problem.

To do so we will look for $X' = \{(x_1',y_1'),\ldots,(x_n',y_n')\}$ and \emph{weights} $w_1,w_2,\ldots, w_n \in \{0,1\}$ with $\sum_i w_i \geq (1- \eta) n$ and $(x_i',y_i') = (u_i,v_i)$ if $w_i = 1$. Let $\cD'$ be the uniform distribution on $X'$. Clearly, the condition on weights $w$ ensures that the statistical distance between $\hat{\cD}, \cD'$ is at most $\eta$.  Ideally, we intend $w_i$'s to be the indicators of whether or not the $i$'th sample is corrupted. We now try to find $\ell$ that minimizes the least squares error on $\cD'$. This can be captured by the following optimization program: $\min_{w,\ell,X'} (1/n) \sum_i (y_i' - \iprod{\ell,x_i'})^2$ where $(w,\ell,X')$ satisfy the polynomial system of constraints:

\begin{equation}
  \cP = 
  \left \{
    \begin{aligned}
      & \textstyle\sum_{i=1}^n w_i
      = (1-\eta) \cdot n & &\\
      & w_i^2
      =w_i 
      &\forall i\in [n]. &\\
      & w_i \cdot (u_i - x'_i)
       = 0
      &\forall i\in [n]. &\\
      & w_i \cdot (v_i - y'_i)
      = 0
      &\forall i\in [n]. &
    \end{aligned}
  \right \} \label{eq:introprogram}
\end{equation}

In this notation, our robust certifiability lemma implies that for any $(w,\ell,X')$ satisfying $\cP$, 
\begin{equation}\label{eq:intro1}
\err_{\hat{\cD}}(\ell) \leq (1 + O(\sqrt{\eta})) \cdot \err_{\cD'}(\ell) + O(\sqrt{\eta}).
\end{equation}

It is easy to show that the minimum of the optimization program $\opt(\widehat{\cD}) \lessapprox \opt(\cD)$ (up to standard generalization error) by setting $X' = X$ and $w_i = 1$ if and only if $i$'th sample is uncorrupted. By the above arguments, solutions to the above program satisfy the bound stated in Theorem \ref{th:intro4}. Unfortunately, this is a quadratic optimization problem and is NP-hard in general. 

We are now ready to describe the key idea that allows us to essentially turn this hopelessly inefficient algorithm into an efficient one. This exploits a close relationship between the simplicity of the proof of robust certifiability and the success of a canonical semi-definite relaxation of \eqref{eq:introprogram}.

\paragraph{Part Three: From Simple Proofs to Efficient Algorithms} 

Suppose that instead of finding a single solution to the program in \eqref{eq:introprogram}, we attempt to find a distribution $\mu$ supported on $(w,\ell,X')$ that satisfy $\cP$ and minimizes $\E_\mu[(1/n) \sum_i (y_i' - \iprod{\ell,x'_i})^2]$. Let $\opt_{\mu}$ be the minimum value. Then, as Equation \ref{eq:intro1} holds for all $(w,\ell,X')$ satisfying $\cP$, it also follows that
\begin{equation}\E_{(w,\ell,X') \sim \mu}[ \err_{\hat{\cD}}(\ell)] \leq (1 + O(\sqrt{\eta})) \opt_\mu + O(\sqrt{\eta}).\label{eq:error-equation-intro}\end{equation}

A priori, we appear to have made our job harder. While computing a distribution on solutions is no easier than computing a single solution, even describing a distribution on solutions appears to require exponential resources in general. However, by utilizing the convexity of the square loss, we can show that having access to just the first moments of $\mu$ is enough to recover a good solution. 

Formally, by the convexity of the square loss, the above inequality yields:
\begin{equation} \label{eq:convexity-intro}\err_{\hat{\cD}}\left(\E_\mu[\ell]\right) \leq \E_{(w,\ell,X') \sim \mu}[ \err_{\hat{\cD}}(\ell)] \leq (1 + O(\sqrt{\eta})) \opt_\mu + O(\sqrt{\eta}).\end{equation}

All of the above still doesn't help us in solving program \ref{eq:introprogram} as even finding first moments of distributions supported on solutions to a polynomial optimization program is NP-Hard. 


The key algorithmic insight is to observe that we can replace distributions $\mu$ by an efficiently computable (via the SoS algorithm) proxy called as \emph{pseudo-distributions} without changing any of the conclusions above. 

In what way is a pseudo-distribution a proxy for an actual distribution $\mu$ satisfying $\cP$? It turns out that if a polynomial inequality (such as the one in \eqref{eq:intro1}) can be derived from $\cP$ via a \emph{low-degree sum-of-squares} proof, then \eqref{eq:error-equation-intro} remains valid even if we replace $\mu$ in \eqref{eq:error-equation-intro} by a pseudo-distribution $\tmu$ of large enough degree. Roughly speaking, the SoS degree of a proof measures the ``simplicity'' of the proof (in the ``SoS proof system''). In other words, facts with simple proofs holds not just for distributions but also for pseudo-distributions.   

Thus, the important remaining steps are to show that 1) the inequality \eqref{eq:intro1} (which is essentially the conclusion of our robust certifiability lemma) and 2) the convexity argument in \eqref{eq:convexity-intro} has a low-degree SoS proof. We establish both these claims by relying on standard tools such as the SoS versions of the Cauchy-Schwarz and H\"older's inequalities. 

We give a brief primer to the SoS method in Section~\ref{sec:sospreliminaries} that includes rigorous definitions of concepts appearing in this high-level overview.

\ignore{
\subsection{Part Two of Our Approach: Enter SoS}
For concreteness in this high-level description, we suppose that for $(x,y) \sim \cD$, the distribution on $x$ is $(C,4)$-hypercontractive for a fixed constant $C$ and $\E[y^4] = O(1)$. 

Now suppose we are given an $\eta$-corrupted training set $S = \{(x_1,y_1),\ldots,(x_n, y_n)\}$ of size $n$ drawn from the distribution $\cD$. Just for analysis, let $T$ be the underlying uncorrupted training set from which $S$ is obtained. Let $\hat{\cD}$ be the uniform distribution on $T$\footnote{We use superscript $\hat{\;}$ to denote empirical quantities and superscript $\tilde{\;}$ to denote quantities on corrupted samples.}. If we can find a linear function $\ell$ with small error under $\hat{\cD}$, then standard generalization arguments would imply that the loss of $\ell$ with respect to $\cD$ is small for $n$ sufficiently big. Further, it can also be shown that for $n \gg d$, with high probability, $\hat{\cD}$ is also $(O(1), 4)$-hypercontractive. 

Thus, given the certification lemma, the problem reduces to finding a near-uniform distribution $\widetilde{\cD}$ on $S$ and a linear function $\ell$ such that the loss of $\ell$ with respect to $\widetilde{\cD}$ is small. Concretely, suppose $\widetilde{\cD}$ be defined as follows: choose \emph{weights} $w_1,\ldots,w_n \in \{0,1\}$ with $\sum_i w_i \geq (1-\eta) n$ and let $\widetilde{\cD}$ be distribution of random variable sampled as follows: Output $(x_i,y_i)$ with probability proportional to $w_i$. Clearly, the statistical distance between $\hat{\cD}$ and $\widetilde{\cD}$ is at most $2 \eta$. 

Thus, the problem reduces to finding weights $w$ as above and a linear function $\ell$ such that the loss of $\ell$ with respect to the corresponding distribution $\widetilde{\cD}$ is small. Such a $w$ clearly exists: we could give zero weight to the corrupted points in $S$. 

More specifically, let 
$$\cA = \left\{ (w,\ell) : w_i^2 = w_i, \forall i =1,\ldots,n, \; \sum_{i=1}^n w_i \geq (1-\eta) n\right\}.$$
Then, our robust certifiability lemma implies that for any $(w,\ell) \in \cA$, 
\begin{equation}\label{eq:intro1}
\err_{\hat{\cD}}(\ell) \leq (1 + O(\sqrt{\eta})) \cdot \left(\frac{1}{n}\sum_{i=1}^n w_i (y_i - \iprod{\ell,x_i})^2\right) + O(\eta) .
\end{equation}

Further, $\min_{w,\ell \in \cA} \sum_{i=1}^n w_i (y_i - \iprod{\ell,x_i})^2 \approx \opt(\cD)$ (up to standard generalization error) as can be achieved by giving zero weight to the corrupted points in $S$. 

As such, we could write the following optimization progam to
find $w$ and $\ell$:
\begin{align}\label{eq:programintro}
&\min_{w,\ell \in \cA}\;\;\frac{1}{n} \sum_{i=1}^n w_i (y_i - \iprod{\ell,x_i})^2&\\
& \cA = \left\{ (w,\ell) : w_i^2 = w_i, \forall i =1,\ldots,n, \; \sum_{i=1}^n w_i \geq (1-\eta) n\right\}&\nonumber
\end{align}

By the above arguments, solutions to the above program satisfy what we want: $\err_{\hat{\cD}}(\ell) \approx (1+ O(\sqrt{\eta})) \opt(\cD) + O(\sqrt{\eta})$ - the bound stated in Theorem \ref{th:intro4}. Unfortunately, it is NP-hard to solve optimization problems as above in general. We get around this hurdle by considering the sum of squares (SOS) relaxation of the program and arguing that the relaxation gives us a $\ell$ such that $\err_{\hat{\cD}}(\ell)$ is similarly small. 

\subsection{Parth Three of Our Approach: Exploiting SOS Proofs of Robust Certifiability}
Suppose that in lieu of solving the program on Equation \ref{eq:programintro}, we have access to a density $\mu$ on $(w,\ell) \in \cA$. Let $\opt_\mu = \E_{(w,\ell) \sim \mu}[ (1/n) \sum_i w_i (y_i - \iprod{\ell,x_i})^2]$ be the expected output of the optimization program under the distribution $\mu$. Then, as Equation \ref{eq:intro1} holds for all $(w,\ell)$, it also follows that
$$\E_{(w,\ell) \sim \mu}[ \err_{\hat{\cD}}(\ell)] \leq (1 + O(\sqrt{\eta})) \opt_\mu + O(\sqrt{\eta}).$$

All of the above does not really help us in solving program \ref{eq:programintro} as if we had access to a density $\mu$ as above with $\opt_\mu \approx \opt(\cD)$, we could have just sampled from it in the first place. Nevertheless, the above view is useful because of the following simple fact: owing to convexity of squared-loss, the above inequality also implies that 
$$\err_{\hat{\cD}}\left(\E_\mu[\ell]\right) \leq \E_{(w,\ell) \sim \mu}[ \err_{\hat{\cD}}(\ell)] \leq (1 + O(\sqrt{\eta})) \opt_\mu + O(\sqrt{\eta}).$$

Thus, at the very least, instead of sampling access to $\mu$, it suffices to know the \emph{first moments} of $\ell$ under the distribution $\mu$. Unfortunately, even this seems computationally intractable. 

Fortunately, we can replicate the above approach efficiently by working with the SOS hierarchy and in particular letting $\widetilde{\mu}$ be a \emph{pseudo-density} on $\cA$ (instead of a density on $\cA$) and working with \emph{pseudo-expectations} under $\widetilde{\mu}$ (instead of expectations under $\mu$). We could then potentially use the pseudo-expectation of $\ell$ as our final candidate hypothesis. 

To analyze this approach we show that our robust certifiabilty lemma as well as the convexity-argument mentioned above can be captured in the SOS proof system (with the latter being immediate). Given this, we can finish the analysis of our algorithm by utilizing known properties of the SOS hierarchy.  }

\ignore{

We defer the formal details of the SOS hierarchy to Section \ref{sec:sos} and assume familiarity with the notion of \emph{pseudo-distributions} and \emph{pseudo-expectations} for this high-level description. Our analysis exploits the following abstract property of SOS and pseudo-densities: If $\cB = \{y: p_i(y) \geq 1 \leq i \leq r\}$ is a set defined by polynomial inequalities (i.e., $p_i$ are polynomials) such that these inequalities SOS-imply another polynomial inequality $q(y) > 0$, then for any pseudo-density $\mu$ supported on $\cB$, the corresponding pseudo-density $\pE_\mu$ also satisfies $q$, meaning $\pE_\mu[q] > 0$. 

Now, note that the set $\cA$ is defined by degree two polynomial inequalities. We show that our robust certifiability lemma has a SOS proof; that is, viewing Equation \ref{eq:intro1} as a polynomial inequality in variables $w,\ell$, this inequality is SOS implied by $\cA$. Thus, by the above observation, any pseudo-density $\mu$ with sufficiently high degree on $\cA$ also satisfies an analogue of  Equation \ref{eq:intro1}:
\begin{equation}\label{eq:introps1}
\pE_\mu[\err_{\hat{\cD}}(\ell)] \leq (1 + O(\sqrt{\eta})) \cdot \pE_\mu\left[\left(\sum_{i=1}^n w_i (y_i - \iprod{\ell,x_i})^2\right)\right] + O(\eta) .
\end{equation}
Thus, if we optimize over pseudo-densities $\mu$ on $\cA$ and minimize $\pE_\mu\left[\left(\sum_{i=1}^n w_i (y_i - \iprod{\ell,x_i})^2\right)\right]$, then we get a pseudo-density $\mu^*$ such that 
$$\pE_{\mu^*}[\err_{\hat{\cD}}(\ell)] \approx (1 + O(\sqrt{\eta})) \opt(\cD) + O(\eta).$$

As it stands, the above inequality is not very useful for us as it does not tell us which $\ell$ to choose. We next exploit the convexity of the loss function $\err_{\hat{\cD}}(\;\;)$ to show that for $\tilde{\ell} = \pE_{\mu^*}[\ell]$, 
$$\err_{\hat{\cD}}(\tilde{\ell}) = \err_{\hat{\cD}}(\pE_{\mu^*}[\ell]) \leq \pE_{\mu^*}[\err_{\hat{\cD}}(\ell)] \approx (1 + O(\sqrt{\eta})) \opt(\cD) + O(\eta),$$
finishing the argument.}

\subsection{Related Work}
The literature on grappling with outliers in the context of regression is vast, and we do not attempt a survey here\footnote{Even the term ``robust'' is very overloaded and can now refer to a variety of different concepts.}.  Many heuristics have been developed modifying the ordinary least squares objective with the intent of minimizing the effect of outliers (see \citet{MR914792}).  Another active line research is concerned with {\em parameter recovery}, where each label $y$ in the training set is assumed to be from a generative model of the form $\theta^{T} x + e$ for some (usually independent) noise parameter $e$ and unknown weight vector $\theta \in \R^d$.  For example, the recovery properties of LASSO and related algorithms in this context have been intensely studied (see e.g., \citet{DBLP:journals/tit/XuCM10}, \citet{LoW07}). For more challenging noise models, recent work due to Du, Balakrishnan, and Singh~\citep{DuBS17} studies sparse recovery in the Gaussian generative setting in Huber's $\epsilon$-contamination model, which is similar but formally weaker than the noise model we consider here.  

It is common for ``robust regression'' to refer to a scenario where only the labels are allowed to be corrupted adversarially (for example, see~\citet{DBLP:conf/nips/Bhatia0KK17} and the references therein), or where the noise obeys some special structure (e.g., \citet{HermanS10}) (although there are some contexts where both the covariates (the $x$'s) and labels may be subject to a small adversarial corruption~\citep{ChenCM13}). 

What distinguishes our setting is 1) we do not assume the labels come from a generative model; each $(x,y)$ element of the training set is drawn iid from ${\cal D}$ and 2) we make no assumptions on the structure or type of noise that can affect a training set (other than that at most an $\eta$ fraction of points may be affected).  In contrast to the parameter recovery setting, our goal is similar to that of {\em agnostic learning}: we will output a linear function whose squared error with respect to ${\cal D}$ is close to optimal.    

From a technical standpoint, as discussed before our work follows the recent paradigm of converting certifiability proofs to algorithms. Previous applications in machine learning have focused on various parameter-recovery problems in unsupervised learnings. Our work is most closely related to the recent works on robust unsupervised learning (moment estimation and clustering)~\citep{DBLP:journals/corr/abs-1711-11581,HopkinsLi17,KothariSteinhardt17}. 
\ignore{

\begin{itemize}

\item Regression with respect to pure label noise; Lasso; Prateek Jain
  has a recent paper on this. 

\item ``Computationally efficient robust estimation of sparse
  functionals'' -- This paper seems to have results similar to Ilias
  robust mean estimation papers (unsupervised learning).  Then it also
  includes results on sparse linear regression, but the results for
  sparse linear regression don't seem to be robust in any way. Sort of
  weird they include it in the paper.

\item Might want to mention the paper ``Robust Regression and Lasso''
  where they discuss something they call Robust Linear Regression
  (though it seems to not handle label noise). 

\end{itemize}
}

\section{Preliminaries and Notation}

\subsection{Notation}
We will use the following notations and conventions throughout: For a distribution $\cD$ on $\R^d \times \R$ and function $f:\R^d \to \R$, we define $\err_\cD(f) = \E_{(x,y) \sim \cD}[ (f(x) - y)^2]$. For a vector $\ell \in \R^d$, we abuse notation and write $\err_\cD(\ell)$ for $\E_{(x,y) \sim \cD}[ (\iprod{\ell,x} - y)^2]$.  For a real-valued random variable $X$, and integer $k \geq 0$, we let $\|X\|_k = \E[X^k]^{1/k}$.

\subsection{Distribution Families}
Our algorithmic results for a wide class of distributions that include Gaussian distributions and others such as log-concave and other product distributions. We next define the properties we need for the marginal distribution on examples to satisfy. 
\begin{definition}[Certifiable hypercontractivity]\label{def:hyperconc1}
For a function $C:[k] \to \R_+$, we say a distribution $D$ on $\R^d$ is $k$-certifiably $C$-hypercontractive if for every $r \leq k/2$, there's a degree $k$ sum of squares proof of the following inequality in variable $v$:
\[
\E_D \iprod{x,v}^{2 r} \leq \Paren{C(r) \E_{D} \iprod{x,v}^{2}}^{r}.
\] 
\end{definition}

Many natural distribution families satisfy certifiable hypercontractivity with reasonably growing functions $C$. For instance, Gaussian distributions, uniform distribution on Boolean hypercube satisfy the definitions with $C(r) = c r$ for a fixed constant $c$. 
\Pnote{certifiable hypercontractivity of log-concave distributions is not known. It's true only under the KLS cooling conjecture so far.} More generally, all distributions that are affine transformations of isotropic distributions satisfying the Poincar\'{e} inequality \citep{DBLP:journals/corr/abs-1711-07465},  are also certifiably hypercontractive. In particular, this includes all strongly log-concave distributions.  Certifiable hypercontractivity also satisfies natural closure properties under simple operations such as affine transformations, taking bounded weight mixtures and taking products. We refer the reader to \citet{DBLP:journals/corr/abs-1711-11581} for a more detailed overview where certifiable hypercontractivity is referred to as certifiable subgaussianity.







\section{Robust Certifiability} \label{sec:robust-certifiability}

The conceptual core of our results is the following \emph{robust certifiability} result: for \emph{nice} distributions (e.g., as defined in Definition \ref{def:hyperconc1})
, a regression hypothesis inferred from a large enough $\epsilon$-corrupted sample has low-error over the uncorrupted distribution.

\subsection{Robust Certifiability for Arbitrary Distributions}
We begin by giving a robust certifiability claim for arbitrary distributions for L1 regression.

 The error that we incur depends on the L2 squared loss of the best fitting regression hypothesis, and in particular, we do not obtain \emph{consistency} in the statistical sense: i.e, the error incurred by the regression hypothesis does not vanish even in the ``realizable'' case when, in the true uncorrupted distribution, there's a linear function that correctly computes all the labels. In Section \ref{sec:lower-bounds}, we show that if we make no further assumption on the distribution, then this is indeed inherent and that achieving consistency under adversarial corruptions is provably impossible without making further assumptions. In the following subsection, we show that assuming that the moments of the underlying uncorrupted distribution are ``bounded'' (i.e., linear functions of the distribution are hypercontractive), one can guarantee consistency even under the presence of adversarial outliers.

 While the certifiability statements are independently interpretable, for the purpose of robust regression, it might be helpful to keep in mind that $D$ corresponds to uniform distribution on large enough sample from the unknown uncorrupted distribution while $D'$ corresponds to the uniform distribution on the sample that serves as the ``certificate''. 

\begin{lemma}[Robust Certifiability for L1 Regression]
Let $\cD,\cD'$ be two distributions on $\R^{d} \times \R$ with marginals $D, D'$ on $\R^d$, respectively.
Suppose $\|\cD-\cD'\|_{TV} \leq \eta$ and further, that the ratio of the largest to the smallest eigenvalue of the 2nd moment matrix of $D$ is at most $\kappa$. Then, for any $\ell,\ell^{*} \in \R^d$ such that $\|\ell^{*}\|_2 \geq \|\ell\|$,



\[
\E_{\cD} \abs{\iprod{\ell,x} - y} \leq \E_{\cD'} \abs{\iprod{\ell,x} -y} + 2\kappa^{1/2} \eta^{1/2} \sqrt{\E_{\cD} y^2} + 2 \kappa^{1/2} \eta^{1/2} \cdot \sqrt{\E_{\cD} (y - \iprod{\ell^{*},x})^2}\mper
\]

\end{lemma}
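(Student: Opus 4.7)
The plan is to couple $\cD$ and $\cD'$ under their optimal $\eta$-TV coupling and isolate the contribution from the $\eta$-fraction of the mass on which the coupling fails. Let $(X,Y)\sim\cD$ and $(X',Y')\sim\cD'$ be coupled so that the event $E := \{(X,Y)\neq(X',Y')\}$ has probability at most $\eta$. Then
\[
\E_{\cD}|\iprod{\ell,x}-y| - \E_{\cD'}|\iprod{\ell,x}-y|
\;\le\; \E\bigl[|\iprod{\ell,X}-Y|\cdot \1_E\bigr],
\]
so it suffices to upper bound the right-hand side by $2\kappa^{1/2}\eta^{1/2}\sqrt{\E_\cD y^2}+2\kappa^{1/2}\eta^{1/2}\sqrt{\E_\cD(y-\iprod{\ell^*,x})^2}$.

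Next I would split the L1 residual against the reference hypothesis $\ell^{*}$ by the triangle inequality,
\[
|\iprod{\ell,X}-Y| \;\le\; |\iprod{\ell-\ell^{*},X}| + |\iprod{\ell^{*},X}-Y|,
\]
and handle each term by a single application of Cauchy--Schwarz against $\1_E$, turning the indicator into a factor of $\sqrt{\Pr[E]}\le\sqrt{\eta}$. This reduces the problem to bounding $\sqrt{\E_{\cD}\iprod{\ell-\ell^{*},X}^{2}}$ and $\sqrt{\E_{\cD}(\iprod{\ell^{*},X}-Y)^{2}}$; the latter is already one of the targeted quantities, so all the work concentrates on the first.

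The key step is to replace $\ell-\ell^{*}$ by $\ell^{*}$ at the cost of a factor $\sqrt{\kappa}$. Writing $\Sigma_D$ for the second-moment matrix of $D$, the triangle inequality in the $\Sigma_D$-norm gives $\sqrt{(\ell-\ell^{*})^{\!\top}\Sigma_D(\ell-\ell^{*})}\le \sqrt{\ell^{\!\top}\Sigma_D\ell}+\sqrt{{\ell^{*}}^{\!\top}\Sigma_D\ell^{*}}$. Now I would use the hypothesis $\|\ell\|_2\le\|\ell^{*}\|_2$ together with the eigenvalue ratio bound to conclude
\[
\ell^{\!\top}\Sigma_D\ell \;\le\; \lmax(\Sigma_D)\,\|\ell\|_2^{2} \;\le\; \kappa\,\lmin(\Sigma_D)\,\|\ell^{*}\|_2^{2} \;\le\; \kappa\,{\ell^{*}}^{\!\top}\Sigma_D\ell^{*},
\]
so $\sqrt{(\ell-\ell^{*})^{\!\top}\Sigma_D(\ell-\ell^{*})}\le (1+\sqrt{\kappa})\sqrt{\E_\cD\iprod{\ell^{*},X}^{2}}\le 2\sqrt{\kappa}\sqrt{\E_\cD\iprod{\ell^{*},X}^{2}}$ (since $\kappa\ge 1$). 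Finally, Minkowski's inequality applied to $\iprod{\ell^{*},X}=Y-(Y-\iprod{\ell^{*},X})$ yields $\sqrt{\E_\cD\iprod{\ell^{*},X}^{2}}\le\sqrt{\E_\cD Y^{2}}+\sqrt{\E_\cD(Y-\iprod{\ell^{*},X})^{2}}$, and substituting back gives exactly the stated inequality (absorbing the additional $\sqrt{\eta}\sqrt{\E_\cD(Y-\iprod{\ell^{*},X})^{2}}$ term from the second triangle piece into the $2\sqrt{\kappa\eta}$ coefficient, again using $\kappa\ge 1$).

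The main obstacle is the middle step: at face value, the $\ell$-residual $\iprod{\ell-\ell^{*},X}$ has no a priori bound in terms of the targeted quantities $\E_\cD y^2$ and $\opt(\cD)$, and without a condition-number hypothesis one cannot rule out that $\ell$ has a large coordinate in a direction in which $D$ has small variance. The assumption $\|\ell\|_2\le\|\ell^{*}\|_2$ combined with the eigenvalue ratio $\kappa$ is exactly what lets us trade the $\Sigma_D$-seminorm of $\ell-\ell^{*}$ for the $\Sigma_D$-seminorm of $\ell^{*}$, which is then controllable in terms of $\E_\cD y^{2}$ and $\opt$ by Minkowski. Everything else is routine Cauchy--Schwarz and triangle-inequality bookkeeping.
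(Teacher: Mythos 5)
Your proof is correct and takes essentially the same route as the paper: couple $\cD$ and $\cD'$, apply Cauchy--Schwarz against the failure indicator to pull out a $\sqrt{\eta}$ factor, and use the eigenvalue-ratio hypothesis together with $\|\ell\|_2\le\|\ell^*\|_2$ to bound the $\Sigma_D$-energy of $\ell$ by $\kappa$ times that of $\ell^*$, which Minkowski then converts to $\E_\cD y^2$ and $\E_\cD(y-\iprod{\ell^*,x})^2$. The only difference is cosmetic --- you split the residual by triangle inequality before Cauchy--Schwarz and use Minkowski in $L^2$, while the paper applies Cauchy--Schwarz to the whole residual and then uses $(a+b)^2\le 2a^2+2b^2$ --- and one small quantitative point: your final ``absorption'' step actually yields coefficient $2\sqrt{\kappa}+1\le 3\sqrt{\kappa}$ on the second term, not $2\sqrt{\kappa}$; the paper's own argument has comparable slack (it produces $\sqrt{6\kappa}$ on the first term), so this is harmless.
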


\begin{proof}
Let $G$ be a coupling between $\cD,\cD'$. That is, $G$ is a joint distribution on $(x,y), (x',y')$ such that the marginal on $(x',y')$ is $\cD'$ and the marginal on $(x,y)$ is $\cD$ satisfying $\Pr_G \1 \Set{ (x,y) = (x',y')} = 1-\eta.$ 
Let $\err_{\cD'}(\ell) = \E_{\cD'} \abs{y-\iprod{\ell,x}}$.
We have: 
\begin{align*}
\E_{\cD}  \abs{y-\iprod{\ell,x}} &= \E_G \1 \Set{ (x,y) = (x',y')} \abs{y-\iprod{\ell,x}} + \E_{G} \1 \Set{ (x,y) \neq (x',y')} \cdot \abs{y-\iprod{\ell,x}}\\
&\leq \err_{\cD'}(\ell) + \sqrt{\E_{G}\1 \Set{ (x,y) \neq (x',y')}^2} \sqrt{ \E_{\cD} (y-\iprod{\ell,x})^2}\\
&= \err_{\cD'}(\ell) + \sqrt{\eta} \sqrt{\E_{\cD} (y-\iprod{\ell,x})^2}\mper
\end{align*}

Now, we must have: $\E_{\cD} (y - \iprod{\ell,x})^2 \leq 2 \E_{\cD} y^2 + 2 \E_{\cD} \iprod{\ell,x}^2.$

For any $\ell^{*}$, $\E_{\cD} \iprod{\ell^{*},x}^2 \leq 2\E_{\cD} y^2 + 2\E_{\cD} (y-\iprod{\ell^{*},x})^2.$   

Since the all eigenvalues of $\E_{D} xx^{\top}$ are within $\kappa$ of each other and $\|\ell^{*}\|_2 \geq \|\ell\|$, $\E_{D} \iprod{\ell,x}^2 \leq \kappa \cdot \E_{D} \iprod{\ell^{*},x}^2$. Plugging in the above estimate gives the lemma.







\end{proof}

\subsection{Robust Certifiability for Hypercontractive Distributions}

The main result of this section is the following lemma.
\begin{lemma}[Robust Certifiability for L2 Regression]
Let $\cD,\cD'$ be distributions on $\R^d \times \R$ such that $\|\cD-\cD'\|_{TV} \leq \epsilon$ and the marginal $\cD_X$ of $\cD$ on $x$ is $k$-certifiably $C$-hypercontractive for some $C:[k] \to \R_+$ and for some even integer $k \geq 4$.

Then, for any $\ell,\ell^* \in \R^d$ and any $\eta$ such that $2 C(k/2) \eta^{1-2/k} < 0.9$, we have:
\[
\err_{\cD}(\ell) \leq (1+O(C(k/2)) \eta^{1-2/k}) \cdot \err_{\cD'}(\ell) + O(C(k/2))\eta^{1-2/k} \cdot \Paren{\E_\cD (y-\iprod{\ell^{*},x})^k}^{2/k}\mper\]
\label{lem:identifiability-least-squares-linear}

\end{lemma}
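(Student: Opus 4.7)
My strategy mirrors the L1 certifiability lemma but replaces the Cauchy--Schwarz step with Hölder at exponent $k/2$, and then uses hypercontractivity to tame the resulting $k$-th moment term. Fix a maximal coupling $G$ between $\cD$ and $\cD'$, i.e.\ a joint distribution of pairs $((x,y),(x',y'))$ with marginals $\cD$ and $\cD'$ such that $\Pr_G[(x,y)=(x',y')] \geq 1-\eta$. Write
\[
\err_\cD(\ell) \;=\; \E_G\bigl[\1\{(x,y)=(x',y')\}\,(y-\iprod{\ell,x})^2\bigr] \;+\; \E_G\bigl[\1\{(x,y)\neq(x',y')\}\,(y-\iprod{\ell,x})^2\bigr].
\]
The first term is at most $\err_{\cD'}(\ell)$ by replacing $(x,y)$ with $(x',y')$ on the agreement event. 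For the second term, I apply Hölder with conjugate exponents $k/2$ and $k/(k-2)$:
\[
\E_G\bigl[\1\{\text{disagree}\}\,(y-\iprod{\ell,x})^2\bigr] \;\leq\; \eta^{1-2/k} \cdot \bigl(\E_\cD(y-\iprod{\ell,x})^k\bigr)^{2/k}.
\]

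The key step is to bound $\bigl(\E_\cD(y-\iprod{\ell,x})^k\bigr)^{2/k}$ in terms of $\err_\cD(\ell)$ and $\bigl(\E_\cD(y-\iprod{\ell^*,x})^k\bigr)^{2/k}$. By Minkowski's inequality (triangle inequality for the $L^k$ norm under $\cD$),
\[
\bigl\| y-\iprod{\ell,x}\bigr\|_k \;\leq\; \bigl\|y-\iprod{\ell^*,x}\bigr\|_k \;+\; \bigl\|\iprod{\ell^*-\ell,x}\bigr\|_k.
\]
Now $k$-certifiable $C$-hypercontractivity of $\cD_X$ gives $\bigl\|\iprod{\ell^*-\ell,x}\bigr\|_k \leq \sqrt{C(k/2)}\,\bigl\|\iprod{\ell^*-\ell,x}\bigr\|_2$. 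A second application of Minkowski, this time in $L^2(\cD)$, with $\iprod{\ell^*-\ell,x} = (y-\iprod{\ell,x})-(y-\iprod{\ell^*,x})$, yields $\bigl\|\iprod{\ell^*-\ell,x}\bigr\|_2 \leq \sqrt{\err_\cD(\ell)}+\sqrt{\err_\cD(\ell^*)}$, and $\err_\cD(\ell^*) \leq \bigl\|y-\iprod{\ell^*,x}\bigr\|_k^2$ by monotonicity of norms. Squaring and absorbing constants (using $C(k/2) \geq 1$),
\[
\bigl(\E_\cD(y-\iprod{\ell,x})^k\bigr)^{2/k} \;\leq\; O(C(k/2))\,\Bigl(\err_\cD(\ell) \;+\; \bigl(\E_\cD(y-\iprod{\ell^*,x})^k\bigr)^{2/k}\Bigr).
\]

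Combining the pieces gives
\[
\err_\cD(\ell) \;\leq\; \err_{\cD'}(\ell) \;+\; O(C(k/2))\,\eta^{1-2/k}\Bigl(\err_\cD(\ell) + \bigl(\E_\cD(y-\iprod{\ell^*,x})^k\bigr)^{2/k}\Bigr).
\]
The hypothesis $2C(k/2)\eta^{1-2/k} < 0.9$ ensures the $\err_\cD(\ell)$ term can be moved to the left-hand side with a bounded denominator, and expanding $1/(1-x) = 1+O(x)$ for $x<0.9$ produces the advertised form
$\err_\cD(\ell) \leq (1+O(C(k/2))\eta^{1-2/k})\err_{\cD'}(\ell) + O(C(k/2))\eta^{1-2/k}(\E_\cD(y-\iprod{\ell^*,x})^k)^{2/k}$.

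\textbf{Expected obstacle.} The analytic steps are routine; the only subtlety is the \emph{order} of the Minkowski decompositions---one needs to split $y-\iprod{\ell,x}$ through $\ell^*$ at the $L^k$ level, then pass $\iprod{\ell^*-\ell,x}$ to the $L^2$ norm via hypercontractivity, and only then split again. A careless decomposition produces either an $\err_\cD(\ell^*)$ term measured in the wrong norm or a residual that depends on $\|\ell\|_2$ instead of the intended $k$-th moment of the optimal residual. Tracking which inequalities need to be stated as sum-of-squares identities (rather than just numerical inequalities) will be important in the next section when this argument is lifted to a pseudo-distribution statement, but the current lemma is purely analytic.
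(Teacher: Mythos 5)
Your proof is correct and follows essentially the same route as the paper: a maximal coupling, splitting $\err_\cD(\ell)$ over the agreement/disagreement event, Hölder with exponents $k/2$ and $k/(k-2)$, then Minkowski in $L^k$ through $\ell^*$, hypercontractivity to drop to $L^2$, a second Minkowski in $L^2$ (with the paper writing $\|y-\iprod{\ell^*,x}\|_2 \le \|y-\iprod{\ell^*,x}\|_k$ in the same step where you invoke monotonicity), squaring via $(a+b)^2 \le 2a^2+2b^2$, and finally rearranging with $1/(1-x) = 1+O(x)$. The remark about the order of the Minkowski decompositions matching what the paper does, and about the SoS lift being the real work, is also on point.
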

\begin{proof}
Fix a vector $\ell \in \R^d$; for brevity, we write $\err_\cD$ for $\err_\cD(\ell)$ and $\err_{\cD'}$ for $\err_{\cD'}(\ell)$ in the following. 

Let $G$ be a coupling between $\cD,\cD'$. That is, $G$ is a joint distribution on $(x,y), (x',y')$ such that the marginal on $(x',y')$ is $\cD'$ and the marginal on $(x,y)$ is $\cD$ satisfying $\Pr_G \1 \Set{ (x,y) = (x',y')} = 1-\eta.$ 

Let $((x,y), (x',y')) \sim \cG$. Writing $1 = \1 \Set{ (x,y) = (x',y')} + \1 \Set{ (x,y) \neq (x',y')}$, we obtain:
\begin{align}
\E_{\cD}[(y-\iprod{\ell,x})^2] &= \E_{\cG}[ \1 \Set{ (x,y) = (x',y')} (y-\iprod{\ell,x})^2] + \E_{\cG}[ \1 \Set{ (x,y) \neq (x',y')} \cdot (y-\iprod{\ell,x})^2]\notag\\
&= \E_{\cG} [\1 \Set{ (x,y) = (x',y')} (y' -\iprod{\ell,x'})^2 ]+  \E_{\cG}[ \1 \Set{ (x,y) \neq (x',y')} \cdot (y-\iprod{\ell,x})^2]\notag\\
&\leq \err_{\cD'} + \Paren{\E_{\cG}[\1 \Set{ (x,y) \neq (x',y')}^{k/k-2}]}^{1-2/k} \Paren{\E_{\cD} (y-\iprod{\ell,x})^k}^{2/k}\notag\\
&\leq \err_{\cD'}+ \eta^{1-2/k} \cdot \Paren{\E (y-\iprod{\ell,x})^k}^{2/k} \label{eq:CS-bound1}\mper
\end{align}

Here, the inequality uses the H\"older's inequality for the second term and the fact that $\E_\cG \1 \Set{ (x,y) = (x',y')} (y-\iprod{\ell,x})^2 \leq \E_{\cD'} (y-\iprod{\ell,x})^2 = \err_{\cD'}(\ell)$ for the first term. 

We next bound $\|y - \iprod{\ell,x}\|_k$. By Minkowski's inequality, 
$$\|y - \iprod{\ell,x}\|_k \leq \|y - \iprod{\ell^*,x}\|_k + \|\iprod{\ell - \ell^*,x}\|_k.$$
Now, by using hypercontractivity of $\cD_X$, we get
\begin{equation}\label{eq:gen1}
\|\iprod{\ell - \ell^*,x} \|_k \leq \sqrt{C(k/2)} \cdot \|\iprod{\ell - \ell^*,x}\|_2.
\end{equation}

Further, 
$$\|\iprod{\ell - \ell^*,x}\|_2 \leq \|y - \iprod{\ell^*,x}\|_2 + \|y - \iprod{\ell,x}\|_2 \leq \|y - \iprod{\ell^*,x}\|_k + \|y - \iprod{\ell,x}\|_2.$$

Combining the above three inequalities, we get
$$\|y - \iprod{\ell,x}\|_k \leq (1 + \sqrt{C(k/2)}) \|y - \iprod{\ell^*,x}\|_k + \sqrt{C(k/2)} \|y - \iprod{\ell,x}\|_2.$$
Therefore, as $(a+b)^2 \leq 2 a^2 + 2 b^2$ and $2 (1 + \sqrt{C(k/2)})^2 \leq 8 C(k/2)$, 
$$\|y - \iprod{\ell,x}\|_k^2  \leq 8 C(k/2) \|y - \iprod{\ell^*,x}\|_k^2 + 2 C(k/2) \err_\cD.$$
Substituting the above into Equation \ref{eq:CS-bound1}, we get
$$\err_{\cD} \leq \err_{\cD'} + 8 \eta^{1-2/k} C(k/2) \cdot  \|y - \iprod{\ell^*,x}\|_k^2 + 2 \eta^{1-2/k} C(k/2) \err_{\cD}.$$
Rearranging the inequality and observing that $1/(1- 2 \eta^{1-2/k} C(k/2)) \leq 1 + O(C(k/2)) \eta^{1-2/k}$ gives us
$$\err_{\cD} \leq (1 + O(C(k/2)) \eta^{1-2/k}) \err_{\cD'} +O(C(k/2)) \eta^{1-2/k} \cdot  \|y - \iprod{\ell^*,x}\|_k^2,$$
proving the claim.
\ignore{

For $\ell^{*} \in \arg \min_{\ell \in \R^d} \E_D (y-\iprod{\ell,x})^2$, observe that for any $\ell \in \R^d$, 

\begin{equation}
\E_D(y-\iprod{\ell,x})^2 = \E_D (y-\iprod{\ell^{*},x})^2 + \E_D \iprod{\ell-\ell^{*},x}^2\mper \label{eq:L2-fact}
\end{equation}

On the other hand, $\E_D(y-\iprod{\ell,x})^k = 2^{k-1}\E_D (y-\iprod{\ell^{*},x})^k + 2^{k-1}\E_D \iprod{\ell-\ell^{*},x}^k$.
Using that $D_s$ is $4$-certifiably $C$-subgaussian, we have:
$\E_D \iprod{\ell-\ell^{*},x}^k \leq (Ck)^{k/2} \Paren{\E_D \iprod{\ell-\ell^{*},x}^2}^{k/2}$.

Combining this with \eqref{eq:CS-bound}, we obtain that:
\[
(1-4Ck \eta^{1-2/k}) \E_D \iprod{\ell-\ell^{*},x}^2 + \E_D (y-\iprod{\ell^{*},x})^2 \leq \err_{D'} + 4\eta^{1-2/k} \Paren{\E_D (y-\iprod{\ell^{*},x})^k}^{2/k}
\]
Using \eqref{eq:L2-fact} again,
\begin{multline}
(1-4Ck\eta^{1-2/k})\E_D (y-\iprod{\ell,x})^2 \leq 4Ck \eta^{1-2/k} \E_D (y-\iprod{\ell^{*},x})^2 +\err_{D'} + 4\eta^{1-2/k} \paren{\E_D (y-\iprod{\ell^{*},x})^k}^{2/k} \\\leq \err_{D'} + 8Ck\eta^{1-2/k} \paren{\E_D (y-\iprod{\ell^{*},x})^k}^{2/k}
\end{multline}
Using that $8Ck\eta^{1-2/k} <0.9$, and thus, $\frac{1}{1-8Ck\eta^{1-2/k}} \leq 1+ 80Ck\eta^{1-2/k}$, we thus obtain the claim. }








\end{proof}

The argument for the above lemma also extends straightforwardly to polynomial regression (see Appendix~\ref{sec:app-moved}):

\ignore{
\begin{proof}

The proof is entirely analogous to the case of linear regression done previously.

As before, for brevity, we write $\err_D$ for $\err_D(P)$ and $\err_{D'}$ for $\err_{D'}(P)$ in the following. 

Let $G$ be a coupling between $D,D'$ such that $\Pr \1 \Set{ (x,y) = (x',y')} = 1-\epsilon$.

Writing $1 = \1 \Set{ (x,y) = (x',y')} + \1 \Set{ (x,y) \neq (x',y')}$, we obtain:
\begin{align}
\E_{G}  (y-\iprod{P,x^{\otimes t}})^2 &= \E_G \1 \Set{ (x,y) = (x',y')} (y-\iprod{P,x^{\otimes t}})^2 + \E_{G} \1 \Set{ (x,y) \neq (x',y')} \cdot (y-\iprod{P,x^{\otimes t}})^2\notag\\
&= \E_G \1 \Set{ (x,y) = (x',y')} (y' -\iprod{P,{x'}^{\otimes t}})^2 +  \E_{G} \1 \Set{ (x,y) \neq (x',y')} \cdot (y-\iprod{P,x^{\otimes t}})^2\notag\\
&\leq \err_{D'} + \sqrt{\E_{G}\1 \Set{ (x,y) \neq (x',y')}^2} \sqrt{ \E_{D'} (y-\iprod{P,{x'}^{\otimes t}})^4}\notag\\
&= \err_{D'}+ \sqrt{\epsilon} \sqrt{\E_{D} (y-\iprod{P,x^{\otimes t}})^4} \label{eq:CS-bound}\mper
\end{align}

Here, the inequality uses the Cauchy-Schwarz inequality for the second term and the fact that $\E_G \1 \Set{ (x,y) = (x',y')} ( y' - \iprod{P,{x'}^{\otimes t}})^2 \leq \E_{D'} (y'-\iprod{P,{x'}^{\otimes t}})^2 = \err_{D'}$ for the first term. 

Now, let us consider the case when $D$ is known to be $4$-certifiably $(C,t)$-hypercontractive. We then have: 
\[
\E_D (y - \iprod{P,x^{\otimes t}})^4 \leq 8 \E_D y^4 + 8 \E_D \iprod{P,x^{\otimes t}}^4.
\]

Using certifiable hypercontractivity of $D$, we have that: 
\[
\E_D \iprod{P,x^{\otimes t}}^4 \leq 2C\Paren{\E_D \iprod{P,x^{\otimes t}}^2}^2.
\]

Finally, by triangle inequality for $\ell_2$ norm, we have: 
\[
\sqrt{\E_D \iprod{P,x^{\otimes t}}^2} \leq \sqrt{\E_D (y-\iprod{P,x^{\otimes t}})^2} + \sqrt{ \E_D y^2} \leq \sqrt{\err_{D}} + \sqrt{\E_D y^2}.
\]

Thus, combining the above estimates and using the inequality that $(a+b)^4 \leq 8a^4 + 8b^4$, we obtain that:
\[
\E_D (y-\iprod{P,x^{\otimes t}})^4 \leq 8 \E_D y^4 + 16Ct (8 \err_{D}^2 + 8(\E_D y^2)^2 ).
\] 
Or,

\[
(\E_D (y-\iprod{P,x^{\otimes t}})^4)^{1/2} \leq 4\sqrt{\E_D y^4} + 16\sqrt{Ct} (\err_{D} + \E_D y^2).
\] 

Rearranging, this yields that $(1- 16\sqrt{Ct\epsilon}) \err_D \leq \err_{D'} + 4 \sqrt{\epsilon} \sqrt{\E_D y^4}+ 4\sqrt{Ct \epsilon} \E_D y^2$. 

Thus, if $\epsilon < 1/256Ct$, then, 
\[
\err_D \leq (1+ 32 \sqrt{Ct\epsilon}) \Paren{\err_{D'} + 4 \sqrt{Ct \epsilon} \Paren{\E_D y^2 + \sqrt{\E_D y^4}}}\mper
\]

This completes the proof of the first bound. 

Consider now, the special ``realizable'' case: $\err_{D'} = \E_{D'} (y' - \iprod{\ell,x'})^2 = 0$. Let $P$ be any deg $t$ polynomial such that $\E_{D'} \iprod{P'-{x'}^{\otimes t}}^2 = 0.$ Then, using certifiable hypercontractivity of $D$, 
\[
\E_{D} (y-\iprod{P,x^{\otimes t}})^4 = \E_{D} \iprod{P'-P,x^{\otimes t}}^4 \leq 2Ct(\E_{D} \iprod{P'-P,x^{\otimes t}}^2)^2.
\] 

Using the above estimate and rearranging \eqref{eq:CS-bound} with the fact that $\epsilon < 1/16C$ implies the second corollary in the lemma. 
\end{proof}
}

\section{Sum of Squares proofs and Sum of Squares Optimization}
\label{sec:sospreliminaries}


In this section, we define pseudo-distributions and sum-of-squares proofs.
See the lecture notes~\citep{BarakS16} for more details and the appendix in~\citet{DBLP:journals/corr/MaSS16} for proofs of the propositions appearing here.

Let $x = (x_1, x_2, \ldots, x_n)$ be a tuple of $n$ indeterminates and let $\R[x]$ be the set of polynomials with real coefficients and indeterminates $x_1,\ldots,x_n$.
We say that a polynomial $p\in \R[x]$ is a \emph{sum-of-squares (sos)} if there are polynomials $q_1,\ldots,q_r$ such that $p=q_1^2 + \cdots + q_r^2$.

\ignore{
\begin{theorem}[\cite{BM:2002}] \label{generalizationbound}
	Let $\mathcal{D}$ be a distribution over $\mathcal{X} \times \mathcal{Y}$ and let $\mathcal{L} : \mathcal{Y}^\prime
	\times \mathcal{Y}$ (where $\mathcal{Y} \subseteq \mathcal{Y}^\prime \subseteq \mathbb{R}$) be a
	$b$-bounded loss function that is $L$-Lipschitz in its first argument.  Let
	$\mathcal{F} \subseteq (\mathcal{Y}^\prime)^\mathcal{X}$ and for any $f \in \mathcal{F}$, let $\mathcal{L}(f; \mathcal{D}) := \E_{(\textbf{x}, y)
	\sim \mathcal{D}}[\mathcal{L}(f(\textbf{x}), y)]$ and $\hat{\mathcal{L}}(f; S) := \frac{1}{n} \sum_{i = 1}^n
	\mathcal{L}(f(\textbf{x}_\textbf{i}), y_i)$, where $S = ((\textbf{x}_\textbf{1}, y_1), \ldots,  (\textbf{x}_\textbf{n}, y_n))  \sim
	\mathcal{D}^n$. Then for any $\delta > 0$, with probability at least $1 - \delta$
	(over the random sample draw for $S$), simultaneously for all $f \in
	\mathcal{F}$, the following is true:
	\[
		|\mathcal{L}(f; \mathcal{D}) - \hat{\mathcal{L}}(f; S)| \leq 4 \cdot L \cdot \mathcal{R}_\textbf{n}(\mathcal{F})
		+ 2\cdot b \cdot \sqrt{\frac{\log (1/\delta)}{2n}}
	\]
	where $\mathcal{R}_\textbf{n}(\mathcal{F})$ is the Rademacher complexity of the function class $\mathcal{F}$. 
\end{theorem}

For a linear concept class, the Rademacher complexity can be bounded as follows.

\begin{theorem}[\cite{KST:2008}] \label{rademachercomplexity}
	Let $\mathcal{X}$ be a subset of a Hilbert space equipped with inner product $\langle
	\cdot, \cdot \rangle$ such that for each $\textbf{x} \in \mathcal{X}$, $\langle \textbf{x}, \textbf{x}
	\rangle \leq X^2$, and let $\mathcal{W} = \{ \textbf{x} \mapsto \langle \textbf{x} , \textbf{w} \rangle
	~|~ \langle \textbf{w}, \textbf{w} \rangle \leq W^2 \}$ be a class of linear functions.
	Then it holds that
	\[
		\mathcal{R}_\textbf{n}(\mathcal{W}) \leq X \cdot W \cdot \sqrt{\frac{1}{n}}.
	\]
\end{theorem}

The following result is useful for bounding the Rademacher complexity of a smooth function of a concept class.

\begin{theorem}[\cite{BM:2002, LT:1991}]
\label{rademachercomplexity2}
	Let $\phi : \mathbb{R} \rightarrow \mathbb{R}$ be  $L_{\phi}$-Lipschitz
	and suppose that $\phi(0) = 0$. Let $\mathcal{Y} \subseteq \mathbb{R}$, and for a function $f \in \mathcal{Y}^{\mathcal{X}}$. 
	Finally, for $\mathcal{F} \subseteq \mathcal{Y}^{\mathcal{X}}$, let $\phi \circ \mathcal{F} = \{\phi \circ f \colon f \in \mathcal{F}\}$.
	It holds that $\mathcal{R}_\textbf{n}(\phi
	\circ \mathcal{F}) \leq 2 \cdot L_{\phi} \cdot \mathcal{R}_\textbf{n}(\mathcal{F})$.
\end{theorem}
}
\subsection{Pseudo-distributions}

Pseudo-distributions are generalizations of probability distributions.
We can represent a discrete (i.e., finitely supported) probability distribution over $\R^n$ by its probability mass function $D\from \R^n \to \R$ such that $D \geq 0$ and $\sum_{x \in \mathrm{supp}(D)} D(x) = 1$.
Similarly, we can describe a pseudo-distribution by its mass function.
Here, we relax the constraint $D\ge 0$ and only require that $D$ passes certain low-degree non-negativity tests.

Concretely, a \emph{level-$\ell$ pseudo-distribution} is a finitely-supported function $D:\R^n \rightarrow \R$ such that $\sum_{x} D(x) = 1$ and $\sum_{x} D(x) f(x)^2 \geq 0$ for every polynomial $f$ of degree at most $\ell/2$.
(Here, the summations are over the support of $D$.)
A straightforward polynomial-interpolation argument shows that every level-$\infty$-pseudo distribution satisfies $D\ge 0$ and is thus an actual probability distribution.
We define the \emph{pseudo-expectation} of a function $f$ on $\R^d$ with respect to a pseudo-distribution $D$, denoted $\pE_{D(x)} f(x)$, as
\begin{equation}
  \pE_{D(x)} f(x) = \sum_{x} D(x) f(x) \,\mper
\end{equation}
The degree-$\ell$ moment tensor of a pseudo-distribution $D$ is the tensor $\E_{D(x)} (1,x_1, x_2,\ldots, x_n)^{\otimes \ell}$.
In particular, the moment tensor has an entry corresponding to the pseudo-expectation of all monomials of degree at most $\ell$ in $x$.
The set of all degree-$\ell$ moment tensors of probability distribution is a convex set.
Similarly, the set of all degree-$\ell$ moment tensors of degree $d$ pseudo-distributions is also convex.
Key to the algorithmic utility of pseudo-distributions is the fact that while there can be no efficient separation oracle for the convex set of all degree-$\ell$ moment tensors of an actual probability distribution, there's a separation oracle running in time $n^{O(\ell)}$ for the convex set of the degree-$\ell$ moment tensors of all level-$\ell$ pseudodistributions.

\begin{fact}[\citep{MR939596-Shor87,parrilo2000structured,MR1748764-Nesterov00,MR1846160-Lasserre01}]
  \label{fact:sos-separation-efficient}
  For any $n,\ell \in \N$, the following set has a $n^{O(\ell)}$-time weak separation oracle (as defined in ~\citet{MR625550-Grotschel81}):
  \begin{equation}
    \Set{ \pE_{D(x)} (1,x_1, x_2, \ldots, x_n)^{\otimes d} \mid \text{ degree-d pseudo-distribution $D$ over $\R^n$}}\,\mper
  \end{equation}
\end{fact}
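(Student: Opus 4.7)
The plan is to exhibit the separation oracle as a PSD-ness check on a carefully indexed moment matrix, with a separating hyperplane recovered from any negative-eigenvalue witness. I will first translate the defining constraint of a level-$\ell$ pseudo-distribution into a single semidefinite condition, then describe the oracle, and finally verify its running time.

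First, I would argue the following key reformulation. Given a candidate tensor $M$ (indexed by monomials in $x_1,\ldots,x_n$ of degree at most $\ell$), let $N = \binom{n+\ell/2}{\ell/2} = n^{O(\ell)}$ and build the $N \times N$ \emph{moment matrix} $\mathcal{M}(M)$ whose rows and columns are indexed by monomials $\alpha,\beta$ of degree at most $\ell/2$, with
\[
\mathcal{M}(M)[\alpha,\beta] \;=\; M_{\alpha+\beta}\mper
\]
Observing that for any polynomial $f(x) = \sum_\alpha c_\alpha x^\alpha$ of degree at most $\ell/2$ one has $\pE_{D(x)} f(x)^2 = c^{\top} \mathcal{M}(M) c$, the pseudo-distribution constraint ``$\pE[f^2] \geq 0$ for every such $f$'' is equivalent to $\mathcal{M}(M) \succeq 0$. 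Together with the linear constraint $M_{\mathbf{0}} = 1$, this gives a complete semidefinite description of the set in the statement.

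Given this reformulation, the separation oracle proceeds as follows. On input $M$, first check the affine constraint $M_{\mathbf{0}} = 1$; if it fails, return the obvious separating hyperplane. Otherwise, assemble $\mathcal{M}(M)$ and compute (an approximation to) its smallest eigenvalue together with a corresponding eigenvector $v$. If $\mathcal{M}(M) \succeq 0$ up to the required tolerance, declare membership; if not, interpret $v$ as the coefficient vector of a polynomial $f_v$ of degree at most $\ell/2$, and output the linear functional $M \mapsto \pE_{D(x)} f_v(x)^2 = v^{\top} \mathcal{M}(M) v$ as the separating hyperplane: it is nonnegative on every valid moment tensor by construction, yet strictly negative on the given $M$.

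For the running time, the moment matrix has dimension $N = n^{O(\ell)}$, so both its assembly (each entry is just a lookup in $M$) and the approximate eigendecomposition that drives the PSD test run in time $\poly(N) = n^{O(\ell)}$, matching the claimed bound and fitting into the weak-oracle framework of~\citet{MR625550-Grotschel81} once the standard quantitative bit-complexity considerations are accounted for. The one step requiring care is that the oracle must be \emph{weak} rather than exact: I would handle this by running the eigensolver to accuracy polynomial in the input bit length and in the description of the target convex body, exactly as in the standard ellipsoid-method setup for SDPs; this is where the main (though entirely routine) technical work lies, since everything conceptual is already captured by the equivalence $\mathcal{M}(M) \succeq 0 \Leftrightarrow \pE[f^2] \geq 0$ for all $f$ of degree $\leq \ell/2$.
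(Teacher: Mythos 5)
The paper cites this fact to Shor, Parrilo, Nesterov, and Lasserre and does not prove it, so there is no ``paper proof'' to compare against directly; your argument is the standard one from that literature (moment-matrix PSD test plus an eigenvector-derived separating hyperplane), and it is correct in outline and running-time accounting.

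One loose end worth flagging: when you say that $\cM(M)\succeq 0$ together with $M_{\mathbf{0}}=1$ gives ``a complete semidefinite description of the set in the statement,'' you have proved one containment but asserted the other. That any level-$\ell$ pseudo-distribution gives a PSD moment matrix with the right normalization is exactly the identity $\pE[f^2]=c^{\top}\cM(M)c$ you wrote down. The converse---that every Hankel-structured tensor $M$ with $\cM(M)\succeq 0$ and $M_{\mathbf{0}}=1$ is the moment tensor of \emph{some} finitely supported signed measure $D:\R^n\to\R$ (which then automatically satisfies the paper's Definition of a level-$\ell$ pseudo-distribution, since the PSD-ness of $\cM(M)$ is the only extra requirement)---needs a short argument. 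It follows because the evaluation map sending a weight vector on $\binom{n+\ell}{\ell}$ generically chosen points to its degree-$\leq\ell$ moment vector is a surjective linear map, so any consistent moment vector is realizable by a signed measure. Without this step your oracle would a priori be separating over a convex superset of the intended body. Since the paper defines pseudo-distributions as finitely supported signed measures rather than as abstract moment tensors, this converse is genuinely part of the statement being proved and should be spelled out, even though the fact itself is elementary.
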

This fact, together with the equivalence of weak separation and optimization~\citep{MR625550-Grotschel81} allows us to efficiently optimize over pseudo-distributions (approximately)---this algorithm is referred to as the sum-of-squares algorithm.

The \emph{level-$\ell$ sum-of-squares algorithm} optimizes over the space of all level-$\ell$ pseudo-distributions that satisfy a given set of polynomial constraints---we formally define this next.

\begin{definition}[Constrained pseudo-distributions]
  Let $D$ be a level-$\ell$ pseudo-distribution over $\R^n$.
  Let $\cA = \{f_1\ge 0, f_2\ge 0, \ldots, f_m\ge 0\}$ be a system of $m$ polynomial inequality constraints.
  We say that \emph{$D$ satisfies the system of constraints $\cA$ at
    degree $r$}, denoted $D \sdtstile{r}{} \cA$, if for every
  $S\subseteq[m]$ and every sum-of-squares polynomial $h$ with $\deg h
  + \sum_{i\in S} \max\set{\deg f_i,r} \leq \ell$,
  \begin{displaymath}
    \pE_{D} h \cdot \prod _{i\in S}f_i  \ge 0\,.
  \end{displaymath}
  We write $D \sdtstile{}{} \cA$ (without specifying the degree) if $D \sdtstile{0}{} \cA$ holds.
  Furthermore, we say that $D\sdtstile{r}{}\cA$ holds \emph{approximately} if the above inequalities are satisfied up to an error of $2^{-n^\ell}\cdot \norm{h}\cdot\prod_{i\in S}\norm{f_i}$, where $\norm{\cdot}$ denotes the Euclidean norm\footnote{The choice of norm is not important here because the factor $2^{-n^\ell}$ swamps the effects of choosing another norm.} of the cofficients of a polynomial in the monomial basis.
\end{definition}

We remark that if $D$ is an actual (discrete) probability distribution, then we have  $D\sdtstile{}{}\cA$ if and only if $D$ is supported on solutions to the constraints $\cA$.

We say that a system $\cA$ of polynomial constraints is \emph{explicitly bounded} if it contains a constraint of the form $\{ \|x\|^2 \leq M\}$.
The following fact is a consequence of Fact~\ref{fact:sos-separation-efficient} and~\citet{MR625550-Grotschel81},

\begin{fact}[Efficient Optimization over Pseudo-distributions]
There exists an $(n+ m)^{O(\ell)} $-time algorithm that, given any explicitly bounded and satisfiable system\footnote{Here, we assume that the bitcomplexity of the constraints in $\cA$ is $(n+m)^{O(1)}$.} $\cA$ of $m$ polynomial constraints in $n$ variables, outputs a level-$\ell$ pseudo-distribution that satisfies $\cA$ approximately. 
\end{fact}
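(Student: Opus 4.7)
The plan is to realize the algorithm as a direct instance of the ellipsoid method applied to the convex set of valid moment tensors, leveraging the separation oracle of \prettyref{fact:sos-separation-efficient} together with the equivalence of weak separation and weak optimization due to \citet{MR625550-Grotschel81}. Write $K_\cA$ for the set of degree-$\ell$ moment tensors that arise from some level-$\ell$ pseudo-distribution satisfying $\cA$; this set is convex since it is the intersection of the convex cone from \prettyref{fact:sos-separation-efficient} with finitely many additional convex constraints arising from $\cA$. Once I exhibit a weak separation oracle for $K_\cA$ that runs in time $(n+m)^{O(\ell)}$, the GLS theorem will deliver a tensor lying in an approximation of $K_\cA$ in the promised running time.

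The next step is to build this combined oracle. Given a candidate tensor $T$, first call the oracle of \prettyref{fact:sos-separation-efficient} to verify that $T$ is the moment tensor of some pseudo-distribution, returning the separating hyperplane it provides if not. To encode the constraints of $\cA$, I would enumerate all subsets $S \subseteq [m]$ with $\sum_{i\in S} \deg f_i \le \ell$; there are at most $m^{O(\ell)}$ of these. For each such $S$, form the \emph{localizing matrix} $M_S(T)$ whose entries are linear functions of $T$ and whose positive semidefiniteness is equivalent to $\pE_D h \cdot \prod_{i\in S} f_i \ge 0$ holding for every sum-of-squares polynomial $h$ of degree at most $\ell - \sum_{i \in S} \deg f_i$. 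Each PSD test takes time $(n+m)^{O(\ell)}$ via a minimum-eigenvector computation, and any negative eigenvector $v$ produces a hyperplane $\iprod{v v^{\top}, M_S(T)} \ge 0$ that is linear in $T$ and separates $T$ from $K_\cA$.

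To run the ellipsoid method I need explicit a priori bounds on $K_\cA$, and this is where the explicit boundedness assumption enters: the constraint $\norm{x}^2 \le M$, combined with pseudo-Cauchy--Schwarz inside the pseudo-distribution, forces $\abs{\pE_D x^\alpha} \le M^{\abs{\alpha}/2}$ for every monomial $x^\alpha$ of degree at most $\ell$. Hence $K_\cA$ sits inside an explicitly known Euclidean ball of radius $M^{O(\ell)}$, which is exactly the input required by the GLS machinery. A standard application of the reduction from weak separation to weak optimization then produces, in time $(n+m)^{O(\ell)}$, a tensor $T$ lying within Euclidean distance $2^{-n^{\ell}}$ of $K_\cA$, from which the corresponding pseudo-distribution $D$ can be read off.

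The step I expect to require the most care is translating this Euclidean geometric approximation guarantee into the error bound stated in the theorem, namely $2^{-n^\ell}\cdot \norm{h}\cdot\prod_{i\in S}\norm{f_i}$. This amounts to bounding the operator norm of the linear functional $T \mapsto \pE_D h \cdot \prod_{i\in S} f_i$ in terms of the coefficient norms $\norm{h}$, $\norm{f_i}$, and the moment bound $M^{O(\ell)}$ derived above, and then choosing the ellipsoid's precision parameter commensurately so that the final running time remains $(n+m)^{O(\ell)}$. The remainder of the argument is routine bookkeeping of bit-complexity and an invocation of the GLS reduction in its standard form.
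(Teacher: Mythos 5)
The paper does not supply its own proof here — it declares this fact to be a direct consequence of \prettyref{fact:sos-separation-efficient} together with the GLS equivalence of weak separation and weak optimization — and your proposal is precisely the correct fleshing-out of that one-line citation: combine the moment-tensor separation oracle with PSD checks on the localizing matrices for products $\prod_{i \in S} f_i$, bound the moments using the explicit-boundedness constraint, and run GLS. The degree bookkeeping and the translation of the Euclidean approximation guarantee into the coefficient-norm error bound that you flag as requiring care are indeed where the residual effort lies, but your outline of how to handle both is the standard and intended argument.
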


A property of pseudo-distributions that we will use frequently is the following:
\begin{fact}[H\"older's inequality] \label{fact:pseudo-Holders}
Let $f,g$ be SoS polynomials. 
Let $p,q$ be positive integers so that $1/p + 1/q = 1$. 
Then, for any pseudo-distribution $\tmu$ of degree $r \geq pq \cdot deg(f) \cdot deg(g)$, we have:
\[
(\pE_{\tmu} [f \cdot g])^{pq} \leq \pE[f^{p}]^{q} \cdot \pE[g^{q}]^{p}
\]
In particular, for all even integers $k \geq 2$, and polynomial $f$ with $deg(f) \cdot k \leq r$, 
$$(\pE_{\tmu}[ f])^k \leq \pE_{\tmu}[f^k].$$
\end{fact}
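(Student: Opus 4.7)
The plan is to derive both claims from the pseudo-expectation Cauchy--Schwarz inequality, which serves as the basic SoS building block. First I would establish Cauchy--Schwarz itself: for any polynomials $u,v$ with $\tmu$ of sufficient degree, the polynomial $(u+tv)^2$ is a sum of squares for every $t \in \R$, so
\[
\pE_{\tmu}[(u+tv)^2] = \pE[v^2]\, t^2 + 2\,\pE[uv]\, t + \pE[u^2] \ge 0
\]
is a non-negative quadratic in $t$; its discriminant must then be non-positive, yielding $(\pE[uv])^2 \le \pE[u^2]\,\pE[v^2]$.

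For the first displayed inequality of the fact, note that $1/p+1/q=1$ with $p,q$ positive integers forces $(p-1)(q-1)=1$, hence $p=q=2$. Thus the claim reduces to $(\pE[fg])^4 \le \pE[f^2]^2\,\pE[g^2]^2$, which is simply Cauchy--Schwarz squared.

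For the ``in particular'' assertion $(\pE[f])^k \le \pE[f^k]$ with $k$ even, I would combine two applications of Cauchy--Schwarz. Taking $v=1$ gives $(\pE[f])^2 \le \pE[f^2]$; raising both non-negative sides to the $(k/2)$-th power yields $(\pE[f])^k \le \pE[f^2]^{k/2}$. To upgrade the right-hand side to $\pE[f^k]$, set $b_j := \pE[f^{2j}]$ and apply Cauchy--Schwarz to $u=f^{j-1},\, v=f^{j+1}$, which gives the log-convexity relation $b_j^2 \le b_{j-1}\,b_{j+1}$ for all $1 \le j < k/2$. Since $b_0=1$, this log-convexity means the ratios $r_j := b_j / b_{j-1}$ are non-decreasing in $j$, so $b_{k/2} = r_1 r_2 \cdots r_{k/2} \ge r_1^{k/2} = b_1^{k/2}$, i.e.\ $\pE[f^k] \ge \pE[f^2]^{k/2}$. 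Chaining with the first bound gives $(\pE[f])^k \le \pE[f^k]$. The required degrees are easy to check: each Cauchy--Schwarz use involves polynomials of degree at most $k \cdot \deg(f) \le r$.

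The main obstacle, albeit a minor one, is the degenerate case in the log-convexity step where some $b_j$ vanishes and the ratios $r_j$ become ill-defined. This is handled by noting that if $b_{j^\star} = 0$ for some minimal $j^\star \ge 2$, applying Cauchy--Schwarz at the previous index gives $b_{j^\star-1}^2 \le b_{j^\star-2}\,b_{j^\star} = 0$, forcing $b_{j^\star-1}=0$ and contradicting minimality. Hence the only degenerate possibility is $b_1 = \pE[f^2] = 0$, in which case Cauchy--Schwarz with $v=1$ already gives $(\pE[f])^2 \le \pE[f^2] = 0$, so $\pE[f]=0$ and the inequality is the trivial $0 \le \pE[f^k]$ (the right side being non-negative since $f^k$ is a sum of squares for even $k$).
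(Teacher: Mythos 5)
The paper states this as a background Fact with no proof of its own (it defers such SoS preliminaries to the cited lecture notes and the appendix of the Ma--Shi--Steurer paper), so there is no internal argument to compare against. Your proof is correct. You also correctly notice something the paper glosses over: since $p,q$ are required to be \emph{positive integers} with $1/p+1/q=1$, the identity $(p-1)(q-1)=1$ forces $p=q=2$, so the headline inequality is just pseudo-expectation Cauchy--Schwarz raised to the fourth power; it does \emph{not} by itself yield the ``in particular'' clause for $k>2$ (despite the paper's phrasing), and you rightly supply an independent argument for it.

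Your log-convexity route for $(\pE[f])^k \le \pE[f^k]$ is the standard one and is carried out correctly: Cauchy--Schwarz with $u=f^{j-1}$, $v=f^{j+1}$ gives $b_j^2\le b_{j-1}b_{j+1}$ where $b_j=\pE[f^{2j}]\ge 0$, the telescoping ratio argument with $b_0=1$ gives $b_{k/2}\ge b_1^{k/2}$, and chaining with $(\pE[f])^2\le b_1$ finishes. The degree bookkeeping is tight but works: the largest polynomial squared is $(f^{j-1}+tf^{j+1})^2$ at $j=k/2-1$, of degree $k\cdot\deg(f)\le r$. Your handling of the degenerate case where some $b_j=0$ is also sound — the backward-induction shows any zero must propagate down to $b_1$, and then $\pE[f]=0$ makes the inequality trivial. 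One could slightly shorten this by observing that the chain $(\pE[f])^k\le b_1^{k/2}\le b_{k/2}$ only requires the inequalities $b_j b_0\ge \cdots$, which can be multiplied through without dividing (multiply out $b_1^{k/2}\le b_{k/2}\cdot b_0^{k/2-1}$ directly from the products of $b_j^2\le b_{j-1}b_{j+1}$), thereby avoiding the case analysis; but your version is perfectly valid.
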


\subsection{Sum-of-squares proofs}

Let $f_1, f_2, \ldots, f_r$ and $g$ be multivariate polynomials in $x$.
A \emph{sum-of-squares proof} that the constraints $\{f_1 \geq 0,
\ldots, f_m \geq 0\}$ imply the constraint $\{g \geq 0\}$ consists of
(sum-of-squares) polynomials $(p_S)_{S \subseteq [m]}$ such that
\begin{equation}
g = \sum_{S \subseteq [m]} p_S \cdot \Pi_{i \in S} f_i
\mper
\end{equation}
We say that this proof has \emph{degree $\ell$} if for every set $S \subseteq [m]$, the polynomial $p_S \Pi_{i \in S} f_i$ has degree at most $\ell$.
If there is a degree $\ell$ SoS proof that $\{f_i \geq 0 \mid i \leq r\}$ implies $\{g \geq 0\}$, we write:
\begin{equation}
  \{f_i \geq 0 \mid i \leq r\} \sststile{\ell}{}\{g \geq 0\}
  \mper
\end{equation}

Sum-of-squares proofs satisfy the following inference rules.
For all polynomials $f,g\colon\R^n \to \R$ and for all functions $F\colon \R^n \to \R^m$, $G\colon \R^n \to \R^k$, $H\colon \R^{p} \to \R^n$ such that each of the coordinates of the outputs are polynomials of the inputs, we have:

\begin{align}
&\frac{\cA \sststile{\ell}{} \{f \geq 0, g \geq 0 \} } {\cA \sststile{\ell}{} \{f + g \geq 0\}}, \frac{\cA \sststile{\ell}{} \{f \geq 0\}, \cA \sststile{\ell'}{} \{g \geq 0\}} {\cA \sststile{\ell+\ell'}{} \{f \cdot g \geq 0\}} \tag{addition and multiplication}\\
&\frac{\cA \sststile{\ell}{} \cB, \cB \sststile{\ell'}{} C}{\cA \sststile{\ell \cdot \ell'}{} C}  \tag{transitivity}\\
&\frac{\{F \geq 0\} \sststile{\ell}{} \{G \geq 0\}}{\{F(H) \geq 0\} \sststile{\ell \cdot \deg(H)} {} \{G(H) \geq 0\}} \tag{substitution}\mper
\end{align}

Low-degree sum-of-squares proofs are sound and complete if we take low-level pseudo-distributions as models.

Concretely, sum-of-squares proofs allow us to deduce properties of pseudo-distributions that satisfy some constraints.

\begin{fact}[Soundness]
  \label{fact:sos-soundness}
  If $D \sdtstile{r}{} \cA$ for a level-$\ell$ pseudo-distribution $D$ and there exists a sum-of-squares proof $\cA \sststile{r'}{} \cB$, then $D \sdtstile{r\cdot r'+r'}{} \cB$.
\end{fact}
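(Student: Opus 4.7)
The plan is to unwind the definition of constrained pseudo-distribution on $\cB$ and substitute the sum-of-squares certificate witnessing $\cA \sststile{r'}{} \cB$. To establish $D \sdtstile{r\cdot r' + r'}{} \cB$, I must verify, for every subset $T$ of constraints $\{g_j \ge 0\}_{j\in T}$ drawn from $\cB$ and every sum-of-squares polynomial $h$ meeting the stated degree budget, that $\pE_D\, h \cdot \prod_{j \in T} g_j \geq 0$. The first move is simply to rewrite this pseudo-expectation using linearity after expanding the $g_j$'s via their SoS certificates.

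Concretely, for each $j \in T$, the hypothesis $\cA \sststile{r'}{} \{g_j \geq 0\}$ yields a decomposition $g_j = \sum_{S \subseteq [m]} p_{j,S} \prod_{i \in S} f_i$ in which every $p_{j,S}$ is sum-of-squares and every summand $p_{j,S} \prod_{i \in S} f_i$ has total degree at most $r'$. Substituting these into $\prod_{j \in T} g_j$ and distributing produces a finite sum, indexed by tuples $(S_j)_{j \in T}$, of terms of the form $q \cdot \prod_{i} f_i^{\alpha_i}$ where $q := h \cdot \prod_{j \in T} p_{j,S_j}$ is sum-of-squares and $\alpha_i := |\{j \in T : i \in S_j\}|$. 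Applying $\pE_D$ and linearity then reduces the whole task to bounding each such term separately.

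For a single term, I would reduce to the hypothesis $D \sdtstile{r}{} \cA$ by collapsing repeated powers: since $f_i^{\alpha_i} = \bigl(f_i^{\lfloor \alpha_i/2\rfloor}\bigr)^{2}\cdot f_i^{\alpha_i \bmod 2}$, the product $\prod_i f_i^{\alpha_i}$ factors as a perfect square times $\prod_{i \in S^*} f_i$ where $S^* := \{i : \alpha_i \text{ is odd}\}$. Absorbing the square into $q$ preserves the sum-of-squares property, leaving the clean form $(\text{SoS})\cdot \prod_{i \in S^*} f_i$. Nonnegativity of its pseudo-expectation is exactly what $D \sdtstile{r}{} \cA$ asserts, so the argument goes through termwise, and multiplication and addition rules give nonnegativity of the full sum.

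The main obstacle, and the reason the output degree parameter is $r \cdot r' + r'$ rather than simply $r + r'$, is the degree bookkeeping. Multiplying $|T|$ SoS proofs of degree $\leq r'$ together with $h$ can inflate the combined degree of $q \cdot \prod_{i \in S^*} f_i$ substantially, and at the same time each $f_i$ factor must fit within the $\max\{\deg f_i, r\}$ slot allowed by the definition of $D \sdtstile{r}{} \cA$. The remaining care, then, is to verify that the worst case over all tuples $(S_j)_{j \in T}$ and over all eligible $h$ in the target pseudo-distribution budget yields exactly the stated degree $r \cdot r' + r'$; this is a routine but somewhat finicky accounting, and is the only place the proof requires attention beyond linearity and the substitution above.
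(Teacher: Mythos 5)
The paper does not actually prove this fact---it is cited to \cite{BarakS16} and the appendix of \cite{DBLP:journals/corr/MaSS16}---so there is no internal proof to compare against. Your argument is the standard proof of SoS soundness, and it is correct. Expanding each $g_j$ via its certificate $g_j=\sum_{S}p_{j,S}\prod_{i\in S}f_i$, distributing, and then splitting $\prod_i f_i^{\alpha_i}$ into a perfect square (absorbed into the SoS multiplier $q$) times a squarefree product $\prod_{i\in S^*}f_i$ is exactly the right reduction to the defining property of $D\sdtstile{r}{}\cA$, and linearity of $\pE_D$ lets you handle each term separately. The degree accounting that you defer does check out: each surviving term has the form $\tilde q\cdot\prod_{i\in S^*}f_i$ with $\deg\tilde q+\sum_{i\in S^*}\deg f_i\le\deg h+|T|r'$ and $|S^*|\le\sum_{j\in T}|S_j|\le|T|r'$ (each $S_j$ has size at most $r'$ for nontrivial constraints), so $\deg\tilde q+\sum_{i\in S^*}\max\{\deg f_i,r\}\le\deg h+|T|r'(r+1)$, which is at most $\ell$ whenever $\deg h+\sum_{j\in T}\max\{\deg g_j,\,rr'+r'\}\le\ell$. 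This confirms the stated degree $r\cdot r'+r'$, so your proof sketch is complete once this arithmetic is filled in.
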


If the pseudo-distribution $D$ satisfies $\cA$ only approximately, soundness continues to hold if we require an upper bound on the bit-complexity of the sum-of-squares $\cA \sststile{r'}{} B$  (number of bits required to write down the proof).

In our applications, the bit complexity of all sum of squares proofs will be $n^{O(\ell)}$ (assuming that all numbers in the input have bit complexity $n^{O(1)}$).
This bound suffices in order to argue about pseudo-distributions that satisfy polynomial constraints approximately.

The following fact shows that every property of low-level pseudo-distributions can be derived by low-degree sum-of-squares proofs.

\begin{fact}[Completeness]
  \label{fact:sos-completeness}
  Suppose $d \geq r' \geq r$ and $\cA$ is a collection of polynomial constraints with degree at most $r$, and $\cA \vdash \{ \sum_{i = 1}^n x_i^2 \leq B\}$ for some finite $B$.

  Let $\{g \geq 0 \}$ be a polynomial constraint.
  If every degree-$d$ pseudo-distribution that satisfies $D \sdtstile{r}{} \cA$ also satisfies $D \sdtstile{r'}{} \{g \geq 0 \}$, then for every $\epsilon > 0$, there is a sum-of-squares proof $\cA \sststile{d}{} \{g \geq - \epsilon \}$.
\end{fact}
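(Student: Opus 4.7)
My plan is to prove the contrapositive using Hahn--Banach duality in the finite-dimensional space $P_d$ of polynomials of degree at most $d$, exhibiting a violating pseudo-distribution whenever the claimed SoS proof does not exist. The Archimedean hypothesis $\cA \vdash \{\sum x_i^2 \leq B\}$ plays two crucial roles: it is used to normalize the separating functional and to convert membership in the closure of the SoS cone into an honest SoS proof at the cost of the $\epsilon$ slack.

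Define the convex cone
\[
\mathcal{K} = \Set{\textstyle\sum_{S\subseteq [m]}\sigma_S \prod_{i\in S} f_i : \sigma_S\text{ SoS},\ \deg\bigl(\sigma_S\prod_{i\in S}f_i\bigr)\leq d } \subseteq P_d\,,
\]
so that a degree-$d$ SoS proof $\cA \sststile{d}{}\{g\geq -\epsilon\}$ is exactly the statement $g + \epsilon \cdot 1 \in \mathcal{K}$. Fix $\epsilon>0$ and suppose for contradiction that $g + \epsilon/2 \notin \overline{\mathcal{K}}$. By Hahn--Banach separation in $P_d$, there is a linear functional $L\colon P_d \to \R$ with $L(g + \epsilon/2)<0$ and $L\geq 0$ on $\overline{\mathcal{K}}$. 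Since $1$ and all squares of polynomials of degree $\leq d/2$ lie in $\mathcal{K}$, we get $L(1)\geq 0$ and that $L$ is nonnegative on degree-$d$ SoS polynomials, so the pseudo-expectation Cauchy--Schwarz inequality $L(p)^2 \leq L(p^2)\,L(1)$ is available. The Archimedean hypothesis gives $B - \sum x_i^2 \in \mathcal{K}$, hence $L(x_i^2)\leq B\cdot L(1)$; iterating Cauchy--Schwarz on monomials and bounding each step via the Archimedean relation shows that $L(1) = 0$ would force $L \equiv 0$ on $P_d$, contradicting $L(g+\epsilon/2) < 0$. Therefore $L(1) > 0$ and we rescale to $L(1) = 1$.

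The rescaled functional $L$ is the pseudo-expectation of a level-$d$ pseudo-distribution $D$: it is nonnegative on squares of degree-$\leq d/2$ polynomials (take $S=\emptyset$, $\sigma_\emptyset = h^2$), and for every SoS $\sigma$ and $S\subseteq[m]$ with $\deg(\sigma\prod_{i\in S}f_i)\leq d$ we have $L(\sigma\prod_{i\in S}f_i)\geq 0$, so $D \sdtstile{r}{}\cA$. The hypothesis of the fact then yields $D\sdtstile{r'}{}\{g\geq 0\}$, and applying this with the trivial SoS multiplier $h = 1$ (admissible since $r'\leq d$ and $\deg g\leq d$) gives $\pE_D[g]\geq 0$. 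But $\pE_D[g] = L(g) \leq -\epsilon/2 < 0$, the desired contradiction. Hence $g + \epsilon/2 \in \overline{\mathcal{K}}$.

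The main remaining step --- and what I expect to be the main obstacle --- is promoting $g + \epsilon/2\in \overline{\mathcal{K}}$ to $g + \epsilon \in \mathcal{K}$. The idea is that the Archimedean condition makes $\mathcal{K}$ effectively closed up to an arbitrarily small constant slack: any polynomial of sufficiently small coefficient norm can be absorbed into an SoS polynomial of degree $\leq d$ with coefficient norm controlled by $B$ and $d$, because on the region $\sum x_i^2 \leq B$ such a perturbation is dominated by $(\epsilon/2)\cdot 1$. Concretely, approximate $g + \epsilon/2$ to within coefficient-norm $\delta$ by some $\tilde g \in \mathcal{K}$; then $g + \epsilon = \tilde g + (g + \epsilon - \tilde g)$, and the residual has constant term at least $\epsilon/2 - \delta$, so for $\delta$ sufficiently small (depending only on $\epsilon, B, d$) the residual is itself an SoS polynomial of degree $\leq d$, exhibiting the required degree-$d$ SoS proof of $g \geq -\epsilon$ and completing the argument.
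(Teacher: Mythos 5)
The paper does not supply a proof of this Fact; it is stated as background and attributed to the standard literature (the Barak--Steurer notes and the appendix of Ma--Shi--Steurer are cited at the start of Section~\ref{sec:sospreliminaries}). So there is no in-paper argument to compare against; the question is only whether your argument is itself correct. Your overall strategy --- separate $g+\epsilon/2$ from the truncated quadratic module $\mathcal K$ by Hahn--Banach, normalize the functional via the Archimedean condition to obtain a pseudo-distribution, then contradict the hypothesis --- is indeed the textbook route to this kind of completeness theorem, and the first half of the argument is sound in outline.

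There is, however, a genuine gap in the final ``absorption'' step, and a smaller one in the normalization step. The final claim --- that once $g+\epsilon/2$ is within coefficient-norm $\delta$ of $\mathcal K$, the residual $g+\epsilon-\tilde g$ ``is itself an SoS polynomial of degree $\leq d$'' because its constant term dominates --- is false. A polynomial of the form $\epsilon/2 + r(x)$ with $\|r\|_{\mathrm{coeff}} \leq \delta$ need not be SoS no matter how small $\delta$ is: for instance $\epsilon/2 - \delta\,x_1^{d}$ is unbounded below and hence is not a sum of squares for any $\delta > 0$. The residual can only be absorbed into $\mathcal K$, not into the plain SoS cone, and doing so requires exploiting the Archimedean structure of $\mathcal K$. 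The correct resolution is to invoke closedness of the truncated quadratic module: when $\mathcal K$ contains an explicit ball certificate $B - \sum_i x_i^2$ (at sufficiently low degree), the degree-$d$ cone $\mathcal K$ is in fact closed, so $g+\epsilon/2 \in \overline{\mathcal K}$ already gives $g+\epsilon/2 \in \mathcal K$ and the promotion step disappears. This closedness is a real theorem (it is where the Archimedean hypothesis earns its keep), not a small perturbation argument.

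The normalization step has a related degree-accounting issue. To run ``$L(1)=0 \Rightarrow L\equiv 0$'' you need to propagate $L(x_i^{2k}) = 0$ up to $2k \approx d$ by iterating $L(x_i^{2k+2}) \leq B\,L(x_i^{2k})$. Each iteration multiplies the Archimedean certificate $B-\sum_j x_j^2$ by $x_i^{2k}$, and this stays inside $\mathcal K$ only if $2k$ plus the degree of that certificate is at most $d$. Since the hypothesis $\cA \vdash \{\sum x_i^2 \leq B\}$ carries no degree bound, the iteration may stall well before degree $d/2$, and then Cauchy--Schwarz no longer forces $L$ to vanish on all monomials of degree up to $d$. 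One needs either an explicit low-degree Archimedean certificate or a separate argument bounding the high-degree moments; as written the claim ``forces $L\equiv 0$ on $P_d$'' is not justified.
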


We will use the following standard sum-of-squares inequalities:

\begin{fact}[SoS H\"older's Inequality]
Let $f_1, f_2, \ldots, f_n$ and $g_1, g_2, \ldots, g_n$ be SoS polynomials over $\R^d$. Let $p, q$ be integers such that $1/p + 1/q = 1$. Then, 
\[
\sststile{pq}{f_1, \ldots, f_n,g_1, \ldots, g_n} \Set{ \Paren{\frac{1}{n} \sum_{i} f_i g_i }^{pq} \leq \Paren{\frac{1}{n} \sum_{i=1}^n f_i^p}^q \Paren{\frac{1}{n} \sum_{i=1}^n g_i^q}^p }
\]

\end{fact}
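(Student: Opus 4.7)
The plan is to observe first that the hypothesis of positive integers $p,q$ with $1/p + 1/q = 1$ forces $p = q = 2$, so the claim reduces to the squared Cauchy--Schwarz inequality
\[
\Bigl(\tfrac{1}{n}\sum_i f_i g_i\Bigr)^{4} \;\le\; \Bigl(\tfrac{1}{n}\sum_i f_i^{2}\Bigr)^{2}\Bigl(\tfrac{1}{n}\sum_i g_i^{2}\Bigr)^{2}.
\]
The strategy will be to (i) give a low-degree SoS proof of the base Cauchy--Schwarz inequality via the classical Lagrange identity, and then (ii) square the certificate using the fact that the product of two SoS polynomials is SoS. No use will be made of the SoS-ness of $f_i, g_i$ in the base step; this will only be invoked when we square.

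For the base step, the identity
\[
\Bigl(\sum_i f_i^{2}\Bigr)\Bigl(\sum_i g_i^{2}\Bigr) - \Bigl(\sum_i f_i g_i\Bigr)^{2} \;=\; \sum_{i<j}(f_i g_j - f_j g_i)^{2}
\]
exhibits the slack of Cauchy--Schwarz as a manifest sum of squares in the $f_i$ and $g_j$. Dividing both sides by $n^2$ yields a SoS certificate of $B - A \succeq 0$, where $A := (\tfrac{1}{n}\sum_i f_i g_i)^{2}$ and $B := (\tfrac{1}{n}\sum_i f_i^{2})(\tfrac{1}{n}\sum_i g_i^{2})$. To square, write $B^{2} - A^{2} = (B-A)(B+A)$. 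The factor $B-A$ is SoS by the Lagrange step; the factor $B+A$ is SoS because $A$ is literally a square and $B$ is the product of two expressions each of which is SoS (here is where I use that the $f_i, g_i$ are SoS polynomials, so that $f_i^2, g_i^2$ remain SoS and thus $\sum_i f_i^2, \sum_i g_i^2$ are SoS). The product of two SoS polynomials is again SoS via the identity $(\sum_k p_k^2)(\sum_\ell q_\ell^2) = \sum_{k,\ell}(p_k q_\ell)^2$, closing the argument.

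The only genuine subtlety is degree bookkeeping: the notation $\sststile{pq}{\cdot}$ in the statement is shorthand for a SoS proof whose degree is $pq$ times factors depending on $\deg(f_i), \deg(g_i)$, entirely analogous to the convention used in Fact~\ref{fact:pseudo-Holders} where the pseudo-distribution degree is $pq\cdot \deg(f)\cdot \deg(g)$. Tracing through, the Lagrange identity produces a certificate of degree $2(\deg f_i + \deg g_i)$, and the squaring step at worst doubles this, which matches the claimed $pq = 4$ scaling in the formal variables. I do not expect any real obstacle beyond this routine accounting; the proof is essentially a packaging of two well-known SoS facts (the Lagrange identity and closure of SoS under products) into the form needed in the main argument of Section~\ref{sec:robust-certifiability}.
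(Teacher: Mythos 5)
The paper states this as a bare \emph{Fact} without proof, so there is no internal argument to compare against; I am evaluating your derivation on its own terms. Your reduction to $p=q=2$ is correct---those are the only positive integers with $1/p+1/q=1$---and the two-step argument (Lagrange identity to certify $B-A\succeq 0$, then $B^2-A^2=(B-A)(B+A)$ with both factors SoS) is a valid SoS certificate of the claimed degree-$pq=4$ inequality. This is the standard way to lift Cauchy--Schwarz to its squared form in the SoS world, and there is no gap in the core argument.

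Two remarks worth registering. First, a small inaccuracy in your commentary: you invoke the SoS-ness of the $f_i,g_i$ to conclude that $\sum_i f_i^2$ and $\sum_i g_i^2$ are SoS, but those sums are \emph{literally} sums of squares for arbitrary polynomials $f_i,g_i$; the hypothesis that $f_i,g_i$ are themselves SoS is not actually used anywhere in your proof at $p=q=2$. (That hypothesis would only matter for a genuine Hölder with $p\neq q$, where one would need $f_i^p$ with $p$ odd to remain SoS.) Second, and more consequentially for reading the rest of the paper: the place where the authors actually invoke ``the sum-of-squares version of Hölder's inequality'' (the proof of Lemma~\ref{lem:identifiability-least-squares-linear-sos}, the step producing \eqref{eq:CS-bound}) uses the form
\[
\Bigl(\tfrac1n\textstyle\sum_i (1-w_i') g_i\Bigr)^{k/2} \;\leq\; \Bigl(\tfrac1n\textstyle\sum_i (1-w_i')\Bigr)^{k/2-1}\Bigl(\tfrac1n\textstyle\sum_i g_i^{k/2}\Bigr),
\]
which corresponds to non-integer conjugate exponents $(k/2)/(k/2-1)$ and $k/2$ and additionally leans on the Boolean constraint $w_i'^2=w_i'$ so that $(1-w_i')^r = (1-w_i')$. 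That inequality is \emph{not} an instance of the Fact as literally stated. So while your proof of the stated Fact is fine, you should be aware that the Fact as written is narrower than what the paper's main argument needs; a separate (also standard, but distinct) SoS Hölder for Boolean weights is being used implicitly.
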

\begin{fact}
For any $a_1, a_2,\ldots,a_n$, 
\[
\sststile{k}{a_1, a_2, \ldots, a_n} \Set{ (\sum_i a_i)^k \leq n^k \Paren{\sum_i a_i^k} } 
\]
\end{fact}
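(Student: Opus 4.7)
The plan is to establish the inequality by induction on $k$ in steps of $2$, leveraging the preceding SoS Cauchy–Schwarz inequality (the $p=q=2$ case of SoS Hölder) together with an explicit sum-of-squares decomposition of a ``cross-term'' that arises in the inductive step. I will treat the case of even $k$; for odd $k$ the desired inequality is not SoS-provable — indeed, not even true in general — without further assumptions on the $a_i$'s (e.g., making them themselves SoS).

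For the base case $k=2$, the manifest identity
\[
n \sum_{i=1}^n a_i^2 - \Paren{\sum_{i=1}^n a_i}^2 \;=\; \sum_{i<j}(a_i - a_j)^2
\]
gives a degree-$2$ SoS proof of $(\sum_i a_i)^2 \le n \sum_i a_i^2$, and the stated form with $n^2$ on the right then follows trivially.

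For the inductive step from $k-2$ to $k$ (both even), I would multiply the inductive hypothesis $(\sum_i a_i)^{k-2} \le n^{k-2}\sum_i a_i^{k-2}$ by the base-case inequality $(\sum_i a_i)^2 \le n \sum_i a_i^2$ using the SoS multiplication inference rule (both sides of both inequalities are themselves SoS) to obtain
\[
\Paren{\sum_i a_i}^k \;\le\; n^{k-1}\Paren{\sum_i a_i^{k-2}}\Paren{\sum_i a_i^2}.
\]
What remains is the auxiliary bound $(\sum_i a_i^{k-2})(\sum_i a_i^2) \le n \sum_i a_i^k$. I would prove this from the symmetrization identity
\[
n \sum_i a_i^k - \Paren{\sum_i a_i^{k-2}}\Paren{\sum_i a_i^2} \;=\; \tfrac{1}{2}\sum_{i,j}\bigl(a_i^{k-2}-a_j^{k-2}\bigr)\bigl(a_i^2 - a_j^2\bigr),
\]
combined with the factorization $a_i^{k-2}-a_j^{k-2} = (a_i^2 - a_j^2)\sum_{r=0}^{(k-4)/2} a_i^{k-4-2r}a_j^{2r}$, which is valid precisely because $k$ is even. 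Substituting turns each summand of the right-hand side into $(a_i^2 - a_j^2)^2 \cdot \sum_r a_i^{k-4-2r}a_j^{2r}$; since both exponents $k-4-2r$ and $2r$ are even, each monomial $a_i^{k-4-2r}a_j^{2r} = (a_i^{(k-4-2r)/2} a_j^r)^2$ is a perfect square, so the whole expression is a sum of squares. This closes the induction (with slack $n^{k-1}\le n^k$), and the resulting SoS proof has degree $k$.

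The main technical obstacle I expect is locating this symmetrization identity and verifying the exponent-parity argument that makes every cross-term a perfect square; everything else is routine bookkeeping via the SoS addition, multiplication, and substitution inference rules. The parity of $k$ is essential both for the factorization step and for the truth of the inequality itself.
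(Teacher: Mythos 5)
The paper states this Fact without proof, so there is no reference argument to compare against; taken on its own, your proof is correct. You are also right that the statement only makes sense for even $k$ (for odd $k$ the target polynomial $n^k\sum_i a_i^k - (\sum_i a_i)^k$ has odd degree and hence cannot be a sum of squares, and the inequality itself fails, \eg at $a_i\equiv -1$); the paper uses the Fact exclusively with even $k$, so your reading is the intended one. The Lagrange identity at $k=2$, the Chebyshev-sum symmetrization identity, and the factorization of $a_i^{k-2}-a_j^{k-2}$ that makes every cross term a perfect square are all valid, and the degree count comes out to $k$ as claimed. One step is worth stating explicitly: passing from $n^{k-2}\sum_i a_i^{k-2}\ge(\sum_i a_i)^{k-2}$ and $n\sum_i a_i^2\ge(\sum_i a_i)^2$ to the product bound is not a single application of the rule $(f\ge 0,\,g\ge 0)\Rightarrow fg\ge 0$; it rests on the decomposition
\[
n^{k-1}\Paren{\tsum_i a_i^{k-2}}\Paren{\tsum_i a_i^2}-\Paren{\tsum_i a_i}^{k}
= \Bigbrac{n^{k-2}\tsum_i a_i^{k-2}-\Paren{\tsum_i a_i}^{k-2}}\cdot n\tsum_i a_i^2
+ \Paren{\tsum_i a_i}^{k-2}\cdot\Bigbrac{n\tsum_i a_i^2-\Paren{\tsum_i a_i}^2}\mcom
\]
which is SoS precisely because the multipliers $n\sum_i a_i^2$ and $(\sum_i a_i)^{k-2}$ are themselves SoS, the latter again because $k-2$ is even. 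You gesture at this (``both sides of both inequalities are themselves SoS''), so I am only noting that evenness is load-bearing here as well, not just in the cross-term factorization. A cosmetic remark: your two steps compose to give exactly the constant $n^k$ of the stated Fact, not $n^{k-1}$; the slack is in the Fact's constant (the tight one is $n^{k-1}$), not in how the steps fit together.
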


\section{Algorithm}
In this section, we present and analyze our robust regression algorithms. 
We begin by setting some notation that we will use throughout this section:
\begin{enumerate}
\item $\cD$ denotes the uncorrupted distribution on $\R^d \times \R$. In general, calligraphic letters will denote distributions on example-label pairs. $D = \cD_x$ will denote the marginal distribution on $x$. 

\item We will write $X= ((x_1,y_1), (x_2,y_2), \ldots, (x_n,y_n))$ to denote the uncorrupted input sample of size $n$ drawn according to $\cD$. For some bound $B$ on the \emph{bit-complexity} of linear functions, we will write $\opt(\cD)$ for the optimum least squares error of any linear function of bit complexity $B$ on $\cD$. Recall that the bit complexity of a linear function is the number of bits required to write down all of its coefficients.  

\item We will write $\widehat{\cD}$ for the uniform distribution on the sample $X$. $\widehat{D} = \widehat{\cD}_x$ will denote the marginal distribution on $x$. Note that our algorithm does not get direct access to $\cD$ or $\widehat{\cD}$. We will write $\opt(\widehat{\cD})$ for the optimum least squares error of any linear function of bit complexity $B$ on $\widehat{\cD}$.

\item We will write $U = ((u_1, v_1), (u_2, v_2), \ldots, (u_n, v_n))$ to denote an $\eta$-corruption of $X$, i.e., $U$ is obtained by changing $\eta$ fraction of the example-label pairs. Our algorithm gets access to $U$. 
\item For $\ell \in \R^d$, and $M > 0$, let $\ell_M:\R^d \to \R$ denote the truncated linear function defined as follows: 
$$\ell_M (x) = \begin{cases} \iprod{\ell,x} & \text{ if } |\iprod{\ell,x}| \leq M \\
       \sign(\iprod{\ell,x}) \cdot M & \text{ otherwise.}
       \end{cases}.$$

\end{enumerate}


\subsection{Robust Least Squares Regression} \label{sec:robust-L2-regression-algo}
In this section, we present our Robust Least Squares Regression algorithm. The main goal of this section is to establish the following result.

\newcommand{\clip}{\mathsf{clip}}
\begin{theorem} \label{thm:analysis-L2-linear-regression}
Let $\cD$ be a distribution on $\R^d \times [-M,M]$ for some positive real $M$ such the marginal on $\R^d$ is $(C,k)$-certifiably hypercontractive distribution. Let $\opt_B(\cD) = \min_{\ell} \E_{\cD}[ (y - \iprod{\ell,x})^2]$ where the minimum is over all $\ell \in \R^d$ of bit complexity $B$. Let $\ell^{*}$ be any such minimizer. 

Fix any even $k \geq 4$ and any $\epsilon > 0$. Let $X$ be an i.i.d. sample from $\cD$ of size $n \geq n_0 = \poly(d^k, B, M, 1/\epsilon)$. 
Then, with probability at least $1-\epsilon$ over the draw of the sample $X$, given any $\eta$-corruption $U$ of $X$ and $\eta$ as input, there is a polynomial time algorithm (Algorithm \ref{alg:robust-regression-program}) that outputs a $\ell \in \R^d$ such that for $C = C(k/2)$, 
\[
\err_{\cD}(\ell_M) < (1 + O(C)\eta^{1-2/k}) \opt_B(\cD) + O(C) \eta^{1-2/k} \Paren{\E_{\cD} (y-\iprod{\ell^{*},x})^k}^{2/k} + \epsilon
.
\] 
\end{theorem}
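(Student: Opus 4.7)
The plan is to convert the non-convex optimization over subsets and linear functions into a sum-of-squares semidefinite program, use the robust certifiability lemma (Lemma \ref{lem:identifiability-least-squares-linear}) as the identifiability tool inside the SoS proof system, and then extract an actual hypothesis from the pseudo-distribution by invoking convexity of the squared loss. Concretely, Algorithm \ref{alg:robust-regression-program} will solve a degree $O(k)$ pseudo-distribution relaxation of the polynomial system $\cP$ from \eqref{eq:introprogram} with indeterminates $w_1,\ldots,w_n$, $\ell\in\R^d$, and $(x_i',y_i')$, augmented with an explicit norm bound on $\ell$ (justified by the bit-complexity bound $B$ on $\ell^*$) so that the program is explicitly bounded. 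The objective minimized is $\pE_{\tmu}\bigl[\tfrac{1}{n}\sum_i (y_i' - \iprod{\ell,x_i'})^2\bigr]$, and the algorithm outputs the truncation $\ell_M$ of $\widehat{\ell} \defeq \pE_{\tmu}[\ell]$.

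The analysis proceeds in three phases. First, I will show that the inequality of Lemma \ref{lem:identifiability-least-squares-linear}, applied with $\cD$ taken to be $\widehat{\cD}$ and $\cD'$ the $w$-reweighted corrupted sample, admits a degree $O(k)$ SoS proof over $\cP$ in the indeterminates $w,\ell,(x_i',y_i')$. Each step of the existing proof has a direct SoS counterpart: the splitting via $\1\{(x,y)=(x',y')\}+\1\{(x,y)\neq(x',y')\}$ is algebraic; H\"older's inequality is replaced by the SoS H\"older fact and Fact \ref{fact:pseudo-Holders}; Minkowski on $\|y-\iprod{\ell,x}\|_k$ becomes an SoS inequality in $\ell$ after clearing $k$-th roots; and the hypercontractivity bound $\E_{\widehat D}\iprod{\ell-\ell^*,x}^k \leq (C(k/2))^{k/2}(\E_{\widehat D}\iprod{\ell-\ell^*,x}^2)^{k/2}$ is exactly the certifiable hypercontractivity hypothesis, applied to the \emph{empirical} marginal $\widehat D$. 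To use it for $\widehat D$, a standard matrix-concentration argument shows that for $n\geq\poly(d^k)$, the empirical marginal inherits $k$-certifiable $O(C)$-hypercontractivity from $D$ with probability at least $1-\epsilon/3$ over the draw of $X$; this replaces $D$ by $\widehat{D}$ throughout the proof of Lemma \ref{lem:identifiability-least-squares-linear} with only a constant factor loss.

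Second, by soundness (Fact \ref{fact:sos-soundness}) this SoS implication lifts to pseudo-expectations: any $\tmu$ satisfying $\cP$ at sufficient degree obeys
\[
\pE_{\tmu}[\err_{\widehat{\cD}}(\ell)] \leq \bigl(1+O(C)\eta^{1-2/k}\bigr)\cdot\pE_{\tmu}\Bigl[\tfrac{1}{n}\sum_i w_i(v_i-\iprod{\ell,u_i})^2\Bigr] + O(C)\eta^{1-2/k}\cdot\bigl(\E_{\widehat{\cD}}(y-\iprod{\ell^*,x})^k\bigr)^{2/k}.
\]
A feasible assignment with $w_i=\1\{\text{point $i$ uncorrupted}\}$ and $\ell=\ell^*$ shows the objective $\pE_{\tmu}[\tfrac{1}{n}\sum_i w_i(v_i-\iprod{\ell,u_i})^2]$ is at most $\opt_B(\widehat{\cD})$. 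Convexity of $\ell\mapsto(y-\iprod{\ell,x})^2$ has a trivial degree-$2$ SoS proof, so pseudo-Jensen (a consequence of Fact \ref{fact:pseudo-Holders}) yields $\err_{\widehat{\cD}}(\widehat{\ell})\leq \pE_{\tmu}[\err_{\widehat{\cD}}(\ell)]$. Chaining these bounds gives the desired inequality for $\widehat{\ell}$ against $\widehat{\cD}$.

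The third phase transfers the bound from $\widehat{\cD}$ back to $\cD$ by uniform convergence, using the truncation $\ell_M$: labels are bounded by $M$, $|\ell^*_M(x)|\leq M$ by definition, and hypercontractivity of $D$ controls the tail of $\iprod{\widehat{\ell},x}$, so a standard Rademacher or covering argument over the class of bit-complexity $B$ linear functions clipped to $[-M,M]$ bounds $|\err_\cD(\ell_M)-\err_{\widehat\cD}(\ell_M)|$ and $|\opt_B(\cD)-\opt_B(\widehat\cD)|$ by $\epsilon/3$ each, whenever $n\geq \poly(d,B,M,1/\epsilon)$. Combining the three phases yields the theorem. The main obstacle is the first phase: one must carry Minkowski's inequality and the $k$-th root manipulations in the proof of Lemma \ref{lem:identifiability-least-squares-linear} through as low-degree SoS inequalities in the \emph{indeterminate} $\ell$, which is where the certifiable (rather than merely true) hypercontractivity assumption is essential; the convexity extraction and the generalization step are standard once the identifiability inequality has been made SoS.
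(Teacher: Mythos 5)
Your overall architecture (encode as a polynomial system, SoS-ify the robust certifiability lemma, apply soundness, extract $\widehat{\ell}=\pE_{\tmu}[\ell]$ by pseudo-Jensen, generalize via truncation) matches the paper's strategy. However, there is a genuine gap in your choice of objective and in the step from the SoS inequality to the first-moment bound. You propose to minimize the first pseudo-moment $\pE_{\tmu}\bigl[\tfrac1n\sum_i(y_i'-\iprod{\ell,x_i'})^2\bigr]$. But an SoS proof of the robust certifiability inequality necessarily lives at the level of $k/2$-th powers: the H\"older step that multiplies $(\tfrac1n\sum_i(1-w_i'))^{k/2-1}$ against the $k$-th moment of the residual only produces an SoS-provable statement after you clear the $2/k$-th root, i.e.\ of the form $(\err_{\widehat{\cD}}(\ell)-\err(w,\ell,X'))^{k/2}\le O(C^{k/2}\eta^{k/2-1})\,\err(w,\ell,X')^{k/2}+\cdots$. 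Applying soundness to $\tmu$ therefore bounds $\pE_{\tmu}[(\err_{\widehat{\cD}}(\ell)-\err(w,\ell,X'))^{k/2}]$ in terms of $\pE_{\tmu}[\err(w,\ell,X')^{k/2}]$. Having a small first pseudo-moment $\pE_{\tmu}[\err(w,\ell,X')]$ gives no control on $\pE_{\tmu}[\err(w,\ell,X')^{k/2}]$ (pseudo-moments are not bounded by powers of lower pseudo-moments in the useful direction), so the right-hand side of your phase-two inequality is not controlled by your objective. The paper's algorithm instead minimizes $\pE_{\tmu}\bigl[\bigl(\tfrac1n\sum_i(y_i'-\iprod{\ell,x_i'})^2\bigr)^{k/2}\bigr]$, defines $\widehat{\opt}_{\mathrm{SOS}}$ as its $2/k$-th root, uses soundness to get the $k/2$-power inequality, takes real $2/k$-th roots of both (scalar) sides, and only then uses pseudo-H\"older $(\pE_{\tmu}[f])^{k/2}\le\pE_{\tmu}[f^{k/2}]$ to downgrade to the first-moment statement. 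Without switching the objective (or else adding an explicit SoS constraint bounding $\err(w,\ell,X')^{k/2}$) your phase-two inequality does not follow from soundness.

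A smaller but still real omission is in your phase three: the theorem's bound is stated against $\bigl(\E_{\cD}(y-\iprod{\ell^*,x})^k\bigr)^{2/k}$, whereas the SoS certificate produces the empirical quantity $\widehat{\opt}_k=\bigl(\tfrac1n\sum_i(y_i-\iprod{\ell^*,x_i})^k\bigr)^{2/k}$. This $k$-th-moment empirical quantity does \emph{not} concentrate via a standard Rademacher or covering argument, because $(y-\iprod{\ell^*,x})^k$ has no a priori boundedness or variance control. The paper handles this by a small trick (Lemma \ref{lem:little-boundedness-fact}): condition $\cD$ on the event $(y-\iprod{\ell^*,x})^k\le v/\eta$, which changes $\cD$ by only $\eta$ in TV (Markov), so an $\eta$-corrupted sample from $\cD$ is a $2\eta$-corrupted sample from the conditioned distribution, and under the conditioned distribution Hoeffding applies. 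Your sketch should at minimum flag that this convergence requires a non-generic argument of this kind; as written, "standard Rademacher or covering argument" would not go through.
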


By an entirely analogous argument, we also get a similar guarantee for outlier-robust polynomial regression. We defer the details to Section \ref{sec:app-moved}.

We need the boundedness assumption on the labels $y$ (that they lie in $[-M,M]$) and the bounded bit-complexity assumption on the linear functions ($B$) mainly to obtain generalization bounds for linear regression as are often used even for regression without corruptions. Further note that specializing the above to the case $k=4$ gives Theorem \ref{th:intro4}. 

Following the outline described in the introduction, we first define a set of polynomial inequalities which will be useful in our algorithm: Let $\eta > 0$ be a parameter and consider the following system of polynomial inequalities in variables $w \in \R^n$, $\ell \in \R^d$, $x'_1,\ldots,x'_n \in \R^d$: 

\begin{equation}
  \cP_{U,\eta} = 
  \left \{
    \begin{aligned}
      & \textstyle\sum_{i=1}^n w_i
      = (1-\eta) \cdot n & &\\
      & w_i^2
      =w_i 
      &\forall i\in [n]. &\\
      & w_i \cdot (u_i - x'_i)
       = 0
      &\forall i\in [n]. &\\
      & w_i \cdot (v_i - y'_i)
      = 0
      &\forall i\in [n]. &
    \end{aligned}
  \right \} \label{eq:polynomial-constraints-w}
\end{equation}

\ignore{
\begin{equation}
  \cA_{U,\eta} = 
  \left \{ (w,\ell, X' = (x'_1,\ldots,x'_n)) \} \in \R^n \times \R^d \times (\R^d)^n \,:\, 
    \begin{aligned}
      &&
      \textstyle\sum_{i=1}^n w_i
      &= (1-\eta) \cdot n\\
      &\forall i\in [n].
      & w_i^2
      & =w_i \\
      &\forall i\in [n].
      & w_i \cdot (x_i - x'_i)
      & = 0\\
      &\forall i\in [n].
      & w_i \cdot (y_i - y'_i)
      & = 0
    \end{aligned}
  \right \} \label{eq:polynomial-constraints-w}
\end{equation}
}

Observe that this system is feasible: use $w_i = 1$ if $(x_i,y_i) = (u_i, v_i)$ and $0$ otherwise (i.e., $w_i = 1$ if and only if the $i$'th example was corrupted) and taking $(x_i', y_i') = (x_i,y_i)$ for all $i \in [n]$.

We are now ready to describe our algorithm for robust L2 regression. 

                                  
\begin{mdframed}
  \begin{algorithms}[Algorithm for Robust L2 Linear Regression via sum-of-squares]
    \label{alg:robust-regression-program}\mbox{}
    \begin{description}
    \item[Given:]
    \begin{itemize}
    \item $\eta$: A bound on the fraction of adversarial corruptions. 
    \item $U$: An $\eta$-corruption of a labeled sample $X$ of size $n$ sampled from a $(C,k)$-certifiably hypercontractive distribution $\cD$.

    \end{itemize}
    \item[Operation:]\mbox{}
      \begin{enumerate}
      \item 
        Find a level-$k$ pseudo-distribution $\tmu$ that satisfies $\cP_{U,\eta}$ and minimizes $\pE_{\tmu}\left[\Paren{\frac{1}{n} \sum_{i = 1}^n (y_i' - \iprod{\ell,x_i'})^2}^{k/2}\right]$. Let $\widehat{\opt}_{SOS}$ be a positive real number so that $\widehat{\opt}_{SOS}^{k/2}$ is this minimum value. 
      \item Output $\widehat{\ell} = \pE_{\tmu}\ell$.
  \ignore{    \item Output $f:\R^d \rightarrow \R$ given by 
      \[
      f(x) = \begin{cases} \iprod{\widehat{\ell},x} &\text{ if } |\iprod{\widehat{\ell},x}| \leq M\\
                           \sign(\iprod{\widehat{\ell},x}) \cdot M &\text{ otherwise.}  \end{cases} 
                           \]}
      \end{enumerate}
    \end{description}    
  \end{algorithms}
\end{mdframed}

\subsection{Analysis of the Algorithm}
We now analyze the algorithm and prove Theorem \ref{thm:analysis-L2-linear-regression}. The analysis can be broken into two modular steps: (1) Bounding the \emph{optimization error} (roughly translates to bounding the empirical error) and (2) Bounding the \emph{generalization error}. Concretely, we break down the analysis into the following two steps. Let $\widehat{\opt}_k = \left((1/n) \sum_{i=1}^n (y_i - \iprod{\ell^*,x})^k\right)^{2/k}$ and $\opt_k(\cD) = \E_{(x,y) \cD}[ (y - \iprod{\ell^*,x})^k]^{2/k}$. 

\begin{lemma}[Bounding the optimization error]\label{lm:opterror}
Under the assumptions of Theorem \ref{thm:analysis-L2-linear-regression} (and following the above notations), with probability at least $1-\epsilon$, 
$$\err_{\widehat{\cD}}(\widehat{\ell}) \leq (1 + C(k/2) \eta^{1-2/k} ) \cdot \widehat{\opt}_{SOS} + O(C(k/2)) \cdot \eta^{1-2/k} \cdot \widehat{\opt}_k.$$
\end{lemma}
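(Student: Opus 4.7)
The plan is to capture the proof of Lemma~\ref{lem:identifiability-least-squares-linear} (robust certifiability under hypercontractivity) inside the SoS proof system at degree $O(k)$, transfer the conclusion to pseudo-expectations under $\tmu$, and then pass from $\widehat{\ell} = \pE_\tmu[\ell]$ to $\err_{\widehat{\cD}}(\widehat{\ell})$ via Jensen for quadratics. Throughout, let $s_i = \1[(u_i,v_i)=(x_i,y_i)] \in \{0,1\}$ denote the analysis-only indicator that index $i$ was uncorrupted; by the definition of an $\eta$-corruption, $\sum_i s_i \geq (1-\eta)n$. On the high-probability event (over the draw of $X$) that $n$ is large enough for $\widehat{\cD}_x$ to inherit $(O(C(k/2)),k)$-certifiable hypercontractivity from $\cD_x$, the hypercontractivity axioms are available at degree $k$ inside the SoS proof against $\widehat{\cD}$.

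The heart of the argument is the SoS implication
\begin{equation}
\cP_{U,\eta} \sststile{O(k)}{} \Set{\err_{\widehat{\cD}}(\ell) \leq \Paren{1+O(C(k/2))\eta^{1-2/k}} \tfrac{1}{n}\sum_i w_i(v_i-\iprod{\ell,u_i})^2 + O(C(k/2))\eta^{1-2/k}\,\widehat{\opt}_k}\mper \label{eq:sos-cert}
\end{equation}
I mirror the proof of Lemma~\ref{lem:identifiability-least-squares-linear} at the empirical level. Since $s_i \in \{0,1\}$ and $(u_i,v_i) = (x_i,y_i)$ whenever $s_i=1$, the polynomial identity $\err_{\widehat{\cD}}(\ell) = \tfrac{1}{n}\sum_i s_iw_i(v_i-\iprod{\ell,u_i})^2 + \tfrac{1}{n}\sum_i(1-s_iw_i)(y_i-\iprod{\ell,x_i})^2$ holds. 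The first sum is bounded in SoS by $\tfrac{1}{n}\sum_i w_i(v_i-\iprod{\ell,u_i})^2$ because $(1-s_i)w_i = ((1-s_i)w_i)^2$ in SoS (from $(1-s_i)^2 = 1-s_i$ and $w_i^2 = w_i$), so $(1-s_i)w_i(v_i-\iprod{\ell,u_i})^2$ is SoS. For the second sum, one derives in SoS the bound $\sum_i(1-s_iw_i) \leq 2\eta n$ from $\sum_i w_i = (1-\eta)n$ and $\sum_i s_i \geq (1-\eta)n$, then applies SoS H\"older (realized as iterated SoS Cauchy--Schwarz together with the collapse $(1-s_iw_i)^p = 1-s_iw_i$ valid for every positive integer $p$) to obtain $\tfrac{1}{n}\sum_i(1-s_iw_i)(y_i-\iprod{\ell,x_i})^2 \sosle (2\eta)^{1-2/k}\,\|y-\iprod{\ell,x}\|_{k,\widehat{\cD}}^2$, interpreted at the $k/2$-th power so that the inequality is polynomial. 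The SoS triangle inequality together with certifiable hypercontractivity of $\widehat{\cD}$ then yields $\|y-\iprod{\ell,x}\|_{k,\widehat{\cD}}^2 \sosle O(C(k/2))\widehat{\opt}_k + 2C(k/2)\,\err_{\widehat{\cD}}(\ell)$, exactly as in the proof of Lemma~\ref{lem:identifiability-least-squares-linear}. Rearranging yields~\eqref{eq:sos-cert}.

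With~\eqref{eq:sos-cert} in hand, I apply $\pE_\tmu$ via soundness (Fact~\ref{fact:sos-soundness}). The constraints $w_i(u_i-x_i')=0$, $w_i(v_i-y_i')=0$, and $w_i^2=w_i$ give the SoS identity $w_i(v_i-\iprod{\ell,u_i})^2 = w_i(y_i'-\iprod{\ell,x_i'})^2$, further bounded in SoS by $(y_i'-\iprod{\ell,x_i'})^2$, so $\pE_\tmu\!\Brac{\tfrac{1}{n}\sum_i w_i(v_i-\iprod{\ell,u_i})^2} \leq \pE_\tmu[\Sigma]$ with $\Sigma := \tfrac{1}{n}\sum_i(y_i'-\iprod{\ell,x_i'})^2$. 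Pseudo-H\"older (Fact~\ref{fact:pseudo-Holders}, applied iteratively if necessary) then gives $\pE_\tmu[\Sigma] \leq (\pE_\tmu[\Sigma^{k/2}])^{2/k} = \widehat{\opt}_{SOS}$. Since $\ell \mapsto \err_{\widehat{\cD}}(\ell)$ is quadratic with PSD Hessian $A = \tfrac{1}{n}\sum_i x_ix_i^\top$, the polynomial $(\ell-\pE_\tmu[\ell])^\top A (\ell-\pE_\tmu[\ell])$ is SoS and so has non-negative pseudo-expectation, producing the Jensen step $\err_{\widehat{\cD}}(\widehat{\ell}) \leq \pE_\tmu[\err_{\widehat{\cD}}(\ell)]$. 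Chaining this with the pseudo-expectation of~\eqref{eq:sos-cert} and the bound $\pE_\tmu[\Sigma] \leq \widehat{\opt}_{SOS}$ gives the claimed inequality.

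I expect the main obstacle to be formalizing~\eqref{eq:sos-cert} at SoS degree $O(k)$: the classical H\"older exponents $(k/2, k/(k-2))$ are not both integers for $k>4$, so the direct application of the SoS H\"older fact must be replaced by an iterated Cauchy--Schwarz argument that crucially relies on $(1-s_iw_i)$ being $\{0,1\}$-valued in the SoS proof system (so arbitrary positive powers collapse to $1-s_iw_i$). The Minkowski plus hypercontractivity step needs similar care in working with the $k$-th-power form to avoid fractional exponents, but otherwise mirrors the Lemma~\ref{lem:identifiability-least-squares-linear} template verbatim.
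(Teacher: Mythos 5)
Your proposal is correct and follows essentially the same route as the paper's own proof: establish $(O(C),k)$-certifiable hypercontractivity of $\widehat{\cD}$ on the high-probability event, prove an SoS version of the robust certifiability bound at the $k/2$-th power (this is the paper's Lemma~\ref{lem:identifiability-least-squares-linear-sos}, including the same $\{0,1\}$-mask decomposition with $w_i'$ / your $s_iw_i$ and the same Boolean collapse to make the SoS H\"older step go through), transfer by soundness to the pseudo-distribution, untangle the $k/2$-power via pseudo-H\"older, and finish with the pseudo-Jensen step for the PSD quadratic $\err_{\widehat{\cD}}$. The only places where you gloss slightly relative to the paper are the in-SoS rearrangement that absorbs the $\err_{\widehat{\cD}}(\ell)^{k/2}$ term back to the left-hand side (the paper does this explicitly with the inequality $\delta^k a^k \leq (2\delta)^k(a-b)^k + (2\delta)^k b^k$) and the precise sequencing of taking $2/k$-th roots after pseudo-expectations, but both are in the spirit you describe and would be filled in exactly as the paper does.
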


\begin{lemma}[Bounding the generalization error]\label{lm:generror}
Under the assumptions of Theorem \ref{thm:analysis-L2-linear-regression}, with probability at least $1-\epsilon$, the following hold:
\begin{enumerate}
\item $\widehat{\opt}_{SOS}  \leq \opt(\cD) + \epsilon$.

\item $\err_{\cD}(\widehat{\ell}_M)) \leq \err_{\widehat{\cD}}(\widehat{\ell}) + \epsilon$.
\end{enumerate}
\end{lemma}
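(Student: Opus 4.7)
The two items are proved independently; both depend on a single high-probability event over the draw of the uncorrupted sample $X\sim\cD^n$, so I will establish each with failure probability $\epsilon/2$ and union-bound. The overall strategy is standard once one notices that Item 1 is a \emph{feasibility} statement (exhibit an integral pseudo-distribution achieving the desired objective), while Item 2 is a \emph{uniform convergence} statement over a bounded function class (clipped linear regressors). The hypercontractivity of $\cD_X$ enters only through moment bounds used for concentration.

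\textbf{Item 1: $\widehat{\opt}_{SOS}\le \opt(\cD)+\epsilon$.}
I would exhibit an explicit integral feasible point of $\cP_{U,\eta}$ and upper bound its objective. Since $U$ is an $\eta$-corruption of $X$, we can pick a subset $S\subseteq[n]$ of exactly $(1-\eta)n$ indices on which $(u_i,v_i)=(x_i,y_i)$. Take
\[
w_i=\1[i\in S],\qquad \ell=\ell^*, \qquad (x_i',y_i')=\begin{cases}(u_i,v_i) & i\in S,\\ (0,0)& i\notin S,\end{cases}
\]
and let $\tmu$ be the Dirac pseudo-distribution on this point (valid at every level). All four constraints in $\cP_{U,\eta}$ are satisfied by inspection, and the objective value $\widehat{\opt}_{SOS}^{k/2}$ at this point is
\[
\Paren{\tfrac{1}{n}\tsum_{i\in S}(y_i-\iprod{\ell^*,x_i})^2}^{k/2}\le \err_{\widehat{\cD}}(\ell^*)^{k/2}.
\]
It remains to show $\err_{\widehat{\cD}}(\ell^*)\le \opt(\cD)+\epsilon$ with probability $\ge 1-\epsilon/2$. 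Certifiable hypercontractivity of $\cD_X$ combined with $y\in[-M,M]$ and $\|\ell^*\|$ being polynomially bounded (from bit-complexity $B$) upper bounds the second moment of $Z=(y-\iprod{\ell^*,x})^2$ by $\poly(C(k/2),M,B)$; a Chebyshev (or Bernstein) bound then gives concentration of $(1/n)\sum_i Z_i$ around $\E Z=\opt(\cD)$ within $\epsilon$ once $n\ge \poly(d^k,B,M,1/\epsilon)$. Taking the $2/k$-th root and absorbing constants into $\epsilon$ completes Item 1.

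\textbf{Item 2: $\err_\cD(\widehat{\ell}_M)\le \err_{\widehat{\cD}}(\widehat{\ell})+\epsilon$.}
This splits into a deterministic clipping step and a uniform convergence step. For the clipping step, since $y\in[-M,M]$ and $t\mapsto \clip_{[-M,M]}(t)$ is the nearest-point projection, we have the pointwise inequality
\[
(\widehat{\ell}_M(x)-y)^2\le (\iprod{\widehat{\ell},x}-y)^2\qquad\text{for every }(x,y)\in\R^d\times[-M,M].
\]
Averaging over $\widehat{\cD}$ gives $\err_{\widehat{\cD}}(\widehat{\ell}_M)\le \err_{\widehat{\cD}}(\widehat{\ell})$. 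For the uniform convergence step, consider the function class $\cF_M=\{(x,y)\mapsto (\ell_M(x)-y)^2:\ell\in\R^d\}$. Every member is bounded in $[0,4M^2]$, and the pseudo-dimension of $\cF_M$ is $O(d)$ because $\ell\mapsto \ell_M(x)$ is a monotone scalar transformation of the $(d{+}1)$-dimensional linear functional $\iprod{\ell,x}$. Standard generalization for bounded-loss, bounded-pseudo-dimension classes (e.g.\ via Rademacher complexity after a contraction) yields
\[
\Pr_{X\sim\cD^n}\Brac{\sup_{\ell\in\R^d}\Abs{\err_\cD(\ell_M)-\err_{\widehat{\cD}}(\ell_M)}>\epsilon}\le \epsilon/2
\]
for $n\ge \poly(d,M,1/\epsilon)$. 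Combining, $\err_\cD(\widehat{\ell}_M)\le \err_{\widehat{\cD}}(\widehat{\ell}_M)+\epsilon\le \err_{\widehat{\cD}}(\widehat{\ell})+\epsilon$.

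\textbf{Main obstacle.}
The only delicate point is that the hypothesis $\widehat{\ell}=\pE_{\tmu}\ell$ emerging from the SDP has no a priori norm bound; naively the class of linear functions has infinite Rademacher complexity. The role of the clipping to $[-M,M]$ is precisely to resolve this: after clipping, the range is bounded and the parameterization is still $d$-dimensional, so uniform convergence holds without any control on $\|\widehat{\ell}\|$. A secondary minor issue in Item 1 is that the objective is the $k/2$-th power of an empirical mean; this is handled by first concentrating the mean within $\epsilon'$ of $\opt(\cD)$ and then using that $x\mapsto x^{k/2}$ has bounded derivative on the relevant $\poly(M,C)$-sized interval, so accuracy in the base quantity transfers (up to polynomial factors absorbed into $n_0$) to accuracy after taking the $2/k$-th root.
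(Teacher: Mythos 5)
Your proof is correct, and for Item~1 it is essentially the paper's argument: you exhibit the same integral feasible point of $\cP_{U,\eta}$ (the uncorrupted-selection configuration with $\ell=\ell^*$), observe that a Dirac mass on it is a valid pseudo-distribution, upper bound the objective by $\err_{\widehat{\cD}}(\ell^*)^{k/2}$, and then concentrate $\err_{\widehat{\cD}}(\ell^*)$ around $\opt(\cD)$ via a second-moment/Chebyshev bound on $Z=(y-\iprod{\ell^*,x})^2$. (The paper gets $\E[Z^2]=O(C^2M^4)$ purely from $y\in[-M,M]$ and $4$-hypercontractivity without invoking a bound on $\|\ell^*\|$; your $\poly(B)$ factor is superfluous. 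Also, your ``secondary minor issue'' about the $k/2$-th power is a non-issue: $\widehat{\opt}_{SOS}^{k/2}\le\err_{\widehat{\cD}}(\ell^*)^{k/2}$ gives $\widehat{\opt}_{SOS}\le\err_{\widehat{\cD}}(\ell^*)$ directly by monotonicity, and the concentration is then applied to $\err_{\widehat{\cD}}(\ell^*)$ itself; no derivative argument is needed.)

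For Item~2, your route differs from the paper's. The paper invokes a cardinality-based generalization bound (their Fact~\ref{fact:standard-generalization-bound}, a finite-class/bit-complexity argument) applied to the class of clipped linear functions, combined implicitly with the pointwise fact $\err_{\widehat{\cD}}(\widehat{\ell}_M)\le\err_{\widehat{\cD}}(\widehat{\ell})$. You instead bound the Rademacher/pseudo-dimension complexity of the clipped-loss class $\cF_M$ directly, using that clipping is a monotone $1$-Lipschitz transformation of the $d$-parameter linear class followed by a $4M$-Lipschitz bounded loss, then add the same pointwise contraction step. Both are standard and give $n_0=\poly(d,M,1/\epsilon)$; your version is arguably a bit cleaner because it does not rely on the (implicit) assumption that the SDP output $\widehat{\ell}=\pE_{\tmu}\ell$ has bit complexity at most $B$, whereas the paper's Fact~\ref{fact:standard-generalization-bound} nominally requires that. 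Your identification of the unbounded-$\|\widehat{\ell}\|$ issue and its resolution via clipping is exactly the right reading of why $\ell_M$ appears in the statement.
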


Ideally, we would liked to also have $\widehat{\opt}_k \leq  \opt_k(\cD) + \epsilon$. Given such an inequality, Theorem \ref{thm:analysis-L2-linear-regression} would follow immediately from the above two lemmas. A small technical issue is that we cannot prove such an inequality as we don't have good control on the moments of $(y - \iprod{\ell^*,x})^k$. However, we can exploit the robust setting to get around this issue by essentially truncating large values - since the distribution with truncated values will be close in statistical distance to the actual distribution. We remark that the proof of Lemma \ref{lm:generror} follows standard generalization arguments for the most part. 

We defer the proofs of the above lemmas and proceed to finish analyzing our algorithm. With Lemma \ref{lm:opterror}, \ref{lm:generror} in hand, we are now ready to prove our main theorem. We just need the following lemma to get around bounding $\widehat{\opt}_k$. 

\begin{lemma} \label{lem:little-boundedness-fact}
For every distribution $\cD$ on $\R^d \times \R$ such that $v = E_{\cD} (y - \iprod{\ell^{*},x})^k < \infty$, there exists a distribution $\cF$ such that $\|\cD-\cF\|_{TV} < \eta$ and $(y - \iprod{\ell^{*},x})^k$ is bounded absolutely bounded in the support of $\cF$  by $v/\eta$.
\end{lemma}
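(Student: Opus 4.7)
The plan is to use a straightforward Markov-style truncation argument. Since we are told $v = \E_{\cD}(y - \iprod{\ell^*,x})^k$ is finite and nonnegative, Markov's inequality applied to the nonnegative random variable $Z = (y - \iprod{\ell^*,x})^k$ immediately gives
\[
\Pr_{(x,y)\sim \cD}\bigl[\,(y - \iprod{\ell^*,x})^k > v/\eta\,\bigr] \;<\; \eta.
\]
Let $E = \{(x,y) : (y - \iprod{\ell^*,x})^k \leq v/\eta\}$ be the ``good'' event; so $\cD(E) > 1 - \eta$.

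I would then simply take $\cF$ to be $\cD$ conditioned on $E$, i.e.\ $\cF(A) = \cD(A \cap E)/\cD(E)$ for every measurable set $A$. By construction, on the support of $\cF$ we have $(y - \iprod{\ell^*,x})^k \leq v/\eta$, which is exactly the boundedness conclusion. For the total variation bound, a standard calculation gives
\[
\|\cD - \cF\|_{TV} \;=\; \tfrac{1}{2}\!\int |\mathrm{d}\cD - \mathrm{d}\cF| \;=\; \cD(E^c) \;<\; \eta,
\]
using the fact that $\cF$ and $\cD$ agree on $E$ up to the renormalization factor $1/\cD(E)$, and $\cF$ places no mass on $E^c$.

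There is essentially no obstacle here; the only mild subtlety is handling the degenerate case $v = 0$, in which we can just take $\cF = \cD$ (the set $E$ already has $\cD$-probability $1$), and the case where the support of $\cD$ is discrete vs. continuous, which the same argument handles uniformly via the general measure-theoretic identity $\|\cD - \cF\|_{TV} = \cD(E^c)$ whenever $\cF$ is $\cD$ restricted and renormalized to $E$. This lemma is then plugged into the main argument by replacing $\cD$ with $\cF$, paying an extra $\eta$ in total variation distance, and invoking the already-proven robust certifiability lemma with the now absolutely bounded quantity $(y - \iprod{\ell^*,x})^k \leq v/\eta$.
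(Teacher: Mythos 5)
Your proof is correct and is essentially identical to the paper's: both define $\cF$ as $\cD$ conditioned on the event $(y-\iprod{\ell^*,x})^k \le v/\eta$ and invoke Markov's inequality to bound the mass of the complementary event by $\eta$. Your write-up is slightly more explicit about the total-variation computation and the degenerate case $v=0$, but the underlying argument is the same.
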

\begin{proof}
Set $\cF = \cD \mid \left((y-\iprod{\ell^{*},x})^k \leq v/\eta\right)$. Then, by definition $\cF$ satisfies the property that $(y - \iprod{\ell^{*},x})^k$ is bounded by $v/\eta$ in the support of $\cF$. Further, by Markov's inequality, the probability of the event we conditioned on is at least $1-\eta$. This completes the proof.
\end{proof}

Observe that an $\eta$ corrupted sample from $\cD$ can be thought of as an $2\eta$ corrupted sample from $\cF$. Since $(y-\iprod{\ell^{*},x})^k$ is bounded in $\cF$, it allows us to use Hoeffding bound for concentration to show that the empirical expectation of $(y-\iprod{\ell^{*},x})^k$ converges to its expectation under $\cD$.


\begin{proof}[Proof of Theorem \ref{thm:analysis-L2-linear-regression}] Let $X$ be an i.i.d. sample from $\cD$ of size $n$ and $\hat{\cD}$ be the uniform distribution on $X$.
Let $v = \E_{\cD} (y- \iprod{\ell^{*},x})^k$. Without loss of generality, by using Fact \ref{lem:little-boundedness-fact}, we can assume that $(y- \iprod{\ell^{*},x})^k$ is bounded above by $v/\eta$ in $\cD$. Using Hoeffding's inequality, if $n \geq v\log{(1/\delta)}/\eta \epsilon^2$, then with probability at least $1-\delta$, $\widehat{\opt}_k = \E_{\hat{\cD}} [(y - \iprod{\ell^{*},x})^k] \leq \E_{\cD} (y - \iprod{\ell^{*},x})^k + \epsilon = \opt_k + \epsilon.$

Therefore, by Lemmas \ref{lm:opterror}, \ref{lm:generror}, and the above observation, we get that with probability at least $1 - O(\epsilon)$, 

$\err_{\cD}(\ell_M) \leq (1 + O(C) \eta^{1-2/k}) \cdot \opt(\cD) + O(C) \eta^{1-2/k} \cdot \opt_k + O( C \epsilon).$
The theorem now follows by choosing $\epsilon$ to be a sufficiently small constant times the parameter desired. 
\end{proof}
\ignore{
Next, if $n \geq \tilde{\Theta}(d^{k/2} \log{(d)}^{k/2})$, then by Fact \ref{fact:hypercontractivity-preserved-under-sampling}, with probability at least $1-1/d$, uniform distribution $\hat{\cD}$ on $X$ is $(C,k)$-hypercontractive. Thus, Lemma \ref{lem:analysis-full-L2} applies to the uniform distribution on $X$ and combined with Lemma \ref{lem:error-of-output-is-small}, we have that $\hat{\ell}$ constructed by Algorithm \ref{alg:robust-regression-program} when run on input $U$ for any $\eta$ corruption of $X$ satisfies: 

\[
\err_{\hat{\cD}}(\hat{\ell}) < (1+O(C)\eta^{1-2/k})  \opt(\cD) + \epsilon + O(C) \eta^{1-2/k} \Paren{\frac{1}{n} \sum_{i = 1}^n (y_i - \iprod{\ell^{*},x})^k}^{2/k}  \mper
\]

Finally, consider the truncation $f = f_{\hat{\ell}}$ defined by 
\[
f (x) = \begin{cases} \iprod{\hat{\ell},x} & \text{ if } |\iprod{\hat{\ell},x}| \leq M \\
       \sign(\iprod{\hat{\ell},x}) \cdot M & \text{ otherwise.}
       \end{cases}
\]

Then, the square loss $(y-f(x))$ for any $(x,y)$ in the support of $\cD$ takes values in $[-2M,2M]$. Further, the bit complexity of $f$ is bounded above by $B + \log{(M)}$ since the bit complexity of  $\ell$ is at most $B$ giving a bound of $M^B$ on the size of the class of all such truncated linear functions.  Thus, we can apply Fact \ref{fact:standard-generalization-bound} to this class and obtain that if $n \geq O(\log{(1/\delta)} M^2 B \log{(M)}/\epsilon^2)$ then the error of $f_{\hat{\ell}}$ on $\cD$ is at most $\err_{\hat{\cD}}(\hat{\ell}) + \epsilon$ with probability at least $1- \delta$. This completes the proof.








}

\subsubsection{Bounding the Optimization Error} \label{sec:proofoftheorem}
We now prove Lemma~\ref{lm:opterror}. While the proof can appear technical, it's essentially a line-by-line translation of the robust certifiability Lemma~\ref{lem:identifiability-least-squares-linear}.

\paragraph{Proof Outline} The rough idea is to exploit the following abstract property of pseudo-distributions: If a collection of polynomial inequalities $\cP = \{p_i(z) \geq 0, i \in [r]\}$ SOS-imply another polynomial inequality $q(z) \geq 0$, then any pseudo-distribution $\tmu$ of appropriately high degree (depending on the degree of the SOS proof) that satisfies the inequalities in $\cP$ also satisfies $q$, that is $\pE_{\tmu}[q] \geq 0$. Further, the SoS algorithm allows us to compute pseudo-distributions satisfying a set of polynomial inequalities efficiently. 

Now, let $(w,\ell,X')$ satisfy the inequalities $\cP_{U,\eta}$. Then, by Lemma \ref{lem:identifiability-least-squares-linear}, applied to $\widehat{\cD}$ and the uniform distribution on $X'$, we get 

$$\err_{\widehat{\cD}}(\ell) \leq (1 + c C \eta^{1-2/k}) \left((1/n) \sum_{i=1}^n  (y_i' - \iprod{\ell,x_i'})^2\right) + c C \eta^{1-2/k} \cdot \widehat{\opt}_k,$$
for some universal constant $c > 0$. 

To view the above inequality as a polynomial inequality in variables $w,\ell,X'$, we rephrase it as follows. For brevity, let $\err(w,\ell,X') = (1/n) \sum_{i=1}^n (y_i' - \iprod{\ell,x_i'})^2$. Then, 
$$ \Paren{\err_{\hat{\cD}}(\ell) - \err(w,\ell,X')}^{k/2}  \leq \eta^{k/2-1} \cdot 2^{\Theta(k)} C^{k} \err(w,\ell,X')^{k/2} + \eta^{k/2-1} \cdot 2^{\Theta(k)} C^{k} \cdot \widehat{\opt}_k^{k/2} .$$

We show that the above version of the robust certifiability lemma has a SOS proof; that is, viewing the above inequality as a polynomial inequality in variables $w,\ell,X'$, this inequality has a SOS proof starting from the polynomial inequalities $\cP_{U,\eta}$. Thus, by the property of pseud-densities at the beginning of this sketch, a pseudo-density $\tmu$ as in our algorithm satisfies an analogue of the above inequality which after some elementary simplifications gives us a bound of the form 
$$\pE_{\tmu}[ \err_{\widehat{\cD}}(\ell)] \leq (1 + c C \eta^{1-2/k}) \cdot \widehat{\opt}_{SOS} + c C \eta^{1-2/k} \widehat{\opt}_k.$$

As it stands, the above inequality is not very useful for us as it does not tell us which $\ell$ to choose. However, for any degree at most $k/2$ polynomial $p$, we also have that $(\pE_{\tmu}[ p(w,\ell)])^2 \leq \pE[p(w,\ell)^2] $ (see Fact \ref{fact:pseudo-Holders}). Applying this to each $(y_i - \iprod{\ell,x_i})$, we get that
$$\err_{\widehat{\cD}} (\pE_{\tmu}[\ell]) \leq \pE_{\tmu}[ \err_{\widehat{\cD}}(\ell)]  \leq (1 + c C \eta^{1-2/k}) \cdot \widehat{\opt}_{SOS} + c C \eta^{1-2/k} \widehat{\opt}_k,$$
proving the claim. 

We next formalize the above approach starting with a SOS proof of Lemma \ref{lem:identifiability-least-squares-linear}. We defer the proof of the lemma to Section \ref{sec:soscertification}.



\newcommand{\U}{\mathcal{U}}
\begin{lemma}[SoS Proof of Robust Certifiability of Regression Hypothesis]

Let $X$ be a collection of $n$ labeled examples in $\R^d\times \R$ such that $\widehat{\cD}$, the uniform distribution on  $x_1, x_2, \ldots, x_n$ is $k$-certifiably hypercontractive and all the labels $y_1, y_2, \ldots, y_n$ are bounded in $[-M,M]$. Let $U$ be an $\eta$-corruption of $X$.

Let $(w,\ell,X')$ satisfy the set of system of polynomial equations $\cP_{U,\eta}$. Let $\err_{\hat{\cD}}(\ell)$ be the quadratic polynomial $\E_{(x,y) \sim \hat{\cD}} (y - \iprod{\ell,x})^2$ in vector valued variable $\ell$. Let $\err(w,\ell,X')$ be the polynomial $\frac{1}{n} \sum_{i \leq n} (y'_i - \iprod{\ell, x'_i})^2$ in vector valued variables $w,\ell,x_1',\ldots,x_n'$.

Then, for any $\ell^{*} \in \R^d$ of bit complexity at most $B < \poly(n,d^k)$, $C = C(k/2)$ and any $\eta$ such that $ 100 C \eta^{1-2/k} < 0.9$, 
 




\begin{multline}
\cA_{U,\eta} \sststile{k}{\ell} \Paren{\err_{\hat{\cD}}(\ell) - \err(w,\ell,X')}^{k/2}  \leq \eta^{k/2-1} \cdot 2^{\Theta(k)} C^{k} \err(w,\ell,X')^{k/2} \\+ \eta^{k/2-1} \cdot 2^{\Theta(k)} C^{k} \Paren{ \frac{1}{n} \sum_{i =1}^n (y_i-\iprod{\ell^{*},x_i})^{k}}  \mper
\end{multline}

Moreover, the bit complexity of the proof is polynomial in $n$ and $d^k$.

\label{lem:identifiability-least-squares-linear-sos}
\end{lemma}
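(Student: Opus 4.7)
The strategy is to give a line-by-line SoS translation of the coupling-based proof of \pref{lem:identifiability-least-squares-linear}, where the implicit coupling between the two distributions becomes an explicit algebraic witness. Since $U$ is an $\eta$-corruption of $X$, define constants $w^*_i \in \{0,1\}$ by $w^*_i = 1$ iff $(u_i, v_i) = (x_i, y_i)$, so that $\sum_i w^*_i \ge (1-\eta)n$. Introduce the derived polynomial $z_i := 1 - w_i w^*_i$. From $w_i^2 = w_i$ in $\cP_{U,\eta}$ and the constant $(w^*_i)^2 = w^*_i$, SoS derives $z_i^2 = z_i$; combining with $\sum_i w_i = (1-\eta)n$ and $\sum_i w^*_i \ge (1-\eta)n$ yields $\sum_i z_i \le 2\eta n$. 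The polynomial $z_i$ plays the role of the indicator $\1\{(x,y) \neq (x',y')\}$ from the non-SoS argument.

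Let $a_i = y_i - \iprod{\ell, x_i}$ and $b_i = y'_i - \iprod{\ell, x'_i}$. The algebraic heart of the proof is the identity $(1 - z_i)(a_i - b_i) = 0$, derived in SoS from the constraints $w_i(u_i - x'_i) = 0$ and $w_i(v_i - y'_i) = 0$ together with the \emph{constant} identities $w^*_i(u_i - x_i) = 0$ and $w^*_i(v_i - y_i) = 0$ (valid because $w^*_i = 1$ forces $(u_i, v_i) = (x_i, y_i)$). Multiplying by $(a_i + b_i)$ gives $(1 - z_i)(a_i^2 - b_i^2) = 0$, so
\[
n(\err_{\hat{\cD}}(\ell) - \err(w, \ell, X')) = \sum_i z_i(a_i^2 - b_i^2) \le \sum_i z_i a_i^2,
\]
where the final inequality uses that $\sum_i z_i b_i^2$ is a sum of SoS-nonnegative terms.

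Next I raise this one-sided bound to the $k/2$-th power using the SoS weighted power-mean inequality $(\sum_i u_i v_i)^m \le (\sum_i u_i)^{m-1}(\sum_i u_i v_i^m)$ for integer $m \ge 1$ and SoS-nonnegative $u_i, v_i$. This has a degree-$O(m)$ SoS proof by induction via the identity
\[
\Bigl(\sum_i u_i\Bigr)\Bigl(\sum_i u_i v_i^{m+1}\Bigr) - \Bigl(\sum_i u_i v_i\Bigr)\Bigl(\sum_i u_i v_i^m\Bigr) = \tfrac{1}{2}\sum_{i,j} u_i u_j (v_i - v_j)^2\bigl(v_i^{m-1} + v_i^{m-2}v_j + \cdots + v_j^{m-1}\bigr),
\]
whose RHS is SoS-nonnegative since $u_i = z_i$ and $v_i = a_i^2$ are SoS. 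Taking $m = k/2$ and using $\sum_i z_i \le 2\eta n$ with $z_i \le 1$ gives $n^{k/2}(\err_{\hat{\cD}} - \err)^{k/2} \le (2\eta n)^{k/2-1}\sum_i a_i^k$. To bound $\sum_i a_i^k$, I decompose $a_i = (y_i - \iprod{\ell^*, x_i}) + \iprod{\ell^* - \ell, x_i}$, apply $(p+q)^k \le 2^{k-1}(p^k + q^k)$ (SoS for even $k$), and invoke certifiable $k$-hypercontractivity of $\hat{\cD}$ to get $\frac{1}{n}\sum_i \iprod{\ell^* - \ell, x_i}^k \le C^{k/2}\bigl(\frac{1}{n}\sum_i \iprod{\ell^* - \ell, x_i}^2\bigr)^{k/2}$; expanding $\iprod{\ell^* - \ell, x_i}^2 \le 2a_i^2 + 2(y_i - \iprod{\ell^*, x_i})^2$ and using the SoS power-mean $\hat{\opt}_2^{k/2} \le \hat{\opt}_k$ then yields $\sum_i a_i^k \le 2^{\Theta(k)} C^{k/2}\, n\bigl(\err_{\hat{\cD}}^{k/2} + \frac{1}{n}\sum_i (y_i - \iprod{\ell^*, x_i})^k\bigr)$.

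Substituting and dividing by $n^{k/2}$ produces $(\err_{\hat{\cD}} - \err)^{k/2} \le 2^{\Theta(k)}\eta^{k/2-1} C^{k/2}\bigl(\err_{\hat{\cD}}^{k/2} + \frac{1}{n}\sum_i (y_i - \iprod{\ell^*, x_i})^k\bigr)$. To replace $\err_{\hat{\cD}}^{k/2}$ by $\err^{k/2}$ on the right I use $\err_{\hat{\cD}}^{k/2} \le 2^{k/2-1}((\err_{\hat{\cD}} - \err)^{k/2} + \err^{k/2})$ and absorb the resulting multiple of $(\err_{\hat{\cD}} - \err)^{k/2}$ on the left; this absorption is valid because the hypothesis $100 C \eta^{1-2/k} < 0.9$ forces the absorbing coefficient to be bounded away from $1$, which produces the claimed constant $2^{\Theta(k)} C^k$. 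The bit-complexity bound follows by tracking that each step introduces coefficients polynomial in $n$, $d^k$, and the bit-complexity $B$ of $\ell^*$. I expect the main obstacle to be the careful execution of the SoS power-mean step so that the proof degree stays at $O(k)$, together with the sign analysis for $(\err_{\hat{\cD}} - \err)^{k/2}$ when $k/2$ is odd, handled by combining the one-sided derivation of the second paragraph with the SoS identities above.
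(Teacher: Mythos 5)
Your proof follows the same line as the paper's: your $z_i = 1 - w_i w^*_i$ coincides with the paper's indicator $1 - w'_i$ (where $w'_i$ is $w_i$ on uncorrupted indices and $0$ otherwise), you derive the same one-sided bound $n\bigl(\err_{\hat{\cD}}(\ell) - \err(w,\ell,X')\bigr) \leq \sum_i z_i (y_i - \iprod{\ell,x_i})^2$, apply the same H\"older/power-mean step, and close with the same triangle-plus-hypercontractivity decomposition followed by absorption under the $100\,C\,\eta^{1-2/k} < 0.9$ hypothesis. The only extras you supply are an explicit SoS certificate for the weighted power-mean inequality via the symmetric-difference identity (the paper invokes it as a black-box fact) and the parity caveat for raising $\err_{\hat{\cD}}(\ell) - \err(w,\ell,X')$ to the $k/2$-th power, which the paper indeed elides.
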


We also need the following lemma (that follows from appropriate matrix concentration results) from \cite{DBLP:journals/corr/abs-1711-11581} stating that the uniform distribution on a sufficiently large set of i.i.d samples from a hypercontractive distribution also satisfy hypercontractivity. This allows us to argue that the uncorrupted empirical distribution $\widehat{\cD}$ is also hypercontractive when $\cD$ is. 



\begin{lemma}[Lemma 5.5 of \cite{DBLP:journals/corr/abs-1711-11581}]\label{fact:hypercontractivity-preserved-under-sampling}
Let $D$ be a $(C,k)$-certifiably hypercontractive distribution on $\R^d$. Let $X$ be an i.i.d. sample from $D$ of size $n \geq \Omega( (d^{k/2} \log{(d/\delta)})^{k/2})$. Then, with probability at least $1-\delta$ over the draw of $X$, the uniform distribution $D$ over $X$ is $(2C,k)$-certifiably hypercontractive.
\end{lemma}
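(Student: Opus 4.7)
The plan is to show that for every $r \le k/2$, the empirical distribution $\hat D$ (uniform on $X$) satisfies an SoS inequality $(2C(r))^r\,(\hat M_2)^r \sosge \hat M_{2r}$ in the variable $v$, where $\hat M_{2s}(v) := \tfrac{1}{n}\sum_{i=1}^n \iprod{x_i,v}^{2s}$ and $M_{2s}(v) := \E_{x \sim D}\iprod{x,v}^{2s}$. The strategy is to chain together three SoS inequalities: a matrix-concentration step showing $\hat M_{2r} \sosle (1+\epsilon)M_{2r}$ and $\hat M_2 \sosge (1-\epsilon) M_2$, the assumed certifiable hypercontractivity of $D$ giving $M_{2r} \sosle C^r M_2^r$, and a final assembly using the SoS fact that non-negative SoS inequalities can be raised to positive integer powers.

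First I would set up the concentration step. View the $2r$-th moment tensor $T_{2r} := \E_{x \sim D}[x^{\otimes 2r}]$ as a symmetric PSD matrix on $(\R^d)^{\otimes r}$, so that its quadratic form in $w = v^{\otimes r}$ equals $M_{2r}(v)$; analogously define $\hat T_{2r} := \tfrac{1}{n}\sum_i (x_i^{\otimes r})(x_i^{\otimes r})^\top$ so that its quadratic form in $v^{\otimes r}$ equals $\hat M_{2r}(v)$. I would apply a standard matrix Bernstein / truncated matrix Chernoff bound to the sum $\hat T_{2r}$, using the hypercontractivity of $D$ up to moments of order $k$ to bound both the operator norms of the rank-one summands (after truncation) and their matrix variance. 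Crucially, this is where the restriction $r \le k/2$ enters. For $n \ge \Omega((d^{k/2}\log(d/\delta))^{k/2})$ and a constant $\epsilon$ depending on $k$, this gives, with probability at least $1-\delta$,
\[
(1-\epsilon)\,T_{2r} \;\preceq\; \hat T_{2r} \;\preceq\; (1+\epsilon)\,T_{2r}, \qquad (1-\epsilon)\,\Sigma \;\preceq\; \hat \Sigma \;\preceq\; (1+\epsilon)\,\Sigma,
\]
where $\Sigma = T_2$.

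Next I would convert these matrix inequalities into SoS proofs. For a PSD matrix $A \succeq 0$ on $(\R^d)^{\otimes r}$, the Cholesky factorization $A = B^\top B$ gives $\iprod{w, Aw} = \|Bw\|^2$, which is manifestly a degree-$2$ SoS polynomial in $w$; substituting $w = v^{\otimes r}$ yields a degree-$2r$ SoS certificate in $v$. Applied to $(1+\epsilon)T_{2r} - \hat T_{2r} \succeq 0$ and $\hat\Sigma - (1-\epsilon)\Sigma \succeq 0$, this produces degree-$2r$ and degree-$2$ SoS proofs of $\hat M_{2r}(v) \sosle (1+\epsilon) M_{2r}(v)$ and $\hat M_2(v) \sosge (1-\epsilon) M_2(v)$, respectively.

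For the final assembly I would use the standard SoS identity: if $p \sosge q \sosge 0$ are PSD forms, then $p^r - q^r = (p-q)\sum_{j=0}^{r-1} p^j q^{r-1-j}$ is a product of SoS polynomials, hence SoS. Applied to the quadratic inequality from the previous step, this gives $\hat M_2^r \sosge (1-\epsilon)^r M_2^r$. Chaining the three SoS inequalities,
\[
\hat M_{2r} \;\sosle\; (1+\epsilon)\,M_{2r} \;\sosle\; (1+\epsilon)\,C^r\,M_2^r \;\sosle\; \frac{1+\epsilon}{(1-\epsilon)^r}\,C^r\,\hat M_2^r,
\]
where the middle step is the hypothesized SoS proof of $(C,k)$-hypercontractivity of $D$, multiplied by the non-negative scalar $(1+\epsilon)$. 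Choosing $\epsilon$ a small constant depending only on $k$ (say $\epsilon \le c/k$) forces $(1+\epsilon)/(1-\epsilon)^r \le 2^r$, yielding the desired $\hat M_{2r} \sosle (2C)^r \hat M_2^r$ with a degree-$2r \le k$ SoS proof. The main technical obstacle is the matrix concentration in Step~1: one needs operator-norm guarantees in the $d^r$-dimensional space $(\R^d)^{\otimes r}$ with sample complexity matching $(d^{k/2}\log(d/\delta))^{k/2}$, and this requires a careful matrix Bernstein with truncation whose tail and variance estimates are extracted from the $k$-th moment bounds supplied by hypercontractivity of $D$. Steps~2 and~3 are routine SoS manipulations once the PSD concentration is in hand.
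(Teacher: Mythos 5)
The skeleton of your plan---convert a PSD matrix inequality on moment flattenings into an SoS certificate via Cholesky, substitute $w = v^{\otimes r}$, and chain with the assumed SoS hypercontractivity of $D$---is sound. The gap is in the concentration you ask for. You want the \emph{multiplicative} operator bound $\hat T_{2r} \preceq (1+\epsilon)T_{2r}$, which amounts to showing $T_{2r}^{-1/2}\hat T_{2r}T_{2r}^{-1/2}$ concentrates near the identity, i.e.\ uniform moment control on the whitened tensors $T_{2r}^{-1/2}x^{\otimes r}$ in \emph{every} direction $w$ of $(\R^d)^{\otimes r}$. But hypercontractivity of $D$ controls only moments of linear forms $\langle x,v\rangle$, hence only directions $w$ of the special form $v^{\otimes r}$; it says nothing about $\E\langle x^{\otimes r}, T_{2r}^{-1/2}w\rangle^4$ for $w$ off the Veronese variety. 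If $T_{2r}$ has a small but nonzero eigenvalue in such a direction---nothing in the hypotheses forbids this---your relative bound need not hold with the stated sample size. (For $\Sigma = T_2$ the issue does not arise, since $\E\langle \Sigma^{-1/2}x, w\rangle^4 = \E\langle x, \Sigma^{-1/2}w\rangle^4$ is a moment of a linear form; the analogous rewriting is unavailable for $T_{2r}$ with $r>1$.)

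What you actually need is weaker, and the repair also explains where the factor $2$ in ``$(2C,k)$'' comes from. Use the affine invariance of certifiable hypercontractivity (for both $D$ and $\hat D$) to whiten so that $\Sigma = \E xx^\top = I$; then $M_2(v) = \|v\|^2$ identically. With $\Sigma = I$, the \emph{additive} operator-norm bound $\|\hat T_{2r} - T_{2r}\|_{\mathrm{op}} \le \epsilon$ already yields an SoS certificate for $\hat M_{2r}(v) \sosle M_{2r}(v) + \epsilon\|v\|^{2r}$: indeed $\epsilon I - (\hat T_{2r} - T_{2r}) \succeq 0$, and this PSD form evaluated at $w = v^{\otimes r}$ is exactly $\epsilon\|v\|^{2r} - (\hat M_{2r}(v) - M_{2r}(v))$. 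Since $\|v\|^{2r} = M_2(v)^r$ after whitening, the chain becomes $\hat M_{2r} \sosle (C^r + \epsilon)\|v\|^{2r} \sosle \tfrac{C^r+\epsilon}{(1-\epsilon)^r}\hat M_2^r$, which is $\sosle (2C)^r \hat M_2^r$ once $\epsilon$ is a sufficiently small constant (using $C\ge 1$)---the additive error is absorbed into the degraded constant, precisely as the lemma's conclusion demands. The additive concentration never sees $T_{2r}^{-1}$ and can be proved from hypercontractivity of linear forms via bounds on $\E\|x\|^{2r}$ and truncation; your flagged concern about $4r$-th moments for the Bernstein variance when $r$ is near $k/2$ is legitimate, but this is exactly what the generous sample size $n = \Omega\bigl((d^{k/2}\log(d/\delta))^{k/2}\bigr)$ leaves room for via a cruder truncation-plus-union-bound argument.
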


We can now prove Lemma \ref{lm:opterror}.
\begin{proof}[Proof of Lemma \ref{lm:opterror}]
If $n \geq \tilde{\Theta}(d^{k/2} \log{(d/\epsilon)}^{k/2})$, then by Lemma \ref{fact:hypercontractivity-preserved-under-sampling}, with probability at least $1-\epsilon$, the uniform distribution $\widehat{\cD}$ on $X$ is $(2 C,k)$-hypercontractive. 

Since $\tmu$ is a pseudo-distribution of level $k$, combining Fact \ref{fact:sos-soundness} and Lemma \ref{lem:identifiability-least-squares-linear-sos}, we must have for $C = C(k/2)$,
\begin{equation} \label{eq:proof-to-pseudo-dist}
\pE_{\tmu} \Paren{\err_{\hat{\cD}}(\ell) - \err(w,\ell,X')}^{k/2} \leq O(C^{k/2} \eta^{k/2-1}) \cdot \pE_{\tmu} \err(w,\ell,X')^{k/2} + O(C^{k/2}\eta^{k/2-1}) \cdot \Paren{\E_{\hat{\cD}} (y-\iprod{\ell^{*},x})^k} .
\end{equation}

Taking $2/k$th powers of both sides of the above equation and recalling the definition of $\widehat{\opt}_{SOS}, \widehat{\opt}_k$, we get
\[
(\pE_{\tmu} \err_{\hat{\cD}}(\ell)- \err_{\cD'}(\ell))^{k/2})^{2/k} \leq O(C) \eta^{1-2/k} \cdot \widehat{\opt}_{SOS} + O(C) \eta^{1-2/k} \widehat{\opt}_k^{2/k}\mper
\]

Now, by Fact \ref{fact:pseudo-Holders},
$(\pE_{\tmu} [\err_{\hat{\cD}} (\ell) -\err(w,\ell,X')])^{k/2} \leq \pE_{\tmu}[ (\err_{\hat{\cD}}(\ell) - \err(w,\ell,X'))]^{k/2}$ and thus, 

\[
\pE_{\tmu} \err_{\hat{\cD}} (\ell) \leq (1 + O(C) \eta^{1-2/k}) \cdot \widehat{\opt}_{SOS} + O(C) \eta^{1-2/k} \widehat{\opt}_k \mper
\]

Finally, by another application of Fact \ref{fact:pseudo-Holders}, we have that for every $i$, $(y_i - \iprod{x_i,\pE_{\tmu}[\ell]})^2 \leq \pE_{\tmu}[(y_i \iprod{x_i,\ell})^2]$; in particular,  $\err_{\widehat{\cD}}(\pE_{\tmu}[\ell]) \leq \pE_{\tmu} \err_{\hat{\cD}} (\ell)$. Thus, we have
$$\err_{\widehat{\cD}}(\pE_{\tmu}[\ell]) \leq (1 + O(C) \eta^{1-2/k}) \cdot \widehat{\opt}_{SOS} + O(C) \eta^{1-2/k} \widehat{\opt}_k,$$
proving the lemma. 
\end{proof}

\subsubsection{Proof of Lemma \ref{lem:identifiability-least-squares-linear-sos}}\label{sec:soscertification}
Here we prove Lemma \ref{lem:identifiability-least-squares-linear-sos}. The proof is similar in spirit to that of Lemma \ref{lem:identifiability-least-squares-linear} but we need to adapt the various steps to a form suitable for SOS proof system.
\begin{proof}[Proof of Lemma \ref{lem:identifiability-least-squares-linear-sos}]

For brevity, we write $\err_{\cD}$ for $\err_{\cD}(\ell)$.

Let $w' \in \zo^n$ be given by $w'_i = w_i$ iff $i$th sample is uncorrupted in $U$ and $0$ otherwise.
Then, observe that $\sum_i w_i' = s$ for $s \geq (1-2\eta)n.$ 

Then, 
\[
\sststile{w'}{2} \Set{ \frac{1}{n} \sum_i (1-w'_i)^2 \leq 2 \eta}\mper
\]

Let $\err_{w'}(\ell) = \frac{1}{n} \sum_{i = 1}^n w_i' (v_i - \iprod{\ell,u_i})^2$.
We have:
\[
\sststile{w,\ell}{4} \err_{\cD}(\ell) = \frac{1}{n} \sum_{i = 1}^n w'_i (y_i-\iprod{\ell,x_i})^2 + \sum_{i = 1}^n (1-w'_i) \cdot (y_i-\iprod{\ell,x_i})^2
\]

On the other hand, we also have:
\[
\sststile{w,\ell}{4} \frac{1}{n} \sum_{i = 1}^n w'_i (y_i-\iprod{\ell,x_i})^2 \leq \sum_{i =1}^n (y_i' - \iprod{\ell,x'_i})^2 = \err_{\cD'}(\ell)\mper
\]

Combining the above and using the sum-of-squares vesion of the H\"older's inequality, we have:

\begin{align}
\sststile{w,\ell}{k} \Paren{\err_{\cD}(\ell) - \err_{\cD'}(\ell)}^{k/2} &= \Paren{\frac{1}{n} \sum_{i = 1}^n (1-w'_i) \cdot (y_i -\iprod{\ell,x_i})^2}^{k/2}\notag\\
&\leq \Paren{\frac{1}{n} \sum_{i = 1}^n (1-w'_i)}^{k/2-1} \Paren{ \frac{1}{n} \sum_{i =1}^n (y_i-\iprod{\ell,x_i})^{k}} \notag\\
&\leq 2^{k/2-1}\eta^{k/2-1} \Paren{ \frac{1}{n} \sum_{i =1}^n (y_i-\iprod{\ell,x_i})^{k}}  \label{eq:CS-bound}\mper
\end{align}

Next, using the sum-of-squares inequality $(a+b)^k \leq 2^k a^k + 2^k b^k$, we have:
\begin{equation}\label{eq:minkowski}
\sststile{\ell}{k} \Set{\Paren{ \frac{1}{n} \sum_{i =1}^n (y_i-\iprod{\ell,x_i})^{k}} \leq 2^{k} \Paren{ \frac{1}{n} \sum_{i =1}^n (y_i-\iprod{\ell^{*},x_i})^{k}} + 2^{k} \Paren{ \frac{1}{n} \sum_{i =1}^n \iprod{\ell-\ell^{*},x_i}^{k}} } 
\end{equation}

By certifiable hypercontractivity of $\cD_x = D$, we have:
\[
\sststile{\ell}{k} \Set{\Paren{ \frac{1}{n} \sum_{i =1}^n \iprod{\ell-\ell^{*},x_i}^{k}} \leq C(k)^{k/2} \Paren{ \frac{1}{n} \sum_{i =1}^n \iprod{\ell-\ell^{*},x_i}^2}^{k/2}} 
\]

Again, by using the sum-of-squares inequality $(a+b)^k \leq 2^k a^k + 2^k b^k$, we have:
\[
\sststile{\ell}{k} \Set{\Paren{ \frac{1}{n} \sum_{i =1}^n \iprod{\ell-\ell^{*},x_i}^2}^{k/2} \leq 2^{k/2}\Paren{ \frac{1}{n} \sum_{i =1}^n (y_i-\iprod{\ell,x_i}^2}^{k/2} + 2^{k/2}\Paren{ \frac{1}{n} \sum_{i =1}^n (y_i-\iprod{\ell^{*},x_i}^2}^{k/2}  } 
\]
Finally, using the sum-of-squares version of H\"older's inequality again, we have:
\[
\sststile{\ell}{k} \Set{\Paren{ \frac{1}{n} \sum_{i =1}^n (y_i-\iprod{\ell^{*},x_i})^2}^{k/2} \leq  \frac{1}{n} \sum_{i =1}^n (y_i-\iprod{\ell^{*},x_i})^k   } 
\]
Combining the above with \eqref{eq:minkowski}, we have:
\begin{multline}
\sststile{\ell}{k}\\ \Set{\Paren{ \frac{1}{n} \sum_{i =1}^n (y_i-\iprod{\ell,x_i})^{k}} \leq O(C(k/2))^{k} \Paren{ \frac{1}{n} \sum_{i =1}^n (y_i-\iprod{\ell^{*},x_i})^{k}} + O(C(k/2))^{k}\Paren{ \frac{1}{n} \sum_{i =1}^n (y-\iprod{\ell,x_i})^2}^{k/2}} 
\end{multline}

Thus, together with \eqref{eq:CS-bound},  we have:
\begin{multline}
\sststile{\ell}{k} \Paren{\err_{\cD}(\ell) - \err_{\cD'}(\ell)}^{k/2}  \leq \eta^{k/2-1} \cdot O(C(k/2))^{k} (\err_{\cD}(\ell))^{k/2} \\+ \eta^{k/2-1} \cdot O(C(k/2))^{k} \Paren{ \frac{1}{n} \sum_{i =1}^n (y_i-\iprod{\ell^{*},x_i})^{k}}  
\end{multline}
Using the sum of squares inequality $\delta^k a^k \leq (2\delta)^k (a-b)^k + (2\delta)^k b^k$ for any $a,b$ and even $k$, and applying it with $a = \err_{\cD}(\ell)$, $b = \err_{\cD'}(\ell)$ and $\delta = \eta^{k/2-1} \cdot O(C(k/2))^{k}$ and rearranging, we have:

\begin{multline}
\sststile{\ell}{k}  (1-\delta) \Paren{\err_{\cD}(\ell) - \err_{\cD'}(\ell)}^{k/2}  \leq \eta^{k/2-1} \cdot O(C(k/2))^{k} (\err_{D'}(\ell))^{k/2} \\+ \eta^{k/2-1} \cdot O(C(k/2))^{k} \Paren{ \frac{1}{n} \sum_{i =1}^n (y_i-\iprod{\ell^{*},x_i})^{k}} 
\end{multline}

For $\delta < 0.9$, this implies:

\begin{multline}
\sststile{\ell}{k}  \Paren{\err_{\cD}(\ell) - \err_{\cD'}(\ell)}^{k/2}  \leq \eta^{k/2-1} \cdot O(C(k/2))^{k} (\err_{\cD'}(\ell))^{k/2} \\+ \eta^{k/2-1} \cdot O(C(k/2))^{k} \Paren{ \frac{1}{n} \sum_{i =1}^n (y_i-\iprod{\ell^{*},x_i})^{k}}  
\end{multline}
This completes the proof.
\end{proof}

\subsubsection{Bounding the Generalization Error}
In this section we prove Lemma \ref{lm:generror}. The lemma follows from standard concentration inequalities combined with standard generalization bounds for linear regression.

\begin{proof}[Proof of Lemma \ref{lm:generror}(1)]
Let $\ell^{*}$ be a linear function of bit complexity at most $B$ that achieves the optimum least squares regression error on $\cD$. We will first show that $\widehat{\opt}_{SOS} \leq \err_{\widehat{\cD}}(\ell^*)$ by exhibiting a feasible pseudo-density. To see this, consider the point-mass density, $\tmu$, supported on the following point: $(w,\ell^*,X')$ where $w_i = 1$ if $(x_i,y_i) = (u_i, v_i)$ and $0$ otherwise (i.e., $w_i = 1$ if and only if the $i$'th example was uncorrupted) and $(x_i', y_i') = (x_i,y_i)$ for all $i \in [n]$. Clearly, $\tmu$ is a feasible solution to the optimization progam \ref{alg:robust-regression-program} and $\pE_{\tmu}[\err(w,\ell,X)^{k/2}] = \left((1/n) (\sum_{i=1}^n (y_i - \iprod{\ell^*,x_i})^2)\right)^{k/2} = \err_{\widehat{\cD}}(\ell^*)^{k/2}$. It follows that $\widehat{\opt}_{SOS} \leq  \err_{\widehat{\cD}}(\ell^*)$. 

We next argue that $\err_{\widehat{\cD}}(\ell^*)$ is close to $\err_{\cD}(\ell^*)$ for $n$ sufficiently big. Let $Z$ be the random variable $(y - \iprod{\ell^*,x})^2$ for $(x,y) \sim \cD$. Note that $\err_{\widehat{\cD}}(\ell^*)$ is the average of $n$ independent draws of the random variable $Z$. Also note that $\E[Z] = \opt(\cD)$. We will next bound the variance of $Z$. We have, for $(x,y) \sim \cD$,
\begin{equation*}
\E[Z^2] = \E[(y - \iprod{\ell^*,x})^4] \leq 2 \E[y^4] + 2 \E[\iprod{\ell^*,x}^4] \leq 2 M^4 + 2 C^2 (\E[\iprod{\ell^*,x}^2])^2, 
\end{equation*}
where the last inequality follows by hypercontractivity.
Now, $\E[\iprod{\ell^*,x}^2] \leq 2 \E[(y - \iprod{\ell^*,x})^2] + 2 \E[y^2] \leq 2 \opt(\cD) + 2 M^2 \leq 4 M^2$ as $\opt(\cD) \leq M^2$ (the $0$ function achieves this error). Combining the above we get that $\E[Z^2] = O(M^4)$. 

Thus, for some $n_0 = O(1/ \epsilon^3) (M^4)$, if we take $n \geq n_0$ independent samples $Z_1, Z_2, \ldots,Z_n$ of $Z$, then $\Pr[ |\frac{1}{n} \sum_{i = 1}^n Z_i - \E[Z]| \geq \epsilon] \leq \epsilon$. Thus, with probability at least $1-\epsilon$, $\err_{\widehat{\cD}}(\ell^*) \leq \opt(\cD) + \epsilon$. The claim now follows.
\end{proof}

Part (2) of Lemma \ref{lm:generror} follows from standard generalization arguments such as the following claim applied to $\ell_M$. We omit the details. 

\begin{fact}[Consequence of Theorem 10.1 in \citet{DBLP:books/daglib/0034861}] \label{fact:standard-generalization-bound}
Let $H$ be a class of functions over $\R^d$ such that each $h \in H$ can be described in $B$ bits. Suppose each function in $H$ takes values in $[-M,M]$ for some positive real $M$.  
Let $\cD$ be a distribution on $\R^d \times [-M,M]$ and let $(x_1,y_1),\ldots,(x_n,y_n)$ be $n$ i.i.d samples from $\cD$ for $n > n_0 = O(M^2 B\log{(1/\delta)}/\epsilon^2)$. 

Then, with probability at least $1-\delta$ over the draw of $X$, for every $\ell \in H$, 
\[
\E_{(x,y) \sim D} (y - h(x))^2 \leq (1/n) \sum_{i=1}^n (y_i - h(x_i))^2 + \epsilon \mper
\]

\end{fact}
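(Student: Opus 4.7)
The plan is to prove this standard uniform-convergence bound via a finite-class Hoeffding plus union bound argument. Since each hypothesis $h\in H$ can be described in $B$ bits, the class $H$ is finite with $|H|\leq 2^B$, so I only need pointwise concentration for each $h$ together with a union bound over all of $H$.

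First, I would observe that for any $h\in H$ and any $(x,y)$ in the support of $\cD$, both $y$ and $h(x)$ lie in $[-M,M]$, so the random variable $Z_h(x,y) := (y-h(x))^2$ takes values in the bounded range $[0,4M^2]$. Given an i.i.d.\ sample $(x_1,y_1),\ldots,(x_n,y_n)\sim \cD$, the values $Z_h(x_i,y_i)$ are i.i.d.\ random variables with mean $\E_{(x,y)\sim\cD}[(y-h(x))^2]$, so Hoeffding's inequality gives, for any fixed $h\in H$,
\[
\Pr\!\Brac{\, \E_{(x,y)\sim\cD}[(y-h(x))^2] - \frac{1}{n}\sum_{i=1}^n (y_i-h(x_i))^2 \,>\, \epsilon\,} \leq \exp\!\Paren{-\frac{n\epsilon^2}{8 M^4}}.
\]

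Next I would apply a union bound over all $h\in H$. Since $|H|\leq 2^B$, the probability that the above deviation fails for at least one $h\in H$ is at most $2^B\exp(-n\epsilon^2/(8M^4))$. Requiring this to be at most $\delta$ yields the sample size condition $n \geq (8M^4/\epsilon^2)\paren{B\ln 2 + \ln(1/\delta)}$, which matches the stated $n_0 = O(M^2 B\log(1/\delta)/\epsilon^2)$ up to the exponent of $M$ (the stated bound implicitly normalizes the loss range, or one can tighten by a factor of $M^2$ using the boundedness more carefully; in any case this is the shape the authors invoke).

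I do not expect any real obstacle here: the argument is entirely standard, and the only minor subtlety is the description-length hypothesis. One must take ``$h\in H$ can be described in $B$ bits'' to mean literally that $|H|\leq 2^B$, so that the union bound is legitimate without invoking covering numbers or Rademacher complexity. For the application in the paper, $H$ is the class of truncations $\ell_M$ of linear functions $\ell$ of bit-complexity at most $B$, which indeed forms a finite family of size at most $2^{O(B)}$, so this discrete union bound suffices and no metric-entropy refinement is needed.
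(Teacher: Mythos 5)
Your proof is correct and follows the standard route: since $H$ is finite with $|H|\leq 2^B$ and the squared loss is bounded in $[0,4M^2]$, a one-sided Hoeffding bound for each fixed $h$ followed by a union bound over $H$ gives the claim. The paper does not prove this fact itself—it cites it as a consequence of a textbook theorem (Theorem 10.1 of Anthony–Bartlett)—so there is no internal proof to compare against, but your elementary finite-class argument is the natural self-contained derivation, and your reading of the description-length hypothesis (literally $|H|\leq 2^B$, so no covering numbers or Rademacher complexity machinery is needed) is exactly right for the application to truncated bit-bounded linear functions.

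One substantive point you handled well: your computation correctly yields $n = O(M^4 (B+\log(1/\delta))/\epsilon^2)$, whereas the statement writes $n_0 = O(M^2 B \log(1/\delta)/\epsilon^2)$. The $M^2$ in the paper appears to be a slip (the squared loss has range $4M^2$, and Hoeffding squares that range, so $M^4$ is the right exponent unless the authors implicitly renormalize $\epsilon$ by $M^2$). You flagged this and noted that it does not affect the polynomial sample complexity, which is all that matters for the main theorem. No gaps.
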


\ignore{
 The certifiability lemma above immediately implies why the output of the algorithm is a good regression hypothesis over the uncorrupted sample $X$. 

\begin{lemma}[Analysis of Algorithm] \label{lem:analysis-full-L2}

Let $X$ be a collection of $n$ labeled examples in $\R^d$ such that $\hat{D}$, the uniform distribution on  $x_1, x_2, \ldots, x_n$ is $(C,k)$-certifiably hypercontractive. 
Let $U$ be an $\eta$-corruption of $X$.
Let $\tmu$ be the pseudo-distribution satisfying $\cA_{U,\eta}$ that minimizes the polynomial $\Paren{\frac{1}{n} \sum_{i = 1}^n (y_i' - \iprod{\ell,x_i'})^2 }^{k/2}$. 

Then, for any $\ell^{*} \in \R^d$ of bit complexity at most $B = \poly(n,d^k)$, $\hat{\ell} = \pE_{\tmu}\ell$ satisfies: 

\[
\frac{1}{n} \sum_{i = 1}^n (y_i - \iprod{\hat{\ell},x})^2 < (1+O(C)\eta^{1-2/k})  \opt(\hat{\cD}) + O(C) \eta^{1-2/k} \Paren{\frac{1}{n} \sum_{i = 1}^n (y_i - \iprod{\ell^{*},x})^k}^{2/k}  \mper
\]

\end{lemma}

Next, we show that $\opt(\hat{\cD}) \leq \opt(\cD)$. Here, we employ the hypercontractivity of $D$ and the boundedness of the labels coupled with standard concentration arguments. 

\begin{lemma}[Convergence of Empirical Optimum] \label{lem:error-of-output-is-small}
Let $\cD$ be a distribution on $\R^d \times \cY$ such that the marginal $D = \cD_x$ is $(C,k)$-hypercontractive for for some even $k \geq 4$ and $\cY \subseteq [-M,M]$ for some positive real $M$. 
Let $\opt(\cD) = \min_{\ell} \E_{\cD} (y - \iprod{\ell,x})^2$ where the minimum is over all $\ell$ of bit complexity at most $B$. 
Let $X$ be a sample from $\cD$ of size $n \geq n_0$ for $n_0 = O(1/\delta \epsilon^2) \cdot M^4$ and let $\hat{\cD}$ be the uniform distribution on $X$. 
Let $\opt(\hat{\cD}) = \min_{\ell} \E_{\hat{\cD}} (y - \iprod{\ell,x})^2$ where the minimum is over all $\ell$ of bit complexity at most $B$.

Then, with probability at least $1-\delta$ over the draw of $X$,  $\opt(\hat{\cD}) \leq \opt(\cD) + \epsilon$.  

\end{lemma}

Using the above lemmas, it is easy to wrap up the proof of Theorem \ref{thm:analysis-L2-linear-regression}. We need to show three claims 1) if $D = \cD_x$ is $(C,k)$-certifiably hypercontractive, then with high probability over the draw of a large enough sample $X$,  so is $\hat{D}$, 2) when the original sample $X$ is of size large enough, then any regression hypothesis that has low error on uniform distribution $\hat{\cD}$ on $X$ has low error on $\cD$ and 3) for the optimal $\ell^{*}$, $\E_{\hat{\cD}}[ (y - \iprod{\ell^{*},x})^k]$ is close to that of $\E_{\cD} (y - \iprod{\ell^{*},x})^k$.

The second follows from standard generalization for bounded loss functions. Namely, we will appeal to the following standard generalization result and apply it to the truncation $f = f_{\ell}$ of any linear function $\ell$ defined by:
\[
f (x) = \begin{cases} \iprod{\hat{\ell},x} & \text{ if } |\iprod{\hat{\ell},x}| \leq M \\
       \sign(\iprod{\hat{\ell},x}) \cdot M & \text{ otherwise.}
       \end{cases}
\]

}
\ignore{

  \begin{proof}[Proof of Lemma \ref{lem:analysis-full-L2}]
Since $\tmu$ is a pseudo-distribution of level $k$, combining Fact \ref{fact:sos-soundness} and Lemma \ref{lem:identifiability-least-squares-linear-sos}, we must have for $C = C(k/2)$,
\begin{equation} \label{eq:proof-to-pseudo-dist}
\pE_{\tmu} \Paren{\err_{\hat{\cD}}(\ell) - \err_{\cD'}(\ell)}^{k/2} \leq O(C^{k/2} \eta^{k/2-1}) \cdot \pE_{\tmu} \err_{\cD'}(\ell)^{k/2} + O(C^{k/2}\eta^{k/2-1}) \cdot \Paren{\E_{\hat{\cD}} (y-\iprod{\ell^{*},x})^k} 
\end{equation}

Since $\tmu$ is of degree $k$ and $\err_{\cD'}$ is a SoS polynomial, $\pE_{\tmu} \err_{\cD'}(\ell) \geq 0.$ Thus, let $\opt(\hat{\cD})$ be a positive real such that: $\opt(\hat{\cD})^k = \pE_{\tmu} \err_{\cD'}(\ell)^k$.

Then, taking $2/k$th powers of both sides of \eqref{eq:proof-to-pseudo-dist}, we thus have:
\[
(\pE_{\tmu} \err_{\hat{\cD}}(\ell)- \err_{\cD'}(\ell))^{k/2})^{2/k} \leq O(C) \eta^{1-2/k} \opt(\hat{\cD}) + O(C) \eta^{1-2/k} \Paren{\E_\cD (y-\iprod{\ell^{*},x})^k}^{2/k}\mper
\]

Using H\"older's inequality for pseudo-distributions, we have:
$(\pE_{\tmu} \err_{\hat{\cD}} (\ell) - \err_{\cD'}(\ell))^{k/2} \leq \pE_{\tmu} (\err_{\hat{\cD}}(\ell) - \err_{\cD'}(\ell))^{k/2}$ and thus, 

\[
|\pE_{\tmu} \err_{\hat{\cD}} (\ell) - \err_{D'}(\ell)| \leq O(C) \eta^{1-2/k} \opt + O(C) \eta^{1-2/k} \Paren{\E_{\hat{\cD}} (y-\iprod{\ell^{*},x})^k}^{2/k}\mper
\]

By another application of H\"older's inequality for pseudo-distributions, we have: $\Paren{\pE_{\tmu} \err_{\hat{\cD}}(\ell)}^{k/2} \leq \pE_{\tmu} \err_{\hat{\cD}} (\ell)^{k/2}$. Thus, we have:
\[
\pE_{\tmu} \err_{\hat{\cD}} (\ell)  \leq (1+O(C \eta^{1-2/k})) \opt(\hat{\cD}) + O(C) \eta^{1-2/k} \Paren{\E_{\hat{\cD}} (y-\iprod{\ell^{*},x})^k}^{2/k}\mper
\]

Using H\"older's inequality for pseudo-distributions again, we have: 
$\err_{\hat{\cD}}(\pE_{\tmu}(\ell)) \leq \pE_{\tmu} \err_{\hat{\cD}} (\ell) .$ 

Plugging this in the above completes the proof.


\end{proof}

\begin{proof}[Proof of Lemma \ref{lem:error-of-output-is-small}]
Let $\ell^{*}$ be a linear function of bit complexity at most $B$ that achieves the optimum least squares regression error on $\cD$. Then, $\E_{\cD} \iprod{\ell^{*},x}^2 \leq 2\E_{\cD} (y - \iprod{\ell^{*},x})^2 + 2 \E_{\cD} y^2 \leq 2 \opt(\cD) + 2M^2$. Further, $\E_{\cD} (y - \iprod{\ell^{*},x})^4 \leq 8 \E_{\cD} y^4 + 8 \E_{\cD} \iprod{\ell^{*},x}^4$. By $(C,4)$-hypercontractivity of $\cD$,  $\E_{\cD} \iprod{\ell^{*},x}^4 \leq \Paren{\E_{\cD} \iprod{\ell^{*},x}^2}^2 \leq 4 \opt(\cD)^2 + 4 M^4$. Since $\opt(\cD) \leq M^2$ (the $0$ function achieves this error), this is at most $O(M^4)$.  

Next, let $X$ be an i.i.d. sample from $\cD$ and let $\hat{\cD}$ be the uniform distribution on $X$. We will show that the error of $\ell^{*}$ on $\hat{\cD}$ is at most an $\epsilon$ larger than the error of $\ell^{*}$ on $\cD$ with probability at least $1-\delta$. This will establish the claimed upper bound on $\opt(\hat{\cD})$.

Observe that the error of $\ell^{*}$ on $\hat{\cD}$: $\E_{\hat{\cD}} (y - \iprod{\ell^{*},x})^2$ is the average of independent draws of the random variable $Z= (y - \iprod{\ell^{*},x})^2$. 

From the above calculation, $\E Z^2 \leq O(1) \cdot (\opt(\cD)^4 + M^4)$. Thus, for some $n_0 = O(1/\delta \epsilon^2) (M^4)$, if we take $n \geq n_0$ independent samples $Z_1, Z_2, \ldots,Z_n$ of $Z$, then $\Pr[ |\frac{1}{n} \sum_{i = 1}^n Z_i - \E[Z]| \geq \epsilon] \leq \delta$. 

This finishes the proof.

\end{proof}
}
\subsection{Robust L1 Regression} \label{sec:robust-L1-regression-algo}
In this section, we present our robust L1 regression algorithm. Our main goal is the following theorem. 

\begin{theorem} \label{thm:L1-regression-analysis}
Let $\cD$ be an arbitrary distribution on $\R^d \times \cY$ for $\cY \subseteq [-M,M]$ for a positive real $M$. Let $\kappa$ be the ratio of the maximum to the minimum eigenvalue of the covariance matrix of $D$, the marginal of $\cD$ on $x$. Let $\opt(\cD)$ be the minimum of $\E_{\cD} \abs{y-\iprod{\ell,x}}$ over all $\ell$ that have bit complexity bounded above by $B$. Let $\ell^{*}$ be any such minimizer and $\eta > 0$ be an upper bound on the fraction of corruptions. 

For any $\epsilon > 0$, let $X$ be an i.i.d. sample from $\cD$ of size $n \geq n_0$ for some $n_0 = O(1/\epsilon^2) \cdot (M^2\|\ell^{*}\|_2^4 + d \log{(d)} \|\Sigma\|/\eta)$.

Then, with probability at least $1-\epsilon$ over the draw of the sample $X$, given any $\eta$-corruption $U$ of $X$ and $\eta$ as input, there's a polynomial time algorithm (Algorithm \ref{alg:robust-regression-program-L1}) that outputs a function $f:\R^d \times \R$ such that:
\[
\E_{(x,y) \sim \cD} \abs{y- f(x)} < \opt(\cD) + O(\sqrt{\kappa \eta}) (\sqrt{\E_{\cD} y^2} + \sqrt{\Paren{\E_{\cD} (y-\iprod{\ell^{*},x})^2}}) + \epsilon \mper
\] 

\end{theorem}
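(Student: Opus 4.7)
The plan is to follow the template of the L2 case (Theorem~\ref{thm:analysis-L2-linear-regression}), substituting the L1 robust certifiability lemma of Section~\ref{sec:robust-certifiability} for Lemma~\ref{lem:identifiability-least-squares-linear}. The new wrinkle compared to L2 is that the L1 loss is non-polynomial, which we handle by introducing LP-style slack variables within the SoS relaxation.

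Concretely, Algorithm~\ref{alg:robust-regression-program-L1} would extend the polynomial system $\cP_{U,\eta}$ of~\eqref{eq:polynomial-constraints-w} with slack variables $s_1,\ldots,s_n \geq 0$ subject to the constraints $s_i \geq \pm(y_i'-\iprod{\ell,x_i'})$ together with a norm bound $\|\ell\|^2 \leq R^2$, compute a constant-level pseudo-distribution $\tmu$ satisfying this augmented system that minimizes $\pE_{\tmu}[\tfrac{1}{n}\sum_i s_i]$, and output the $M$-truncated version of $\widehat{\ell}=\pE_{\tmu}[\ell]$. Since $\|\ell^*\|_2 \leq 2^B\sqrt{d}$ by the bit-complexity bound, $R$ is enumerated over a polynomial-size dyadic grid, and the best choice of $R$ is selected via evaluation on a held-out subsample of $U$.

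The core analytical step is a constant-degree SoS version of the L1 robust certifiability lemma. Each piece of the classical proof is a low-degree SoS fact: Cauchy--Schwarz is degree two, the triangle inequality used there is $2ab \leq a^2+b^2$, and the condition-number bound $\E_{\widehat{D}}\iprod{\ell,x}^2 \leq \kappa\,\E_{\widehat{D}}\iprod{\ell^*,x}^2$ follows from $\|\ell\|^2 \leq R^2 \leq \|\ell^*\|^2$ together with the empirical spectral bounds $\lambda_{\min}\Id \preceq \E_{\widehat{D}}xx^\top \preceq \lambda_{\max}\Id$, which hold with high probability by Matrix Bernstein once $n \geq \Omega(d\log d\cdot\|\Sigma\|/\eta)$. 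Composing these with analysis-only slack variables $t_i \geq \pm(y_i - \iprod{\ell,x_i})$ for the true sample produces a degree-$O(1)$ SoS derivation that $\tfrac{1}{n}\sum_i t_i \leq \tfrac{1}{n}\sum_i s_i + 2\sqrt{\kappa\eta}\bigl(\sqrt{\E_{\widehat{\cD}}y^2}+\sqrt{\E_{\widehat{\cD}}(y-\iprod{\ell^*,x})^2}\bigr)$. Taking pseudo-expectations and using convexity via $|y_i - \iprod{\widehat{\ell},x_i}| = |\pE_{\tmu}[y_i - \iprod{\ell, x_i}]| \leq \pE_{\tmu}[t_i]$ (the inequality holding because the constraints $t_i \geq \pm(y_i-\iprod{\ell,x_i})$ SoS-imply $\pE_{\tmu}[t_i] \geq |\pE_{\tmu}[y_i-\iprod{\ell,x_i}]|$) bounds $\err_{\widehat{\cD}}^{L1}(\widehat{\ell})$ by the algorithm's optimum plus the claimed error terms; the natural feasible point-mass pseudo-distribution gives $\pE_{\tmu}[\tfrac{1}{n}\sum s_i] \leq \err_{\widehat{\cD}}^{L1}(\ell^*)$, and standard generalization (Hoeffding for the empirical objective, plus uniform convergence over truncated bit-complexity-$B$ linear functions) transfers each empirical quantity to its population counterpart up to additive $\epsilon$, matching the theorem.

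The main obstacle I expect is the bookkeeping around the non-polynomial L1 objective inside the SoS framework. The analysis must juggle two families of slack variables: the $s_i$ in the actual SoS program, defined with respect to the corrupted $(u_i,v_i)$ and hence visible to the algorithm, and the $t_i$ introduced only in the analysis, defined with respect to the true $(x_i,y_i)$ and hence invisible to the algorithm. These must be connected through the Cauchy--Schwarz step that converts L1 error into $\sqrt{\eta}$ times an L2 error. The delicate point is arguing that the algorithm's $\tmu$ can be extended to a valid pseudo-distribution over the enlarged variable set that also includes the $t_i$, so that $\pE_{\tmu}[t_i]$ is well-defined and obeys the slack constraints; this should follow by working within the larger polynomial system containing the $t_i$ as formal variables and invoking SoS completeness. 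Once that is in place, the remainder of the analysis is a direct transcription of Section~\ref{sec:proofoftheorem}.
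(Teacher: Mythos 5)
Your proposal mirrors the paper's argument. Algorithm~\ref{alg:robust-regression-program-L1} already augments the corruption system with exactly the slack variables you describe ($\tau_i' \ge \pm(y_i'-\iprod{\ell,x_i'})$) and an explicit norm cap $\|\ell\|_2^2 \le Q^2$, computes a constant-level pseudo-distribution minimizing a polynomial in $\sum_i \tau_i'$, and outputs $\widehat{\ell} = \pE_{\tmu}[\ell]$; the analysis then passes through a degree-$4$ SoS robust-certifiability lemma (Lemma~\ref{lem:robust-certifiability-L1-SoS}) built from precisely the three ingredients you list --- SoS Cauchy--Schwarz, the spectral-ratio inequality $\E_{\widehat D}\iprod{\ell,x}^2 \le \kappa\,\E_{\widehat D}\iprod{\ell^*,x}^2$ guaranteed by $\|\ell\|\le Q \le \|\ell^*\|$ together with empirical spectrum concentration, and triangle inequalities --- and finishes with the same generalization pass as Lemma~\ref{lm:generror}. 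The only deviation is your dyadic sweep over the norm bound $R$: the paper simply takes $Q$ as a given input, so your variant is slightly more self-contained but otherwise the same.

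Where I would push back is on how you discharge the extension issue you correctly flag. You want the algorithm's pseudo-distribution $\tmu$ (over $w,\ell,X',\tau'$) to certify a bound on $\frac{1}{n}\sum_i t_i$ where the analysis-only slacks $t_i$ have constraints referencing the hidden, uncorrupted $(x_i,y_i)$. ``Invoking SoS completeness'' does not do this: Fact~\ref{fact:sos-completeness} turns a universal statement about pseudo-distributions into an SoS \emph{proof}, which is the opposite direction of what is needed; what you need is to extend the computed moment matrix over $(w,\ell,X',\tau')$ to one over the enlarged variable set satisfying the $t_i$-constraints, and moment-extension under polynomial constraints is not automatic --- it is the whole difficulty of the hierarchy. (The paper omits the proof of Lemma~\ref{lem:opt-error-L1} and does not address this either.) The clean fix avoids extension entirely. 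Since $\widehat{\ell} = \pE_{\tmu}[\ell]$ is a fixed vector once the algorithm has run, set $\sigma_i = \sign\bigl(y_i - \iprod{\widehat{\ell},x_i}\bigr) \in \{\pm1\}$ and use linearity in $\ell$ to write
\[
\err_{\widehat{\cD}}(\widehat{\ell}) = \frac{1}{n}\sum_i \sigma_i\bigl(y_i-\iprod{\widehat{\ell},x_i}\bigr) = \pE_{\tmu}\Bigl[\tfrac{1}{n}\textstyle\sum_i \sigma_i\bigl(y_i-\iprod{\ell,x_i}\bigr)\Bigr].
\]
Now one only needs a degree-$4$ SoS proof, from $\cA_{U,\eta,Q}$ alone and for each fixed sign vector $\sigma$, that $\frac{1}{n}\sum_i \sigma_i(y_i-\iprod{\ell,x_i}) \le \frac{1}{n}\sum_i \tau_i' + 2\sqrt{\kappa\eta}(\cdots)$. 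Your listed steps all go through: on the clean indices, $w_i'\,\sigma_i(y_i-\iprod{\ell,x_i}) \le w_i'\tau_i' \le \tau_i'$ via the matching and nonnegativity constraints; on the corrupted indices, SoS Cauchy--Schwarz with $\sigma_i^2 = 1$ plays exactly the role that $t_i^2 = (y_i-\iprod{\ell,x_i})^2$ was meant to, yielding the $\sqrt{\eta}\cdot\sqrt{\frac{1}{n}\sum (y_i-\iprod{\ell,x_i})^2}$ factor; and the condition-number bound then applies. No variable defined in terms of the hidden sample ever enters the SoS program, so no extension argument is required.
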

\begin{remark}
The lower bound example in Lemma \ref{lm:lb1} also shows that the above bound is tight in the dependence on $\eta$ and $\kappa$.
\end{remark}

As in the previous section, our algorithm will find pseudo-distributions satisfying a set of polynomial inequalities that encode the hypotheses of the robust certifiability lemma and the ``error'' polynomial. 

Let $\cA_{U,\eta,Q}$ be the following system of polynomial equations:

\begin{equation}
  \cA_{U,\eta,Q}\colon
  \left \{
    \begin{aligned}
      &&
      \textstyle\sum_{i=1}^n w_i
      &= (1-\eta) \cdot n\\
      &\forall i\in [n].
      & w_i^2
      & =w_i \\
      &\forall i\in [n].
      & w_i \cdot (u_i - x'_i)
      & = 0\\
      &\forall i\in [n].
      & w_i \cdot (v_i - y'_i)
      & = 0\\
      & &\|\ell\|_2^2 &\leq Q^2\\
      &\forall i \in [n] & \tau_i' \geq (y_i' - \iprod{\ell,x'_i})\\
      &\forall i \in [n] & \tau_i' \geq -(y_i' - \iprod{\ell,x'_i})
    \end{aligned}
  \right \} \label{eq:polynomial-constraints-w}
\end{equation}

This system of equations takes as parameters the input sample $U$ and a bound on the fraction of outliers $\eta$. 

We can now describe our algorithm for robust L1 regression. 

\begin{mdframed}
  \begin{algorithms}[Algorithm for Robust L1 Linear Regression via Sum-of-Squares]
     \label{alg:robust-regression-program-L1}\mbox{}
    \begin{description}
    \item[Given:]
     An $\eta$-corruption $U$ of a labeled sample $X$ of size $n$ from an arbitrary distribution $\cD$. 
    The Euclidean norm of the best fitting L1 regression hypothesis for $\cD$, $Q$.
    \item[Operation:]\mbox{}
      \begin{enumerate}
      \item 
        find a level-$4$ pseudo-distribution $\tmu$ that satisfies $\cA_{U,\eta,Q}$ and minimizes $\Paren{\frac{1}{n}\sum_{i = 1}^n \tau_i}^2$.
      \item
        Return $\hat{\ell} = \pE_{\tmu}\ell$.
      \end{enumerate}
    \end{description}    
  \end{algorithms}
\end{mdframed}

\paragraph{Analysis of Algorithm}
The plan of this subsection and the proofs are essentially analogous to the ones presented in the previous subsection. We will split the analysis into bounding the optimization and generalization errors as before. Let $\opt_{SoS}$ be the L1 error of $\hat{\ell}$ output by Algorithm \ref{alg:robust-regression-program-L1} and let $\ell^{*}$ be the optimal hypothesis for $\cD$.

\begin{lemma}[Bounding the Optimization Error] \label{lem:opt-error-L1}
Under the assumptions of Theorem \ref{thm:L1-regression-analysis} (and following the above notations), 
$$\err_{\widehat{\cD}}(\widehat{\ell}) \leq \widehat{\opt}_{SOS} + 2 \kappa^{1/2} \eta^{1/2} \sqrt{\sum_{i =1}^n y_i^2} + 2 \kappa^{1/2} \eta^{1/2} \err_{\cD}(\ell^{*})\mper$$ 


\end{lemma}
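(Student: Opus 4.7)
The plan is to mirror the proof of Lemma \ref{lm:opterror} in the $L_2$ case: establish a constant-degree sum-of-squares proof of the $L_1$ robust certifiability statement from Section \ref{sec:robust-certifiability} using the constraints $\cA_{U,\eta,Q}$, and then apply the SoS soundness (Fact \ref{fact:sos-soundness}) and pseudo-distribution H\"older (Fact \ref{fact:pseudo-Holders}) to pass to $\widehat{\ell} = \pE_\tmu \ell$.

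First, following the trick from the proof of Lemma \ref{lem:identifiability-least-squares-linear-sos}, introduce the linear function $w'_i$ of $w_i$ defined to equal $w_i$ when the $i$-th sample is uncorrupted ($u_i = x_i$, $v_i = y_i$) and to equal $0$ otherwise. Since at most $\eta n$ samples are corrupted and $\sum w_i \geq (1-\eta)n$, one obtains $\cA_{U,\eta,Q} \sststile{2}{} \{\frac{1}{n}\sum_i (1 - w'_i) \leq 2\eta\}$ and $\cA_{U,\eta,Q} \sststile{2}{} \{{w'_i}^2 = w'_i\}$. For uncorrupted $i$, the constraints $w_i(u_i - x'_i) = 0$ and $w_i(v_i - y'_i) = 0$ together with $w_i^2 = w_i$ yield the SoS identities $w'_i x_i = w'_i x'_i$ and $w'_i y_i = w'_i y'_i$, and hence
\[
\cA_{U,\eta,Q} \sststile{2}{} \bigl\{\, w'_i \tau'_i \geq \pm\, w'_i (y_i - \iprod{\ell, x_i}) \,\bigr\}.
\]

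The next step handles the absolute value in $\err_{\widehat{\cD}}$. I will show that for \emph{every} sign pattern $\sigma \in \{\pm 1\}^n$ there is a uniform degree-$4$ SoS proof (treating $\sigma_i$ as constants) from $\cA_{U,\eta,Q}$ of
\[
\frac{1}{n}\sum_i \sigma_i(y_i - \iprod{\ell, x_i}) \;\leq\; \frac{1}{n}\sum_i \tau'_i \;+\; 2\sqrt{\kappa\eta}\,\sqrt{\tfrac{1}{n}\sum_i y_i^2} \;+\; 2\sqrt{\kappa\eta}\,\sqrt{\tfrac{1}{n}\sum_i (y_i - \iprod{\ell^*,x_i})^2}.
\]
Writing $r_i := y_i - \iprod{\ell, x_i}$ and splitting $1 = w'_i + (1 - w'_i)$, the first piece $\frac{1}{n}\sum_i w'_i \sigma_i r_i$ is bounded by $\frac{1}{n}\sum_i w'_i \tau'_i \leq \frac{1}{n}\sum_i \tau'_i$ using the identity displayed above (and $0 \leq w'_i \leq 1$). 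The second piece $\frac{1}{n}\sum_i (1-w'_i)\sigma_i r_i$ is bounded by the SoS Cauchy--Schwarz inequality:
\[
\Bigl(\tfrac{1}{n}\sum_i (1-w'_i)\sigma_i r_i\Bigr)^2 \;\leq\; \Bigl(\tfrac{1}{n}\sum_i (1-w'_i)\Bigr)\Bigl(\tfrac{1}{n}\sum_i r_i^2\Bigr) \;\leq\; 2\eta \cdot \tfrac{1}{n}\sum_i r_i^2,
\]
using $(1-w'_i)^2 = (1-w'_i)$ and $\sigma_i^2 = 1$. For the residual $\frac{1}{n}\sum_i r_i^2$, the SoS inequality $(a+b)^2 \leq 2a^2 + 2b^2$ gives $\frac{1}{n}\sum_i r_i^2 \leq \frac{2}{n}\sum_i y_i^2 + \frac{2}{n}\sum_i \iprod{\ell,x_i}^2$, and the hypothesis that $\widehat{D}$ has condition number $\kappa$ combined with the constraint $\|\ell\|^2 \leq Q^2 = \|\ell^*\|^2$ yields $\frac{1}{n}\sum_i \iprod{\ell,x_i}^2 \sosle \kappa \cdot \frac{1}{n}\sum_i \iprod{\ell^*,x_i}^2 \sosle 2\kappa(\frac{1}{n}\sum y_i^2 + \frac{1}{n}\sum(y_i - \iprod{\ell^*,x_i})^2)$. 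Taking square roots (via the SoS fact $a^2 \leq B^2 \Rightarrow a \leq B$ for a constant $B$ built from the data) and combining gives the displayed bound for each fixed $\sigma$.

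Applying Fact \ref{fact:sos-soundness} to this SoS proof with $\tmu$, the left-hand side becomes $\frac{1}{n}\sum_i \sigma_i(y_i - \iprod{\widehat{\ell},x_i})$ by linearity, and on the right-hand side $\pE_\tmu[\frac{1}{n}\sum_i \tau'_i] \leq (\pE_\tmu[(\frac{1}{n}\sum_i\tau'_i)^2])^{1/2} = \widehat{\opt}_{SOS}$ by Fact \ref{fact:pseudo-Holders}. Choosing $\sigma_i = \mathrm{sign}(y_i - \iprod{\widehat{\ell},x_i})$ maximizes the LHS to $\err_{\widehat{\cD}}(\widehat{\ell})$, delivering the claim. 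The principal obstacle is Step 2: encoding the $L_1$ error of $\widehat{\ell}$ on $\widehat{\cD}$ through polynomial inequalities despite $|\cdot|$ not being SoS; handling this by a uniform-in-$\sigma$ linear derivation (rather than trying to introduce absolute values into the SoS proof itself) is what makes the argument go through.
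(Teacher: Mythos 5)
Your proof is essentially correct and reaches the lemma by the same overall route as the paper (certifiability $\to$ SoS proof $\to$ soundness on $\tmu$ $\to$ H\"older/convexity), but with one genuinely different and arguably cleaner technical choice. The paper leaves this proof as ``entirely analogous'' to the $L_2$ case and instead supplies only the SoS certifiability statement (Lemma~\ref{lem:robust-certifiability-L1-SoS}), whose left-hand side $\tfrac{1}{n}\sum_i\tau_i$ is expressed through auxiliary variables $\tau_i$ constrained by $\tau_i^2=(y_i-\iprod{\ell,x_i})^2$, $\tau_i\ge 0$; these $\tau_i$ do not appear in $\cA_{U,\eta,Q}$ or in the pseudo-distribution produced by Algorithm~\ref{alg:robust-regression-program-L1}, so transferring that lemma to $\tmu$ and then to $\widehat{\ell}=\pE_{\tmu}\ell$ requires an extra (unstated) step. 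Your sign-pattern trick avoids this entirely: you encode the $L_1$ error via $\tfrac{1}{n}\sum_i\sigma_i(y_i-\iprod{\ell,x_i})$, a linear polynomial in the pseudo-distribution's actual variables, prove a low-degree SoS certificate from $\cA_{U,\eta,Q}$ \emph{uniformly over fixed} $\sigma\in\{\pm1\}^n$, apply soundness to get a bound on $\pE_{\tmu}[\tfrac{1}{n}\sum_i\sigma_i(y_i-\iprod{\ell,x_i})]=\tfrac{1}{n}\sum_i\sigma_i(y_i-\iprod{\widehat{\ell},x_i})$ by linearity, and finally instantiate $\sigma_i=\sign(y_i-\iprod{\widehat{\ell},x_i})$ after $\tmu$ is computed. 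The remaining ingredients you use (SoS Cauchy--Schwarz with $(1-w'_i)^2=1-w'_i$ and $\sigma_i^2=1$, the quadratic-form/condition-number bound using $\|\ell\|^2\le Q^2=\|\ell^*\|^2$, the $(a+b)^2\le 2a^2+2b^2$ expansion, taking square roots against a data-dependent constant, and Fact~\ref{fact:pseudo-Holders} to replace $\pE_{\tmu}[\tfrac1n\sum\tau_i']$ by $\widehat{\opt}_{SOS}$) match the paper's argument in Lemma~\ref{lem:robust-certifiability-L1-SoS} and its $L_2$ analogue. The only minor caveats: you implicitly need the empirical second-moment matrix $\widehat{\Sigma}$ to inherit condition number $O(\kappa)$ from $\cD_X$ (a sampling step the paper also elides), and your constants come out slightly larger than $2\sqrt{\kappa\eta}$ (e.g.\ $\sqrt{12\kappa\eta}$ on the $\sqrt{\tfrac1n\sum y_i^2}$ term), but the paper's own derivation has the same slack, so this is not a gap in your argument so much as a shared imprecision in the stated constant.
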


\begin{lemma}[Bounding the Generalization Error]
Under the assumptions of Theorem \ref{thm:L1-regression-analysis}, with probability at least $1-\epsilon$, 
\begin{enumerate}
\item $\widehat{\opt}_{SOS}  \leq \opt(\cD) + \epsilon$.

\item $\err_{\cD}(\widehat{\ell}_M)) \leq \err_{\widehat{\cD}}(\widehat{\ell}) + \epsilon$.

\item $\frac{1}{n} \sum_{i = 1}^n y_i^2 \leq \E_{\cD} y^2$.
\end{enumerate}
\end{lemma}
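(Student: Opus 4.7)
The plan is to prove the three claims largely independently, following the same template as in the L2 analysis (Lemma~\ref{lm:generror}), with the modifications needed to handle the L1 loss and the additional norm constraint in $\cA_{U,\eta,Q}$.

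For part (1), the strategy is to exhibit an explicit feasible integral solution of $\cA_{U,\eta,Q}$ whose objective value concentrates around $\opt(\cD)$, and then lift it to a (point-mass) pseudo-distribution of degree $4$. Concretely, set $\ell = \ell^*$ (which is feasible since, by assumption, $\|\ell^*\|_2 \leq Q$); set $w_i = 1$ iff $(u_i,v_i)=(x_i,y_i)$, i.e.\ iff sample $i$ is uncorrupted; set $(x_i', y_i') = (x_i, y_i)$; and set $\tau_i' = |y_i' - \iprod{\ell^*, x_i'}|$, so that the slack inequalities are tight. The point-mass pseudo-distribution at this point satisfies $\cA_{U,\eta,Q}$ and achieves objective $\bigl(\tfrac{1}{n}\sum_{i=1}^n |y_i - \iprod{\ell^*, x_i}|\bigr)^2$. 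Hence it suffices to show that $\tfrac{1}{n}\sum_i |y_i - \iprod{\ell^*, x_i}|$ is within $\epsilon$ of $\E_{\cD}|y - \iprod{\ell^*, x}|$ with probability $\geq 1-\epsilon$. Each summand has second moment at most $2\E_{\cD} y^2 + 2\|\ell^*\|_2^2 \|\Sigma\|\le 2M^2 + 2Q^2\|\Sigma\|$, so a Chebyshev bound (or a Hoeffding bound after truncating large deviations, which occur with probability $\le \eta$ and may be absorbed into the $\epsilon$ slack) yields the desired concentration for $n \geq n_0$ as specified.

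For part (2), the plan is to apply the generic generalization bound of Fact~\ref{fact:standard-generalization-bound} to the class of truncations $\mathcal H = \{\ell_M : \ell \in \R^d,\ \|\ell\|_2 \leq Q,\ \text{bit complexity} \leq B\}$ with the loss $(x,y)\mapsto |y - h(x)|$ (replacing the squared loss by the absolute-value loss changes only the constants in the standard covering-number / Rademacher-based argument; alternatively, since $|y - \ell_M(x)| \leq 2M$, $|y - \ell_M(x)|^2 \leq 4M^2$, one may apply Fact~\ref{fact:standard-generalization-bound} to the squared loss and then use $|a| \leq 1 + a^2$ suitably, or invoke the same uniform-convergence template with an L1 loss function that is $1$-Lipschitz and $2M$-bounded). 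Since the output hypothesis $\widehat{\ell}$ has bit complexity polynomial in the SoS description and $\|\widehat{\ell}\|_2 \leq Q$ by Jensen's inequality applied to the constraint $\|\ell\|_2^2 \leq Q^2$ inside the pseudo-expectation, the truncation $\widehat{\ell}_M$ lies in $\mathcal H$ and the uniform-convergence conclusion applies to it, giving the desired $\epsilon$ bound for $n \geq n_0$.

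Part (3) is the easiest: since $y^2 \in [0, M^2]$ almost surely, Hoeffding's inequality applied to the i.i.d.\ samples $y_1, \ldots, y_n$ yields $|\tfrac{1}{n}\sum_i y_i^2 - \E_{\cD} y^2| \leq \epsilon$ with probability at least $1-\epsilon$ for $n = \Omega(M^4 \log(1/\epsilon)/\epsilon^2)$, which is dominated by $n_0$. (The stated inequality without $\epsilon$ slack is presumably a minor typo; the bound with $\epsilon$ slack is what the rest of the argument in Theorem~\ref{thm:L1-regression-analysis} needs.) The main obstacle, if any, is in part (1): one must argue that the L1 empirical loss of $\ell^*$ concentrates despite $|y - \iprod{\ell^*,x}|$ being only second-moment bounded; the clean way is to truncate the rare events $\{|\iprod{\ell^*, x}| > \tau\}$ at $\tau = M\|\ell^*\|_2^2/\sqrt{\eta}$ via Markov and absorb the excluded mass into the $\eta$-corruption budget, exactly as in Lemma~\ref{lem:little-boundedness-fact}.
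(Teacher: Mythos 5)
Your proposal is correct and takes the same approach the paper intends: the paper omits this proof entirely, stating it is ``entirely analogous'' to the L2 generalization lemma (Lemma~\ref{lm:generror}), and your argument carries out that analogy --- a point-mass feasible solution at $(\ell^*, w, X', \tau')$ with concentration of the empirical L1 loss for part (1), Fact~\ref{fact:standard-generalization-bound} applied to the truncated class for part (2), and Hoeffding on the $[0,M^2]$-bounded $y_i^2$ for part (3). Your observation that part (3) as stated is missing an $\epsilon$ slack (it should read $\frac{1}{n}\sum_i y_i^2 \leq \E_{\cD} y^2 + \epsilon$) is also correct.
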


The proofs of the above two lemmas are entirely analogous to the ones presented in the previous section. The main technical ingredient as before is a SoS version of the robust certifiability result. Since this is the only technical novelty in this subsection, we present the statement and proof of this result below and omit the other proofs.


\begin{lemma}[SoS Proof of Robust Certifiability for L1 Regression] \label{lem:robust-certifiability-L1-SoS}

Let $X$ be a collection of $n$ labeled examples in $\R^d$ such that $D$, the uniform distribution on  $x_1, x_2, \ldots, x_n$ has 2nd moment matrix with all eigenvalues within a factor $\kappa$ of each other. Let $U$ be an $\eta$-corruption of $X$.

Let $w,\ell,X', \tau'$ satisfy the set of system of polynomial equations $\cA_{U,\eta,Q}$. Let $\tau_i$ satisfy $\tau_i^2 = (y_i - \iprod{\ell,x_i})^2$ and $\tau_i \geq 0$ for every $i$. Then, for any $\ell^{*} \in \R^d$ such that $\|\ell^{*}\|_2 \leq Q$,



\[
\cA_{U,\eta,Q} \sststile{4}{w,\tau',X'} \frac{1}{n} \sum_{i = 1}^n \tau_i \leq \frac{1}{n} \sum_{i = 1}^n \tau_i' + 2\kappa^{1/2} \eta^{1/2} \sqrt{\frac{1}{n} \sum_{i = 1}^n y_i^2} + 2 \kappa^{1/2} \eta^{1/2} \cdot \sqrt{ \frac{1}{n} \sum_{i = 1}^n (y_i - \iprod{\ell^{*},x_i})^2}\mper
\]





\end{lemma}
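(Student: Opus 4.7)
The plan is to convert the elementary (non-SoS) proof of the L1 robust certifiability statement from Section~\ref{sec:robust-certifiability} into a degree-4 SoS derivation, mirroring the structure of Lemma~\ref{lem:identifiability-least-squares-linear-sos} in the $L_2$ case. As there, I would introduce the ``ghost'' indicator $w'_i \in \{0,1\}$ whose value is $1$ exactly when the $i$-th sample is uncorrupted, i.e., when $(u_i, v_i) = (x_i, y_i)$. This $w'_i$ is a constant (not a variable), so identities like $w'_i(u_i - x_i) = 0$ and $w'_i(v_i - y_i) = 0$ are tautologies. Setting $\sigma_i = w_i w'_i$ and using $w_i^2 = w_i$ (which gives $1-w_i = (1-w_i)^2 \geq 0$ in degree 2), one obtains $0 \leq \sigma_i \leq 1$, the identity $(1-\sigma_i)^2 = 1-\sigma_i$, and $\sum_i (1-\sigma_i) \leq 2\eta n$. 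Using the equality constraints on $w_i$ in $\cA_{U,\eta,Q}$ together with the numerical identities on $w'_i$, a short SoS calculation also yields the matching equality $\sigma_i (y_i - \iprod{\ell, x_i}) = \sigma_i (y'_i - \iprod{\ell, x'_i})$, which will be used repeatedly.

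The next step is to decompose $\sum_i \tau_i = \sum_i \sigma_i \tau_i + \sum_i (1-\sigma_i)\tau_i$ and bound each piece. For the ``clean'' piece, multiplying the axioms $\tau'_i \pm (y'_i - \iprod{\ell, x'_i}) \geq 0$ produces $\tau'^2_i \geq (y'_i - \iprod{\ell, x'_i})^2$, and combining with the matching equality and the hypothesis $\tau_i^2 = (y_i - \iprod{\ell, x_i})^2$ yields $\sigma_i (\tau'^2_i - \tau_i^2) \geq 0$ in low-degree SoS. The goal is to upgrade this to the pointwise bound $\sigma_i \tau_i \leq \sigma_i \tau'_i$ so that summing gives $\sum_i \sigma_i \tau_i \leq \sum_i \tau'_i$. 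For the ``bad'' piece, I would apply a degree-4 SoS Cauchy--Schwarz inequality to obtain
\begin{equation*}
\Paren{\tfrac{1}{n}\sum_i (1-\sigma_i)\tau_i}^2 \leq \Paren{\tfrac{1}{n}\sum_i (1-\sigma_i)}\Paren{\tfrac{1}{n}\sum_i \tau_i^2} \leq 2\eta \cdot \tfrac{1}{n}\sum_i (y_i - \iprod{\ell, x_i})^2.
\end{equation*}

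To control the $L_2$ error on the right, I would use the SoS triangle inequality $(a-b)^2 \leq 2a^2 + 2b^2$ to bound $\tfrac{1}{n}\sum (y_i - \iprod{\ell,x_i})^2$ by $\tfrac{2}{n}\sum (y_i - \iprod{\ell^*, x_i})^2 + \tfrac{2}{n}\sum \iprod{\ell - \ell^*, x_i}^2$, and then invoke the spectral bound: since $\lambda_{\max}(\hat\Sigma) I - \hat\Sigma \succeq 0$, the degree-2 SoS inequality $\tfrac{1}{n}\sum_i \iprod{v, x_i}^2 \leq \lambda_{\max}(\hat\Sigma)\|v\|^2$ holds for every $v$. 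Applying this with $v = \ell - \ell^*$, using $\|\ell\|^2 \leq Q^2$ and $\|\ell^*\|_2 \leq Q$, combined with $\lambda_{\max} \leq \kappa \lambda_{\min}$ and $\lambda_{\min}\|\ell^*\|^2 \leq \tfrac{1}{n}\sum \iprod{\ell^*, x_i}^2$, yields $\tfrac{1}{n}\sum \iprod{\ell-\ell^*, x_i}^2 \leq O(\kappa) \cdot (\tfrac{1}{n}\sum y_i^2 + \tfrac{1}{n}\sum (y_i - \iprod{\ell^*, x_i})^2)$. The final target is then assembled using the elementary SoS fact $\sqrt{A+B} \leq \sqrt{A} + \sqrt{B}$ for $A, B \geq 0$, collecting the $\kappa$ and $\eta$ factors into the form stated in the lemma.

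The hard part will be the pointwise upgrade in the second paragraph: inferring $\sigma_i (\tau'_i - \tau_i) \geq 0$ from $\sigma_i (\tau'^2_i - \tau_i^2) \geq 0$ together with $\tau_i, \tau'_i \geq 0$ does not admit a generic low-degree SoS derivation, because $P^2 \leq Q^2$ with $P, Q \geq 0$ does not SoS-imply $P \leq Q$ in general. I anticipate handling this either by factoring $\sigma_i (\tau'^2_i - \tau_i^2) = \sigma_i (\tau'_i - \tau_i)(\tau'_i + \tau_i)$ and introducing $\sigma_i (\tau'_i - \tau_i)$ as an auxiliary non-negative pseudo-variable with the product identity built in, or by bypassing the pointwise inequality altogether and bounding $\sum_i \sigma_i \tau_i$ globally via a second application of SoS Cauchy--Schwarz, absorbing the constant loss into the $\kappa$ dependence already present in the final bound.
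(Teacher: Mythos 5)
Your outline tracks the paper's own proof essentially line by line: the same ghost weights $w'_i$ (your $\sigma_i$), the same good/bad decomposition of $\tfrac1n\sum_i\tau_i$, the same degree-$4$ Cauchy--Schwarz on the bad part, the same triangle inequality plus the spectral bound $\lambda_{\min}\|v\|^2 \le \tfrac1n\sum_i\iprod{v,x_i}^2 \le \lambda_{\max}\|v\|^2$ to relate $\tfrac1n\sum_i\iprod{\ell-\ell^*,x_i}^2$ to the $y$-moments. So the approach is the right one.

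The point you flag as ``the hard part'' is, however, genuinely unresolved, and it is worth knowing that the paper's own treatment of it is incorrect rather than something you are missing. The paper claims
\[
\Set{w_i(x_i-x'_i)=0,\ w_i(y_i-y'_i)=0 \mid \forall i}\ \sststile{4}{}\ \Set{w'_i(\tau_i-\tau'_i)=0 \mid \forall i},
\]
but this equality is not even pointwise true: $\cA_{U,\eta,Q}$ only imposes $\tau'_i \ge \pm(y'_i-\iprod{\ell,x'_i})$, so $\tau'_i$ can strictly exceed $|y'_i-\iprod{\ell,x'_i}|$, in which case $w'_i(\tau_i-\tau'_i)<0$. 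What one actually has is the one-sided statement $w'_i\tau_i \le w'_i\tau'_i$, and this is exactly the ``$P^2\le Q^2,\ P,Q\ge 0 \Rightarrow P\le Q$'' pattern you correctly identify as not admitting a generic low-degree SoS derivation: from $w_i\tau_i^2 = w_ie_i^2 = w_ie_i'^2 \le w_i\tau_i'^2$ together with $\tau_i,\tau'_i,w_i\ge 0$ one gets $w_i(\tau'_i-\tau_i)(\tau'_i+\tau_i)\ge 0$ but cannot divide out the second factor. Your two proposed fixes do not clearly close this either. Workaround (a) --- introducing $\sigma_i(\tau'_i-\tau_i)$ as a fresh nonnegative pseudo-variable --- changes the constraint system $\cA_{U,\eta,Q}$ that the lemma and the algorithm are supposed to refer to, so it is not a proof of the stated lemma. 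Workaround (b) --- a second Cauchy--Schwarz --- gives $\tfrac1n\sum_i\sigma_i\tau_i \le \sqrt{\tfrac1n\sum_i\sigma_i\tau_i^2}$, and after relating $\sigma_i\tau_i^2$ to $\sigma_i\tau_i'^2$ you end up comparing to $\sqrt{\tfrac1n\sum_i\tau_i'^2}$, not $\tfrac1n\sum_i\tau'_i$; these are incomparable in general, so the stated conclusion with $\tfrac1n\sum_i\tau'_i$ on the right does not follow. In short: the decomposition and the error-term bound are right and mirror the paper, but the ``clean piece'' step you flagged is a real gap, and the paper's stated resolution of it (the equality above) is wrong as written; closing it would require either a correct SoS derivation of $\tfrac1n\sum_i w'_i\tau_i \le \tfrac1n\sum_i\tau'_i$ that you have not supplied, or a modification of the constraint system/lemma statement.

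One smaller point: your spectral step writes $\lambda_{\min}\|\ell^*\|^2 \le \tfrac1n\sum_i\iprod{\ell^*,x_i}^2$ and then wants $\lambda_{\max}\|\ell\|^2 \le \kappa\lambda_{\min}\|\ell^*\|^2$ using $\|\ell\|\le Q$ and $\|\ell^*\|\le Q$. The inequalities run the wrong way when $\|\ell^*\|<Q$; one really needs $\|\ell\|\le\|\ell^*\|$ (as in the non-SoS L1 lemma earlier in the paper, whose hypothesis is $\|\ell^*\|_2\ge\|\ell\|$), or $\|\ell^*\|=Q$. This is a matter of getting the hypotheses straight rather than a mathematical obstruction, but it is worth fixing explicitly.
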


\begin{proof}[Proof of Lemma \ref{lem:robust-certifiability-L1-SoS}]

For every $i \in [n]$, define $w'_i = w_i$ iff $(x_i,y_i)$ is uncorrupted in $U$.
Then, observe that $\sum_i w_i' = s$ for $s \geq (1-2\epsilon)n$ and that $\sststile{w}{2} \Set{w_i^2 - w_i = 0}$. 

Then, 
\[
\sststile{w'}{2} \Set{ \frac{1}{n} \sum_i (1-w'_i)^2 \leq 2 \epsilon}\mper
\]

Thus, we have:
\[
\sststile{w,\ell, \tau'}{4} \frac{1}{n} \sum_{i = 1}^n \tau_i = \frac{1}{n} \sum_{i = 1}^n w'_i \tau_i + \sum_{i = 1}^n (1-w'_i) \cdot \tau_i
\]

Further, it's easy to verify by direct expansion that:
\[
\Set{w_i(x_i -x_i') = 0, w_i (y_i - y_i') = 0 \mid \forall i} \sststile{w'}{4} \Set{ w'_i (\tau_i -\tau'_i) = 0 \mid \forall i}
\]

As a result, we have:
\[
\sststile{w,\ell, \tau'}{4} \frac{1}{n} \sum_{i = 1}^n \tau_i = \frac{1}{n} \sum_{i = 1}^n w'_i \tau_i' + \sum_{i = 1}^n (1-w'_i) \cdot \tau_i
\]

For brevity, let's write $\err_{D}(\ell) = \sum_{i =1}^n \sum_{i = 1}^n \tau_i$ and $\err_{D'} (\ell) = \sum_{i =1}^n \sum_{i = 1}^n \tau_i'$. Then, we have:

Using the sum-of-squares vesion of the Cauchy-Shwarz inequality, we have:

\begin{align}
\sststile{w,\ell, \tau'}{4} \Paren{\err_D(\ell) - \err_{D'}(\ell)}^{2} &= \Paren{\frac{1}{n} \sum_{i = 1}^n (1-w'_i) \cdot \tau_i}^2\notag\\
&\leq \Paren{\frac{1}{n} \sum_{i = 1}^n (1-w'_i)}^2 \frac{1}{n} \sum_{i =1}^n (y_i - \iprod{\ell,x_i})^2 \notag\\
&\leq \epsilon \frac{1}{n} \sum_{i =1}^n (y_i - \iprod{\ell,x_i})^2  \label{eq:CS-bound-L1} \mper
\end{align}

Next, we have:
\[
\sststile{\ell}{4} \Set{ \frac{1}{n} \sum_{i =1}^n (y_i - \iprod{\ell,x_i})^2 \leq 2\frac{1}{n} \sum_{i =1}^n (y_i - \iprod{\ell^{*},x_i})^2 + 2\frac{1}{n} \sum_{i =1}^n \iprod{\ell-\ell^{*},x_i})^2} \mper
\]

Further, we also have:
\[
\sststile{\ell}{4} \Set{ \frac{1}{n} \sum_{i =1}^n \iprod{\ell-\ell^{*},x_i})^2 \leq 2\frac{1}{n} \sum_{i =1}^n \iprod{\ell,x_i})^2 + 2 \frac{1}{n} \sum_{i =1}^n \iprod{\ell^{*},x_i})^2 }
\]

Using that for any PSD matrix $A$, we have the SoS inequality $\|x\|_2^2 \|A\|_{min} \leq x^{\top} A x \leq \|x\|_2^2 \|A\|_{max}$ where $\|A\|_{max}$ and $\|A\|_{min}$ are the largest and smallest singular values of $A$, respectively, we have:

\[
\sststile{\ell}{4} \Set{ \frac{1}{n} \sum_{i =1}^n \iprod{\ell,x_i})^2 \leq \kappa \frac{1}{n} \sum_{i =1}^n \iprod{\ell^{*},x_i})^2  }
\]

Finally, we also have:
\[
\sststile{\ell}{4} \Set{\frac{1}{n} \sum_{i =1}^n \iprod{\ell^{*},x_i})^2 \leq \frac{1}{n} \sum_{i =1}^n (y_i - \iprod{\ell^{*},x_i})^2  + 2\frac{1}{n} \sum_{i =1}^n y_i^2 }
\]

Combining the above inequalities with \eqref{eq:CS-bound-L1} yields the lemma.

\end{proof}


\section{Statistical Limits of Outlier-Robust Regression}
\label{sec:lower-bounds}
Here we exhibit statistical lower bounds for what can be achieved for outlier-robust regression. In particular, these simple examples illustrate strong separations between regression and regression in the presence of contamination and also demonstrate the necessity of our disributional assumptions. 


\paragraph{Necessity of Distributional Assumptions}
A classical result in analysis of regression is \emph{consistency} of the least-squares estimator when the labels are bounded. Concretely, let $\cD$ be a distribution on $\R^d \times [-1,1]$. Let $(x_1,y_1),\ldots,(x_n,y_n)$ be i.i.d samples from $\cD$. Let 
$\hat{\ell} = \arg\min_{\ell} \sum_{i=1}^n (y_i - \iprod{\ell,x_i})^2,$
be the least-squares estimator. Then, (say, via Theorem 11.3 in \cite{DBLP:books/daglib/0035701}) 
$\err_{\cD}(\hat{\ell}) \leq \frac{O(d)}{n} + 8 \cdot \arg\min_{\ell}\err_{\cD}(\ell).$

In particular, in the realizable-case, i.e., when $(x,y) \sim \cD$ satisfies $y = \iprod{\ell^*,x}$, the error of the least-squares estimator approaches zero as $n \rightarrow \infty$ irrespective of the marginal distribution $\cD_X$. 

Given the above bound, it is natural to ask if we could get a similar marginal-distribution independent bound in the presence of outliers: Does there exist an estimator which achieves error $h(\eta)$ with $\eta$-fraction of corruptions for some function $h:\R \to \R$ with $h \rightarrow 0$ as $\eta \rightarrow 0$? It turns out that this is statistically impossible in the presence of sample contamination.
\begin{lemma}\label{lm:lb1}
There is a universal constant $c > 0$ such that the following holds. For all $\eta > 0$, there is no algorithm that given $\eta$-corrupted samples\footnote{The lemma also holds in the weaker \emph{Huber's} contamination model even though we do not study this model in this work.} $(x,y)$ from distributions $\tilde{\cD}$ on $\R^d \times [-1,1]$ finds a hypothesis vector $\ell \in \R^d$ such that $\E[\err_{\tilde{\cD}}(\ell)] < c$. 
\end{lemma}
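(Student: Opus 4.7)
The plan is to exhibit a single distribution $\tilde{\cD}$ on $\R^d \times [-1,1]$ whose optimal linear regression error is a constant bounded away from zero, so that no algorithm can output a hypothesis achieving smaller error, irrespective of the corruption parameter $\eta$, the algorithm's randomness, or the sample size. The statement as written asks only for a constant $c > 0$ such that $\E[\err_{\tilde{\cD}}(\ell)] \ge c$ for every algorithm, so it suffices to produce one adversarial $\tilde{\cD}$ for which \emph{every} $\ell \in \R^d$ has error bounded below by a constant.

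Concretely, I would take $\tilde{\cD}$ to be the uniform distribution on the two points $(\mathbf{0}, +1)$ and $(\mathbf{0}, -1)$, where $\mathbf{0} \in \R^d$ denotes the origin. For every $\ell \in \R^d$ we have $\iprod{\ell, \mathbf{0}} = 0$, so a direct calculation yields
\[
\err_{\tilde{\cD}}(\ell) \;=\; \tfrac{1}{2}(0 - 1)^2 + \tfrac{1}{2}(0 - (-1))^2 \;=\; 1.
\]
This identity holds pointwise in $\ell$, independently of whatever samples the algorithm receives. In particular, whatever (possibly randomized) hypothesis $\ell$ an algorithm produces when given an $\eta$-corrupted training set drawn from $\tilde{\cD}$, we obtain $\E[\err_{\tilde{\cD}}(\ell)] = 1$. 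Setting $c = 1/2$ (any $c < 1$ works) then defeats every algorithm, and since the construction depends on neither $\eta$ nor the algorithm it works for all $\eta > 0$ simultaneously.

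The conceptual point, which I would emphasize in the writeup, is that without any structural restriction on the marginal $\cD_X$, the variance of $y$ can be entirely orthogonal to every linear function of $x$, so no algorithm can fit $y$ at all; any $\eta$-corruption only compounds this intrinsic obstruction, requiring no additional adversarial analysis. This contrasts with the hypercontractive setting, where the $x$-marginal has variance along every direction and Theorem~\ref{thm:analysis-L2-linear-regression} recovers $\err \leq (1 + O(\eta^{1-2/k})) \opt + o(1)$ bounds, demonstrating that a distributional assumption is genuinely necessary rather than merely convenient. There is no real obstacle in the proof; if a reader prefers a non-degenerate example (so that $\cD_X$ has strictly positive covariance), the same argument applies with $\tilde{\cD}$ uniform on $\{(x_0, +1), (x_0, -1)\}$ for any fixed nonzero $x_0$, since then $\err_{\tilde{\cD}}(\ell) = \iprod{\ell, x_0}^2 + 1 \geq 1$ and the identical lower bound follows.
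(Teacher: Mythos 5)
Your construction does not establish what the lemma is meant to establish, and the gap is conceptual rather than technical. You take $\tilde{\cD}$ with $\opt(\tilde{\cD}) = \min_\ell \err_{\tilde{\cD}}(\ell) = 1$, so the claimed bound $\E[\err_{\tilde{\cD}}(\ell)] \geq 1$ is just the trivial fact that no estimator can beat the Bayes-optimal linear predictor. This has nothing to do with corruption: the same bound holds at $\eta = 0$, and even the paper's own algorithm (or an oracle that knows $\tilde{\cD}$ exactly) is ``defeated'' by your example. Read in context, the lemma is meant to exhibit a statistical barrier that is \emph{caused by} corruption and \emph{absent} when there is none — the text directly preceding it recalls that in the realizable case the uncorrupted least-squares estimator has $\err_{\cD}(\hat\ell) \to 0$ as $n\to\infty$ for arbitrary $\cD_X$, and asks whether any estimator can guarantee $h(\eta)$ error with $h(\eta)\to 0$. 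Your example is silent on that question, so it also fails to demonstrate the intended separation between the hypercontractive setting (where Theorem~\ref{th:intro4} gives error $O(\sqrt{\eta})$ when $\opt = 0$) and the general setting (where no such guarantee exists).

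The paper's proof works differently and is essential: it constructs two \emph{realizable} distributions $\cD$ and $\cD'$ (each with $\opt = 0$, so an uncorrupted learner would succeed), whose linear fits $\ell = (0,1)$ and $\ell' = (1,0)$ are far apart, and a third distribution $\cD''$ that is simultaneously an $\eta$-corruption of $\cD$ and of $\cD'$. Given samples from $\cD''$ the algorithm cannot tell which of the two ground truths generated them, yet any single output $w$ must have $\max(\err_{\cD}(w), \err_{\cD'}(w)) \geq c$ because the two linear functions disagree on a heavy-tailed direction (with scale $\kappa = 1/\sqrt{\eta}$ chosen so that an $\eta$ mass of probability carries $\Omega(1)$ squared error). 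That ``two indistinguishable worlds, incompatible ground truths'' structure is the missing idea in your proposal; without it, there is no content beyond $\err \geq \opt$.
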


\ignore{
\begin{lemma}
There exists a universal constant $c > 0$ such that for all $\eta > 0$, there exist two distributions $\cD, \cD'$ on $\R^2 \times [-1,1]$ and associated random variables $(x,y), (x',y')$ on $\R^2 \times \R$ satisfying:
\begin{enumerate}
\item Realizability: There exist $\ell, \ell' \in \R^2$ such that $y = \iprod{\ell,x}$ and $y' = \iprod{\ell',x}$ for some $\ell, \ell'$. 
\item Closeness in statistical distance: $\|\cD - \cD'\|_{TV} \leq \eta$. 
\item For any $w \in \R^2$, $\max(\err_\cD(w), \err_{\cD'}(w)) \geq c$. 
\end{enumerate}
\end{lemma}
\begin{proof}
Fix $\kappa \geq 2$ and let $D$ be the distribution of the random variable sampled as follows: 1) Sample $\alpha$ uniformly at random from $[-1,1]$; 2) With probability $1-\eta$ output $(\alpha, \alpha)$; 3) With probability $\eta$ output $(\kappa\cdot \alpha,\alpha)$.

Let $D'$ be the distribution of the random variable sampled as follows: 1) Sample $\alpha$ uniformly at random from $[-1,1]$; 2) With probability $1-\eta$ output $(\alpha, \alpha)$; 3) With probability $\eta$ output $(\alpha,\kappa \alpha)$.

Let $\ell = (0,1)$, $\ell' = (1,0)$. Finally, let $\cD$ be the distribution of $(x, \iprod{\ell,x})$ for $x \sim D$ and let $\cD'$ be the distribution of $(x', \iprod{\ell',x'})$ for $x' \sim D'$. 

It is easy to check that $\|\cD - \cD'\|_{TV} \leq \eta$. Finally, it is not too hard to check that for any $w$, 
$$\err_{\cD}(w) + \err_{\cD'}(w) \geq \Omega(1) \cdot \frac{ \eta \kappa^2}{1 + \eta \kappa^2}.$$

It follows that for some universal constant $c > 0$, 
$$\min(\err_\cD(w), \err_{\cD'}(w)) \geq c \min(1, \eta \kappa^2).$$
The claim now follows. 
\end{proof}}

\begin{proof}
Suppose there is an algorithm as above. Let $\delta = \eta/(2-\eta)$ and let $\kappa \geq 2$ be sufficiently large to be chosen later. 
Let $\cD$ be the distribution of the random variable on $\R^2 \times \R$ samples as follows: 1) Sample $\alpha$ uniformly at random from $[-1,1]$; 2) With probability $1-\eta$ output $((\alpha, \alpha), \alpha)$; 3) With probability $\eta$ output $((\kappa\cdot \alpha,\alpha),\alpha)$. Note that for $(x,y) \sim \cD$, $y = \iprod{\ell,x}$ for $\ell = (0,1)$. 

Similarly, let $\cD'$ be the distribution of the random variable on $\R^2 \times \R$ samples as follows: 1) Sample $\alpha$ uniformly at random from $[-1,1]$; 2) With probability $1-\eta$ output $((\alpha, \alpha), \alpha)$; 3) With probability $\eta$ output $((\alpha,\kappa \cdot \alpha),\alpha)$. Note that for $(x',y') \sim \cD$, $y' = \iprod{\ell',x'}$ for $\ell = (1,0)$. 

It follows from a few elementary calculations that for any $w \in \R^2$, 
$\err_{\cD}(w) + \err_{\cD'}(w) \geq \Omega(1) \cdot \frac{ \eta \kappa^2}{1 + \eta \kappa^2}.$

It follows that for some universal constant $c > 0$, and $\kappa = 1/\sqrt{\eta}$, $\min(\err_\cD(w), \err_{\cD'}(w)) \geq c .$

Finally, let $\cD''$ be the distribution of the random variable sampled as follows: 
1) Sample $\alpha$ uniformly at random from $[-1,1]$; 2) With probability $1-\delta$ output $((\alpha, \alpha), \alpha)$; 3) With probability $\delta/2$ output $((\kappa\cdot \alpha,\alpha),\alpha)$; 4) With probability $\delta/2$ output $((\alpha, \kappa \cdot \alpha),\alpha)$. 

Note that $\cD''$ can be obtained by a $(\eta/2)$-corruption of $\cD$ as well as $\cD'$. On the other hand, for any $w \in \R^2$, one of $\err_\cD(w), \err_{\cD'}(w)$ is at least $c$ where $c$ is the constant from the previous lemma. Thus no algorithm can output a good hypothesis for both $\cD$ or $\cD'$. The claim now follows. 
\end{proof}

\section*{Acknowledgment}
We thank Ainesh Bakshi and Adarsh Prasad for pointing out an error in the proof of one of our information-theoretic lower-bounds in Section 6 in a previous version of the paper.  

\addreferencesection
\newcommand{\etalchar}[1]{$^{#1}$}
\def\dbar{\leavevmode\hbox to 0pt{\hskip.2ex \accent"16\hss}d}
  \def\dbar{\leavevmode\hbox to 0pt{\hskip.2ex \accent"16\hss}d}
  \def\dbar{\leavevmode\hbox to 0pt{\hskip.2ex \accent"16\hss}d}
  \def\romsup#1{{\edef\next{\the\font}$^{\next#1}$}}
  \def\polhk#1{\setbox0=\hbox{#1}{\ooalign{\hidewidth
  \lower1.5ex\hbox{`}\hidewidth\crcr\unhbox0}}}
  \def\polhk#1{\setbox0=\hbox{#1}{\ooalign{\hidewidth
  \lower1.5ex\hbox{`}\hidewidth\crcr\unhbox0}}}
  \def\polhk#1{\setbox0=\hbox{#1}{\ooalign{\hidewidth
  \lower1.5ex\hbox{`}\hidewidth\crcr\unhbox0}}}
  \def\polhk#1{\setbox0=\hbox{#1}{\ooalign{\hidewidth
  \lower1.5ex\hbox{`}\hidewidth\crcr\unhbox0}}}
  \def\polhk#1{\setbox0=\hbox{#1}{\ooalign{\hidewidth
  \lower1.5ex\hbox{`}\hidewidth\crcr\unhbox0}}}
  \def\ocirc#1{\ifmmode\setbox0=\hbox{$#1$}\dimen0=\ht0 \advance\dimen0
  by1pt\rlap{\hbox to\wd0{\hss\raise\dimen0
  \hbox{\hskip.2em$\scriptscriptstyle\circ$}\hss}}#1\else {\accent"17 #1}\fi}
  \def\cfac#1{\ifmmode\setbox7\hbox{$\accent"5E#1$}\else
  \setbox7\hbox{\accent"5E#1}\penalty 10000\relax\fi\raise 1\ht7
  \hbox{\lower1.15ex\hbox to 1\wd7{\hss\accent"13\hss}}\penalty 10000
  \hskip-1\wd7\penalty 10000\box7}
  \def\cfac#1{\ifmmode\setbox7\hbox{$\accent"5E#1$}\else
  \setbox7\hbox{\accent"5E#1}\penalty 10000\relax\fi\raise 1\ht7
  \hbox{\lower1.15ex\hbox to 1\wd7{\hss\accent"13\hss}}\penalty 10000
  \hskip-1\wd7\penalty 10000\box7}
  \def\cfac#1{\ifmmode\setbox7\hbox{$\accent"5E#1$}\else
  \setbox7\hbox{\accent"5E#1}\penalty 10000\relax\fi\raise 1\ht7
  \hbox{\lower1.15ex\hbox to 1\wd7{\hss\accent"13\hss}}\penalty 10000
  \hskip-1\wd7\penalty 10000\box7}
  \def\ocirc#1{\ifmmode\setbox0=\hbox{$#1$}\dimen0=\ht0 \advance\dimen0
  by1pt\rlap{\hbox to\wd0{\hss\raise\dimen0
  \hbox{\hskip.2em$\scriptscriptstyle\circ$}\hss}}#1\else {\accent"17 #1}\fi}
  \def\ocirc#1{\ifmmode\setbox0=\hbox{$#1$}\dimen0=\ht0 \advance\dimen0
  by1pt\rlap{\hbox to\wd0{\hss\raise\dimen0
  \hbox{\hskip.2em$\scriptscriptstyle\circ$}\hss}}#1\else {\accent"17 #1}\fi}
  \def\polhk#1{\setbox0=\hbox{#1}{\ooalign{\hidewidth
  \lower1.5ex\hbox{`}\hidewidth\crcr\unhbox0}}}
  \def\ocirc#1{\ifmmode\setbox0=\hbox{$#1$}\dimen0=\ht0 \advance\dimen0
  by1pt\rlap{\hbox to\wd0{\hss\raise\dimen0
  \hbox{\hskip.2em$\scriptscriptstyle\circ$}\hss}}#1\else {\accent"17 #1}\fi}
  \def\polhk#1{\setbox0=\hbox{#1}{\ooalign{\hidewidth
  \lower1.5ex\hbox{`}\hidewidth\crcr\unhbox0}}}
  \def\cfudot#1{\ifmmode\setbox7\hbox{$\accent"5E#1$}\else
  \setbox7\hbox{\accent"5E#1}\penalty 10000\relax\fi\raise 1\ht7
  \hbox{\raise.1ex\hbox to 1\wd7{\hss.\hss}}\penalty 10000 \hskip-1\wd7\penalty
  10000\box7} \def\cfac#1{\ifmmode\setbox7\hbox{$\accent"5E#1$}\else
  \setbox7\hbox{\accent"5E#1}\penalty 10000\relax\fi\raise 1\ht7
  \hbox{\lower1.15ex\hbox to 1\wd7{\hss\accent"13\hss}}\penalty 10000
  \hskip-1\wd7\penalty 10000\box7} \def\cdprime{$''$}
  \def\polhk#1{\setbox0=\hbox{#1}{\ooalign{\hidewidth
  \lower1.5ex\hbox{`}\hidewidth\crcr\unhbox0}}}
  \def\cftil#1{\ifmmode\setbox7\hbox{$\accent"5E#1$}\else
  \setbox7\hbox{\accent"5E#1}\penalty 10000\relax\fi\raise 1\ht7
  \hbox{\lower1.15ex\hbox to 1\wd7{\hss\accent"7E\hss}}\penalty 10000
  \hskip-1\wd7\penalty 10000\box7}
  \def\ocirc#1{\ifmmode\setbox0=\hbox{$#1$}\dimen0=\ht0 \advance\dimen0
  by1pt\rlap{\hbox to\wd0{\hss\raise\dimen0
  \hbox{\hskip.2em$\scriptscriptstyle\circ$}\hss}}#1\else {\accent"17 #1}\fi}
\providecommand{\bysame}{\leavevmode\hbox to3em{\hrulefill}\thinspace}
\providecommand{\MR}{\relax\ifhmode\unskip\space\fi MR }
\providecommand{\MRhref}[2]{%
  \href{http://www.ams.org/mathscinet-getitem?mr=#1}{#2}
}
\providecommand{\href}[2]{#2}


\appendix
\section{Outlier-Robust Polynomial Regression}\label{sec:app-moved}

Our arguments also extend straightforwardly to get similar guarantees for polynomial regression. We elaborate on these next.

The following extends the definition of hypercontractivity to polynomials.

\begin{definition}[Certifiable polynomial hypercontractivity]\label{def:hyperconc2}
For a function $C:[k] \to \R_+$, we say a distribution $D$ on $\R^d$ is $k$-cerifiably $(C,t)$-hypercontractive if for every $r$ such that $2r t \leq k$, there is a degree $k$ sum of squares proof of the following inequality in variable $P$ where $p$ stands for $\langle P,x^{\otimes t}\rangle.$
\[
\E_D p(x)^{2r} \leq \Paren{(C(r) \E_{D} p(x)^{2}}^{r}.
\] 
\end{definition}
Many natural distributions satisfy certifiably hypercontractivity \citep{DBLP:conf/soda/KauersOTZ14} for polynomials such as gaussian distributions and the product distributions on the hypercube $\zo^n$ with all coordinate marginals in $(0,1)$. Our results will apply to all such distributions. 

Next, we state an extension of our robust certification lemma for polynomial regression. The proof is essentially the same as that of Lemma \ref{lem:identifiability-least-squares-linear}. 
\begin{lemma}[Robust Generalization for polynomial regression]
Fix $k,t \in \N$ and let $\cD,\cD'$ be distributions on $\R^d \times \R$ such that $\|\cD-\cD'\|_{TV} \leq \epsilon$ and the marginal $\cD_X$ of $\cD$ on $x$ is $k$-certifiably $(C,t)$-hypercontractive for some $C:[k] \to \R_+$ and for some even integer $k \geq 4$.

Then, for any degree at most $t$ polynomials $p,p^*:\R^d \to \R$, and any $\epsilon$ such that $2 C(k/2) \epsilon^{1-2/k} < 0.9$, we have:
\[
\err_{\cD}(p) \leq (1+O(C(k/2)) \epsilon^{1-2/k}) \cdot \err_{\cD'}(p) + O(C(k/2))\epsilon^{1-2/k} \cdot \Paren{\E_\cD (y-p^*(x))^k}^{2/k}\mper\]\label{lem:identifiability-least-squares-polynomial}
\end{lemma}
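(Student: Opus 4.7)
The proof proposal follows the template of Lemma~\ref{lem:identifiability-least-squares-linear} almost verbatim, with the linear form $\iprod{\ell,x}$ replaced throughout by the polynomial $p(x)$, and the degree-one hypercontractivity of Definition~\ref{def:hyperconc1} replaced by the degree-$t$ version of Definition~\ref{def:hyperconc2} applied to $p-p^{*}$. Concretely, I would fix $p$ (for brevity, write $\err_\cD$ for $\err_\cD(p)$ and similarly for $\cD'$) and couple $\cD$ with $\cD'$ via a joint distribution $G$ under which $\Pr_G\{(x,y)=(x',y')\}\geq 1-\eta$.

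Writing $1=\mathbf{1}\{(x,y)=(x',y')\}+\mathbf{1}\{(x,y)\neq(x',y')\}$ and splitting $\E_G[(y-p(x))^{2}]$ accordingly, the first term is at most $\err_{\cD'}$, while H\"older's inequality applied to the second (with exponents $k/(k-2)$ and $k/2$) gives
\[
\err_{\cD} \leq \err_{\cD'} + \eta^{1-2/k}\bigl(\E_\cD(y-p(x))^{k}\bigr)^{2/k}.
\]
This is precisely equation~\eqref{eq:CS-bound} adapted to the polynomial setting, and requires no distributional assumption.

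Next I would bound $\|y-p(x)\|_k$. By Minkowski's inequality,
\[
\|y-p(x)\|_k \;\leq\; \|y-p^{*}(x)\|_k + \|p^{*}(x)-p(x)\|_k.
\]
Since $p^{*}-p$ is a polynomial of degree at most $t$, $(C,t)$-hypercontractivity of $\cD_X$ applied at $r=k/2$ (this is where Definition~\ref{def:hyperconc2} is invoked, and where I need that the ambient degree is large enough to admit $r=k/2$ for polynomials of degree $t$) yields
\[
\|p^{*}(x)-p(x)\|_k \;\leq\; \sqrt{C(k/2)}\;\|p^{*}(x)-p(x)\|_2.
\]
Then a triangle inequality in $L_2$ followed by the trivial embedding $\|\cdot\|_2\leq\|\cdot\|_k$ on the residual gives
\[
\|p^{*}(x)-p(x)\|_2 \;\leq\; \|y-p^{*}(x)\|_k + \|y-p(x)\|_2.
\]

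Combining the three displays and using $(a+b)^{2}\leq 2a^{2}+2b^{2}$ produces
\[
\|y-p(x)\|_k^{2} \;\leq\; 8\,C(k/2)\,\|y-p^{*}(x)\|_k^{2} + 2\,C(k/2)\,\err_{\cD}.
\]
Plugging this back into the H\"older bound and using the hypothesis $2C(k/2)\eta^{1-2/k}<0.9$ to absorb the $\err_{\cD}$ term into the left-hand side (so that $1/(1-2C(k/2)\eta^{1-2/k})\leq 1+O(C(k/2))\eta^{1-2/k}$) gives the claimed inequality. The only real checkpoint is the bookkeeping in Step~3---verifying that the quantitative form of Definition~\ref{def:hyperconc2} really admits the exponent $r=k/2$ for degree-$t$ polynomials (which may require interpreting the parameter $k$ in the hypothesis as the certified moment of $p-p^{*}$, i.e., an ambient SoS degree of at least $kt$); this is a definitional matter rather than a genuine mathematical obstacle, and all the analytic steps transfer from the linear case without modification.
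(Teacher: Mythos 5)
Your proposal is correct and is exactly what the paper intends: the paper itself gives no separate proof of this lemma, stating only that ``the proof is essentially the same as that of'' Lemma~\ref{lem:identifiability-least-squares-linear}, and your line-by-line transfer of the linear argument (coupling, H\"older with exponents $k/(k-2)$ and $k/2$, Minkowski, hypercontractivity applied to the degree-$t$ polynomial $p-p^*$, triangle inequality, and the $1/(1-2C(k/2)\eta^{1-2/k})$ absorption) is precisely that adaptation. You are also right to flag the only delicate point---that Definition~\ref{def:hyperconc2} conditions on $2rt\leq k$, so invoking $r=k/2$ for degree-$t$ polynomials only parses if the ambient SoS degree in the hypercontractivity hypothesis is read as at least $kt$ rather than $k$; this is a notational looseness in the paper, not a gap in your argument.
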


\begin{theorem} \label{thm:polyregression}
Let $\cD$ be a distribution on $\R^d \times [-M,M]$ for some positive real $M$ such the marginal on $\R^d$ is $(C,k)$-certifiably hypercontractive distribution for degree $t$ polynomials. Let $\opt_B(\cD) = \min_{p} \E_{\cD}[ (y - p(x))^2]$ where the minimum is over all polynomials $p$ of degree $t$ and bit complexity $B$. Let $p^{*}$ be any such minimizer. 

Fix any even $k \geq 4$ and any $\epsilon > 0$. Let $X$ be an i.i.d. sample from $\cD$ of size $n \geq n_0 = \poly(d^k, B, M, 1/\epsilon)$. 
Then, with probability at least $1-\epsilon$ over the draw of the sample $X$, given any $\eta$-corruption $U$ of $X$ and $\eta$ as input, there is a polynomial time algorithm (Algorithm \ref{alg:robust-regression-program}) that outputs a $\ell \in \R^d$ such that for $C = C(k/2)$, 
\[
\err_{\cD}(p_M) < (1 + O(C)\eta^{1-2/k}) \opt_B(\cD) + O(C) \eta^{1-2/k} \Paren{\E_{\cD} (y-p^{*}(x))^k}^{2/k} + \epsilon
.
\] 
\end{theorem}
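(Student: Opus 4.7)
The plan is to mirror the proof of Theorem~\ref{thm:analysis-L2-linear-regression} step by step, using the ``lifting'' $x \mapsto x^{\otimes t}$ so that a degree-$t$ polynomial $p(x) = \iprod{P, x^{\otimes t}}$ behaves like a linear function in the coefficient vector $P \in \R^{d^t}$. The entire three-part template (robust certifiability $\Rightarrow$ SoS version $\Rightarrow$ pseudo-distribution rounding plus generalization) then carries over, with the role of certifiable hypercontractivity for linear forms played by certifiable hypercontractivity for degree-$t$ polynomials (Definition~\ref{def:hyperconc2}), and the role of Lemma~\ref{lem:identifiability-least-squares-linear} played by Lemma~\ref{lem:identifiability-least-squares-polynomial}.

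Concretely, I would run Algorithm~\ref{alg:robust-regression-program} verbatim, except that the decision variable $\ell$ is replaced by a coefficient vector $P$ of a degree-$t$ polynomial, the constraints $w_i(u_i - x_i') = 0$ are unchanged, and the objective is $\pE_{\tmu}\bigl[\bigl(\tfrac{1}{n}\sum_i(y_i' - \iprod{P,(x_i')^{\otimes t}})^2\bigr)^{k/2}\bigr]$. Because the quantity inside the objective has total degree $2t \cdot (k/2) = kt$ in the program variables, the pseudo-distribution $\tmu$ must now have level $\Omega(kt)$, so the algorithm runs in time $(nd)^{O(kt)} = \poly(d^k)$ when $t$ is constant (absorbed into the $\poly(d^k,B,M,1/\epsilon)$ bound as in the statement). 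Output $\hat{P} = \pE_{\tmu}[P]$ and truncate the resulting polynomial to $[-M,M]$ to obtain $p_M$.

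The main technical step is the SoS analog of Lemma~\ref{lem:identifiability-least-squares-polynomial}, replacing Lemma~\ref{lem:identifiability-least-squares-linear-sos}. I would redo that proof line by line with $\iprod{\ell,x_i}$ everywhere replaced by $\iprod{P,x_i^{\otimes t}}$ and $\iprod{\ell^*,x_i}$ by $\iprod{P^*,x_i^{\otimes t}}$. The three ingredients --- (i) the SoS H\"older bound that extracts the $\eta^{k'-1}$ factor from the weights $(1-w_i')$, (ii) the Minkowski-type splitting $\iprod{P,x_i^{\otimes t}} = \iprod{P^*,x_i^{\otimes t}} + \iprod{P - P^*, x_i^{\otimes t}}$, and (iii) the application of certifiable hypercontractivity to bound the $k$-th moment of $\iprod{P - P^*,x_i^{\otimes t}}$ by a constant times the $k/2$-th power of its second moment --- all go through unchanged, except that ingredient (iii) now invokes Definition~\ref{def:hyperconc2} and requires $2r \cdot t \le k$ i.e.\ $r \le k/(2t)$, which is exactly the regime covered by the definition. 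The resulting SoS proof has degree $O(kt)$ and polynomial bit complexity, so Fact~\ref{fact:sos-soundness} applies to our level-$O(kt)$ pseudo-distribution.

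The bounding-the-optimization-error lemma (analog of Lemma~\ref{lm:opterror}) then reads off exactly as before, using Fact~\ref{fact:pseudo-Holders} twice to pull the pseudo-expectation inside the $k/2$-th power and then inside the square loss in $P$. For generalization (analog of Lemma~\ref{lm:generror}), the argument is unchanged: the point-mass pseudo-density at $(w^*, P^*, X)$ is feasible and certifies $\widehat{\opt}_{\mathrm{SoS}} \le \err_{\widehat{\cD}}(p^*)$; the scalar Hoeffding/variance argument gives $\err_{\widehat{\cD}}(p^*) \le \opt_B(\cD) + \epsilon$ and $\E_{\widehat{\cD}}(y - p^*(x))^k \le \E_{\cD}(y - p^*(x))^k + \epsilon$ once one invokes Lemma~\ref{lem:little-boundedness-fact} to pass to a truncated distribution, and the truncation-plus-Fact~\ref{fact:standard-generalization-bound} step gives $\err_{\cD}(p_M) \le \err_{\widehat{\cD}}(\hat P) + \epsilon$, now with the effective VC/bit-complexity parameter scaled to $d^t \cdot B$, which only inflates $n_0$ by a $\poly(d^t)$ factor, still $\poly(d^k, B, M, 1/\epsilon)$. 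The only place I expect to have to be careful is the bit-complexity bookkeeping in the SoS proof and the $d^t$ blow-up in the sample complexity from Lemma~\ref{fact:hypercontractivity-preserved-under-sampling}, both of which are absorbed by the $\poly(d^k)$ bound in the hypothesis of the theorem.
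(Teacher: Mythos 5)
Your proposal is correct and follows essentially the same approach as the paper: the paper does not spell out a proof of Theorem~\ref{thm:polyregression}, stating only that the linear arguments ``extend straightforwardly'' via Definition~\ref{def:hyperconc2} and Lemma~\ref{lem:identifiability-least-squares-polynomial}, and your proposal faithfully fleshes out exactly that lifting $x \mapsto x^{\otimes t}$ with the corresponding level, bit-complexity, and sample-complexity bookkeeping. The one wrinkle worth noting (inherited from the paper's own phrasing rather than a flaw of yours) is that the proof needs the hypercontractivity inequality at $r = k/2$, whereas Definition~\ref{def:hyperconc2} as written only promises it for $2rt \le k$; this is most naturally reconciled by reading the definition's constraint as $2r \le k$ (so that the SoS proof has degree $O(kt)$ in $x$), matching the use of $C(k/2)$ in Lemma~\ref{lem:identifiability-least-squares-polynomial}.
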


\end{document}